\title{Implicit Optimization Bias of\\ Next-token Prediction in Linear Models}
\newcommand{\KL}{{\operatorname{KL}}}
\newcommand{\xd}{\bar{\x}}
\newcommand{\hd}{{\bar{{\hb}}}}
\newcommand{\Hd}{{\bar{{\Hb}}}}
\newcommand{\pih}{\hat{\pi}}
\newcommand{\ph}{\hat{p}}
\newcommand{\pbh}{\hat{\pb}}
\newcommand{\CE}{\operatorname{CE}}
\newcommand{\ent}{\Hc}
\newcommand{\Wstar}{\W^\star}
\newcommand{\Wf}{\W^{\rm{p}}}
\newcommand{\Ws}{\W^{\rm{d}}}
\newcommand{\Wmm}{\W^{\rm{mm}}}
\newcommand{\Wt}{\widetilde\W}
\newcommand{\NTP}{\text{NTP}\xspace}
\newcommand{\NTPH}{\text{NTP}$_{\ent}$\xspace}
\newcommand{\SVM}{\text{NTP-SVM}}
\newcommand{\Eb}{\mtx{E}}
\newcommand{\What}{\widehat{\W}}
\newcommand{\ellb}{\ell}
\newcommand{\ellstar}{\ellb^{\star}}
\newcommand{\ellmm}{\ellb^{\rm{mm}}}
\newcommand{\ellt}{\widetilde\ellb}
\newcommand{\Ts}{\mathscr{F}}
\newcommand{\Tc}{\mathcal{T}}
\newcommand{\pt}{\mathbf{\mathcal{P}}_\Ts}
\newcommand{\ptp}{\mathbf{\mathcal{P}}_\perp}
\setlist[itemize]{leftmargin=5mm}
\newcommand{\sft}[1]{\mathbb{S}(#1)}
\newcommand{\sfti}[2]{\mathbb{S}_{#1}(#2)}
\newcommand{\Pbb}{{\mtx{P}}}
\newcommand{\vct}[1]{\bm{#1}}
\newcommand{\mtx}[1]{\bm{#1}}
\newcommand{\setminusbig}{\mathbin{\big\backslash}}
\newcommand{\tsn}[1]{{\left\vert\kern-0.25ex\left\vert\kern-0.25ex\left\vert #1 
    \right\vert\kern-0.25ex\right\vert\kern-0.25ex\right\vert}}
\definecolor{darkred}{RGB}{150,0,0}
\definecolor{darkgreen}{RGB}{0,150,0}
\definecolor{darkblue}{RGB}{0,0,200}
\newtheorem{theorem}{Theorem}
\newtheorem{lemma}{Lemma}
\newtheorem{proposition}{Proposition}
\newtheorem{definition}{Definition}
\newtheorem{remark}{Remark}
\newcommand{\cut}[1]{\textcolor{red}{}}
\newcommand{\W}{{\vct{W}}}
\newcommand{\Ub}{\vct{U}}
\newcommand{\Hb}{{\vct{H}}}
\newcommand{\A}{\vct{A}}
\newcommand{\pb}{\vct{p}}
\newcommand{\thetab}{\boldsymbol{\theta}}
\newcommand{\x}{\vct{x}}
\newcommand{\ub}{\vct{u}}
\newcommand{\w}{{\vct{w}}}
\newcommand{\eb}{\vct{e}}
\newcommand{\z}{\vct{z}}
\newcommand{\ab}{\vct{a}}
\newcommand{\qb}{{\vct{q}}}
\newcommand{\hb}{\vct{h}}
\newcommand{\Vc}{\mathcal{V}}
\newcommand{\Fc}{\mathcal{F}}
\newcommand{\Sc}{{\mathcal{S}}}
\newcommand{\Xc}{\mathcal{X}}
\newcommand{\Pc}{\mathcal{P}}
\newcommand{\Hc}{\mathcal{H}}
\newcommand{\Ic}{\mathcal{I}}
\newcommand{\beq}{\begin{equation}}
\newcommand{\eeq}{\end{equation}}
\newcommand{\bea}{\begin{align}}
\newcommand{\eea}{\end{align}}
\newcommand{\vp}{\vspace{5pt}}
\newcommand{\R}{\mathbb{R}}
\newcommand{\E}{\mathbb{E}}
\newcommand{\nn}{\notag}
  \newcommand{\la}{{\lambda}}                     
  \newcommand{\eps}{\epsilon}
\DeclarePairedDelimiterX{\inp}[2]{\langle}{\rangle}{#1, #2}
\providecommand{\norm}[1]{\lVert#1\rVert}
\author{
  Christos Thrampoulidis \\
  Department of Electrical and Computer Engineering\\
  University of British Columbia\\
  Vancouver, Canada \\
  \texttt{cthrampo@ece.ubc.ca} \\
}
\begin{document}

\maketitle

\begin{abstract}
We initiate an investigation into the optimization properties of next-token prediction (NTP), the dominant training paradigm for modern language models. Specifically, we study the structural properties of the solutions selected by gradient-based optimizers among the many possible minimizers of the NTP objective. By framing NTP as cross-entropy minimization  across \emph{distinct} contexts, each tied with a \emph{sparse}  conditional probability distribution across a finite vocabulary of tokens, we introduce ``NTP-separability conditions'' that enable reaching the data-entropy lower bound. With this setup, and focusing on linear models with fixed context embeddings, we characterize the optimization bias of gradient descent (GD): Within the data subspace defined by the sparsity patterns of distinct contexts, GD selects parameters that equate the logits' differences of in-support tokens to their log-odds. In the orthogonal subspace, the GD parameters diverge in norm and select the direction that maximizes a margin specific to NTP. These findings extend previous research on implicit bias in one-hot classification to the NTP setting, highlighting key differences and prompting further research into the optimization and generalization properties of NTP, irrespective of the specific architecture used to generate the context embeddings.
\end{abstract}

\section{Introduction}

Next-token prediction (NTP) has emerged as the go-to paradigm in training modern language models, revolutionizing various applications such as machine translation, text-summarization, and language generation \cite{radford2018improving}. In NTP,  models are trained to predict the most probable token given a sequence of preceding tokens, commonly referred to as the \emph{context}. Concretely, the objective is to learn a mapping from the input context to the probability distribution over the (finite) vocabulary of possible tokens, enabling the model to generate a token that is contextually appropriate \cite{bengio2000neural,bengio2000taking}. 
%
Recently, the NTP paradigm has witnessed remarkable empirical success through its utilization on large-scale deep-learning architectures trained on vast corpora of data \cite{radford2018improving,radford2019language,touvron2023llama}, leading to unprecedented advances in the field, and the swift integration of these advanced language models into society  \cite{chatgpt}.
Concurrently, researchers have raised critical concerns about robustness, interpretability, and fairness-bias issues arising from our limited understanding of the fundamental operational principles of these models  \cite{bommasani2021opportunities, belkin2024necessity}. Despite progress, a comprehensive theory that elucidates the fundamentals of modern language models—including key components like the \NTP paradigm and transformer architecture, particularly in terms of optimization and generalization principles—is still lacking.


We initiate an investigation when implicit optimization biases in training language models under the \NTP paradigm, particularly in overparameterized regimes where the empirical-loss reaches its lower bound and there is many possible minimizers.
 %
 To formalize the \NTP paradigm, consider autoregressive model $q_{\thetab}$ parameterized by  $\thetab$  trained  to predict the next-token on sequences of length $T$ using the cross-entropy (CE) loss:
\begin{align}\label{eq:intro CE}
\min_{\thetab} ~\hat\E_{\z\sim_{\Tc_n}}\Big[ \sum\nolimits_{t\in[T]}-\log\left( q_{\thetab}(z_t\,|\,z_1,\ldots, z_{t-1}) \right) \Big].
\end{align}
Here, sequences $\z=(z_1,\ldots,z_T)$ consist of tokens $z_t$ from a finite vocabulary $\Vc=\{1,\ldots,V\}$ and $\hat\E$ is expectation over training set $\Tc_n$ of $n$ such sequences sampled  from some underlying true distribution over sequences. Typically, the model $q_{\thetab}$ outputs probability of the next token computed via softmax applied on  output logits, which are  computed by projecting $d$-dimensional embeddings $h_{\thetab'}$ to the $V$-dimensional space with a trainable linear decoder $\W\in\R^{V\times d}$. Formally, \footnote{Throughout, $\eb_v\in\R^V$ is the $v$-th standard basis vector, and $\sfti{z}{\ub}=\eb_z^\top\sft{\ub}$ the $z$-th entry of softmax output.}
\begin{align*}
&q_{\thetab}(z_t\,|\,z_1,\ldots, z_{t-1}) = \sfti{z_t}{\W h_{\thetab'}(z_1,\ldots,z_{t-1})}\,
= 
\frac{1}{1+\sum_{\stackrel{z'\in\Vc}{z'\neq z_t}} \exp\left((\eb_{z'}-\eb_{z_t})^\top\W h_{\thetab'}(z_1,\ldots,z_{t-1})\right)}\,.
\end{align*}
The CE loss is then minimized over $\thetab=(\W,\thetab')$ using gradient-based methods, e.g. (S)GD, Adam.

We pose the question: 
\emph{Given training set $\Tc_n$, what are the structural properties of the weights $\thetab$ found by minimizing the \NTP objective with gradient-based optimizers?}
As in prior research in one-hot supervised classification \footnote{
In \NTP, the ground-truth next token is inherently embedded within the underlying text, thus strictly speaking, it falls under the self-supervised learning paradigm \cite{radford2018improving}. Yet, the utilization of the CE training objective resembles to supervised training. We leverage this resemblance and regard \NTP training  as an instance of supervised learning, while also emphasizing how it differs from one-hot encoding supervision.} (e.g. \cite{zhang2017understanding,belkin2018does,soudry2018implicit,ji2018risk}), we specifically target this question in an \emph{overparameterized} setting, where the \NTP objective \eqref{eq:intro CE} may have an infinite number of solutions, representing an infinite number of models $\thetab$ that minimize the training loss. The central challenge is to discern the particular solution the optimizer is inherently biased towards. Since this `bias' is not explicitly introduced through regularization but is instead ingrained in the training objective and algorithmic structure, it is termed `implicit bias' \cite{neyshabur2014search}. 
The exploration of implicit bias has a long history in the traditional supervised one-hot classification  (see \emph{Related Work in Sec. \ref{sec:rel work}}). In this traditional scenario, the training set comprises feature-label pairs $(\x, y)$, where $\x\in\R^p$ is a continuous feature, and $y$ represents its unique label. The optimization process minimizes the following training objective (over $\W,\thetab'$):
$
\hat\E_{(\x,y)}\left[-\log\left(\sfti{y}{\W h_{\thetab'}(\x)}\right)\right].
$

At first glance, excluding the sequential format of Eq. \eqref{eq:intro CE}, the NTP training scenario might seem identical to traditional one-hot prediction: both aim to minimize the same CE loss across models that parameterize probabilities using the softmax of logits. Consider predicting the next token over fixed-length sequences, say  sequences of length $t-1$, via optimizing:
$
\hat\E_{\z}\left[-\log\left(\sfti{z_t}{\W h_{\thetab}(z_1,\ldots,z_{t-1})}\right)\right].$
The context here acts as the feature, and the next token as the label. Recent works \cite{liu2023same,malach2023auto} draw on such  apparent similarities to the traditional one-hot classification paradigm to extrapolate known results from the latter to the NTP setting. However, this comparison overlooks a fundamental, yet critical difference in the nature of the training data that distinguishes these two paradigms (even when the sequential format of Eq. \eqref{eq:intro CE} is disregarded): 
In the traditional setting, each feature (e.g., image) is assigned a single  label (e.g., image category). In contrast, in the \NTP setting,  contexts $z_1, \ldots, z_{t-1}$ of finite length sampled from finite vocabularies are naturally repeated in a (vast) training set, potentially multiple times, each time followed by \emph{different} tokens $z_t$ \cite{shannon1951prediction}. Consequently, the \NTP paradigm involves training over $m \leq n$ \emph{distinct} (non-repetitive) contexts, each followed by a multitude of possible next tokens, appearing at varying frequencies. For instance,  the context "\texttt{She is excellent at her role as a}" may be followed by next tokens such as "\texttt{doctor}," "\texttt{lawyer}," "\texttt{reviewer}," or "\texttt{mother}," each with different frequencies. Importantly, certain  vocabulary tokens may \emph{not} appear after a given context; e.g., in the above example, tokens like "\texttt{run}," "\texttt{and}," etc., will not follow.


\vspace{0pt}\noindent\textbf{Model.} We study \NTP training over a finite vocabulary employing the following model. Given a large training set of $n$ total sequences, we identify $m\leq n$ \emph{distinct} contexts. Each distinct context $j\in[m]$ is linked to a $V$-dimensional empirical probability vector $\pbh_j$, which encodes the frequency with which each vocabulary token follows the context throughout its occurrences in the training set. Crucially, the probability vectors $\pbh_j$ are \emph{sparse}, i.e., the support set $\Sc_j$ of $\pbh_j$ satisfies $|\Sc_j|\ll |\Vc|=V$. In an extreme where $|\Sc_j|=1, \forall j\in[m]$, the probability vector $\pbh_j$ becomes one-hot, leading to a scenario reminiscent of the traditional classification setting described earlier. However, such an extreme is essentially improbable in practical language modeling \cite{shannon1951prediction}. 
With this framing, the \NTP paradigm is also related to supervised vision classification with \emph{soft labels}, which advocates for training models on datasets where each example is associated with a vector of soft labels (rather than a one-hot vector), such as by averaging multiple annotators' hard labels \cite{peterson2019human}, knowledge distillation \cite{knowledge-distilation} or label smoothing \cite{label_smoothing}. 
With this connection, our analysis can also be interpreted (more broadly) as investigating the implicit bias of \emph{sparse} soft-label classification.


\subsection{Contributions and Organization}
\noindent\textbf{Formulation.} Recognizing the differences between \NTP and one-hot classification, we study the question of implicit optimization bias within the \NTP setting.
To facilitate this, we utilize  the model outlined in the previous paragraph and detailed in Sec. \ref{sec:setup}.
 For concreteness, our analysis adopts a 'top-down' approach, training only the decoding (also referred to as word-embedding) matrix $\W \in \R^{V \times d}$ while keeping context-embeddings fixed. This approach mirrors foundational studies on implicit optimization bias in one-hot classification \cite{soudry2018implicit,ji2018risk}, which first focused on linear models. It allows exploring the complexities of the NTP training objective, distinct from the embedding architecture\footnote{NTP is widely used across various modern language modeling architectures, including transformers \cite{radford2018improving,radford2019language}, state-space models \cite{hippo,mamba}, and LSTMs \cite{beck2024xlstm}.}, and  
 while it renders the logits linear and the  objective convex, it still poses a technical challenge in terms of determining parameter convergence
\cite{soudry2018implicit,ji2018risk,ji2020gradient,nacson2019convergence,ji2021fast}.

\noindent\textbf{Conditions for reaching entropy.} In Sec. \ref{sec:lower bound}, we identify the necessary and sufficient conditions for the logits of the trained model to enable the CE loss to approach its lower bound, the empirical conditional entropy. We introduce two conditions: \NTPH-compatibility and \NTP-separability, which  impose constraints on mutually orthogonal subspaces that are determined by the \emph{sparsity patterns} of \emph{distinct} contexts within the dataset. These conditions determine the  necessary and sufficient overparameterization a model needs to achieve the empirical entropy lower bound during training. 

\noindent\textbf{Margin in \NTP setting.} Motivated by the \NTP-separability condition, we introduce a margin concept for \NTP in Sec.~\ref{sec:reg path}, which extends the classical definition of margin used in one-hot supervised classification \cite{vapnik1964note}. We further establish the relevance of this new margin notion for optimization by demonstrating that a decoder maximizing the \NTP-margin, denoted as $\Wmm$, guides the directional convergence of the ridge-regularized CE minimizer, $\What_\la$, as the regularization parameter $\la\rightarrow0$.

\noindent\textbf{Implicit bias of GD.} We  establish that $\Wmm$ also determines the implicit bias of gradient descent (GD) iterates in Sec. \ref{sec:gd}. Specifically, in the limit of iterations $k\rightarrow\infty$, the GD iterates grow undoubtedly in norm and  converge to a finite $\Wstar$ within a data subspace $\Ts$, while simultaneously aligning with $\Wmm$ in the complementary subspace $\Ts^\perp$. The finite component $\Wstar\in\Ts$  solves a system of linear equations associated with the $\NTP_\ent$-compatibility condition.
%
\\
Finally, we numerically verify these findings and discuss related and future work in Secs. \ref{sec:rel work} and \ref{sec:conc}. Additional experiments, further related work and detailed proofs are in the appendix.


\section{Setup}\label{sec:setup}

Let vocabulary $\Vc=[V]:=\{1,\ldots,V\}$ represent a set of $V$ tokens (e.g. words) and $\z_{1:t}=(z_1,\ldots,z_t)$ denote  sequence of $t$ tokens $z_t\in\Vc$. To simplify presentation, we focus on predicting the $T$-th token $z_T$ given contexts $\z_{<T}:=\z_{1:T-1}$ of fixed length, and we  further let $\x=\z_{<t}$ denote the context and  $z$ denote the last token. {See App. \ref{sec:AR} for straightforward extension  to the sequential format of Eq. \eqref{eq:intro CE}.}

We assume access to a training set consisting of $n$ sequences $\Tc_n:=\{(\x_i,z_i)\}_{i\in[n]}$, with $\x_i\in\Xc:=\Vc^{T-1}$ and $z_i\in\Vc$. Let $h:\Xc\rightarrow\R^d$ an embedding map that maps contexts (i.e., sequences of $T-1$ tokens) to $d$-dimensional embeddings. The  map $h$ can be parameterized (e.g. by a transformer \cite{vaswani2017attention} or an LSTM \cite{beck2024xlstm}), but this paper assumes that it is fixed. The next-token is predicted via a linear model $f_\W:\Xc\rightarrow\R^{V}$ parameterized by decoding matrix $\W\in\R^{V\times d}$, such that $f_\W(\x)=\W h(\x)$. When the model output passes through a softmax, it defines the model's probability mass function for the next-token prediction, given as $\hat{q}_\W(\cdot|\x)=\sft{f_\W(\x)}$, where $\sft{\cdot}:\R^V\rightarrow\Delta^{V-1}$ is the softmax and $\Delta^{V-1}$ is the $V$-dimensional simplex. 
The decoder is trained by
minimizing the empirical CE loss
$
\CE(\W):=\frac{1}{n}\sum_{i\in[n]}-\log\left(\hat{q}_\W(z_i|\x_i)\right).     
$

\noindent\textbf{Distinct sequences and next-token distributions.} Given dataset $\Tc_n$ we denote $\xd_1,\ldots,\xd_m$ the $m\leq n$ \emph{distinct} contexts among the (large number of) total $n$ contexts $\x_1,\ldots,\x_n$ within $\Tc_n$. Let $\pih_j$ be the empirical probability of distinct context $\xd_j$. That is, $1\leq n \cdot\pih_j \leq n$ is the number of contexts $\x_i$ that equal $\xd_j$. Furthermore, for each distinct context $\xd_j, j\in[m]$   let $\pbh_j\in\Delta^{V-1}$ denote the probability vector of conditional next-token distribution, i.e.,
$
\ph_{j,z} := \hat{p}\left(z|\xd_j\right), z\in\Vc, j\in[m].
$
In other words, $n\cdot\pih_j\cdot\ph_{j,z}$ is the number of occurences of token $z$ as a follow-up to context $\xd_j$. 
Finally, we denote the support set and size of the \emph{support set} of these conditional distributions as 
$\Sc_{j}:=\{ z\in\Vc \,|\, \ph_{j,z}>0 \}$  and $S_j:=|\Sc_{j}|.$
Tokens $z\in\Sc_j$ and $v\notin\Sc_j$ are referred to as 'in-support' and 'out-of-support' respectively. 
Onwards, we implicitly assume that 
``\emph{not all tokens are likely after every context},'' i.e. 
    $\exists j\in[m]$ such that $S_j<V$. 
This mild assumption is naturally satisfied in language modeling under rich enough vocabulary.
%
With this notation, \footnote{\noindent A  complete list of notations is also given in Appendix \ref{sec:notation}.} we can express the \NTP training loss as
\begin{align}
    \CE(\W)=-\sum_{j\in[m]}\pih_j\sum_{z\in\Vc}\ph_{j,z}\log\left(\sfti{z}{\W h(\xd_j)}\right)
    =-\sum_{j\in[m]}\pih_j\sum_{z\in\Sc_j}\ph_{j,z}\log\left(\sfti{z}{\W \hd_j}\right),\label{eq:CE2}
\end{align}
where, in the last line we defined the shorthand $\hd_j=h(\xd_j)$.
Similarly, we let $\hb_i=h(\x_i), i\in[n]$. With some abuse of notation, we then obtain the following equivalent descriptions of the training set  \[\{(\x_i,z_i)\}_{i\in[n]}=:\Tc_n \equiv \Tc_m:=\{(\hd_j,\pih_j,\ph_{j,z\in\Vc})\}_{j\in[m]}
\]
that emphasizes \emph{distinct} contexts and their respsective sparse next-token probability distributions.

\noindent\textbf{Entropy.} 
The \emph{empirical $T$-gram entropy} (referred to hereafter as entropy for simplicity) of the dataset is \cite{shannon1948mathematical,shannon1951prediction}: $\ent_T:=\ent:=\hat{\E}_{(\x,z)\sim\Tc_n}\left[-\log\left(\hat{p}(z|\x)\right)\right]
    =-\sum_{j\in[m]}\sum_{z\in\Sc_j}\pih_j\ph_{j,z}\log\left(\ph_{j,z}\right)\,.$
It lower bounds the CE loss since $\CE(\W)=\ent+\KL\left(\pbh\,||\,\hat{\qb}_\W\right)$ and the KL divergence is nonnegative. 



\section{When can the NTP-loss reach the entropy lower-bound?}\label{sec:lower bound}
The first question we ask is: Under what conditions on the training data can the CE loss reach its entropy lower-bound?
By the entropy lower-bound, $\CE(\W)=\ent\Leftrightarrow \KL\left(\pbh\,||\,\hat{\qb}_\W\right)=0$ iff  for all $j\in[m]$ and all $z\in\Vc$: $\hat{q}_\W(z|\xd_j)=\ph_{j,z}$. Equivalently, for all $j\in[m]$:
\begin{subequations}
\begin{align}
    \sfti{z}{\W\hd_j}&=\ph_{j,z},\quad \forall z\in\Sc_j\,,\label{eq:for equality 1}
    \\
    \sfti{v}{\W\hd_j}&=0,\quad \forall v\notin\Sc_j\,.\label{eq:for equality 2}
\end{align}    
\end{subequations}

Beginning with \eqref{eq:for equality 1}, this requires\footnote{It will be see below, and can be easily checked by the reader,  this condition alone is insufficient; the \NTP-separability condition in Defn. \ref{def:NTP-sep} is also needed.} the training data to satisfy the \NTPH-compatibility condition defined below.

\begin{definition}[\NTPH-compatible]\label{def:NTP-comp} Let $\eb_v$ denote the $v$-th standard basis vector in $\R^V.$
    We say that training data $\Tc_m$ are \NTP-entropy-compatible   if there exists  ${V\times d}$ matrix $\Wf$ satisfying:
        \begin{align}\label{eq:Wf condition}
\forall j\in[m], z\neq z'\in\Sc_j~:~~        (\eb_z-\eb_{z'})^\top\Wf \hd_j=\log\left({\ph_{j,z}}\big/{\ph_{j,z'}}\right)\,.
        \end{align}
\end{definition}

We comment on the independence of the constraints: Fix any $j\in[m]$. Then, the set of constraints (as expressed in Eq. \eqref{eq:Wf condition}) for all $z\neq z'\in\Sc_j$ (yielding ${S_j \choose 2}$ constraints in total) is equivalent to the set of the same constraints for any anchor $z_j\in\Sc_j$ and $z'\neq z_j\in\Sc_j$, i.e., an effective total of $S_j-1$ linearly independent constraints for each $j\in[m]$. Additionally, note that the system of equations in  Eq. \eqref{eq:Wf condition} constrains $\Wf$ with respect to a specific subspace of $V\times d$ matrices: 
\begin{align}\label{eq:subspace}
\Ts=\operatorname{span}\big(\big\{(\eb_z-\eb_{z'})\hd_j^\top\,:\,z\neq z'\in\Sc_j, j\in[m]\big\}\big)\,,
\end{align}
that is defined in terms of context embeddings and their respective support sets. Assuming  Eqs. \eqref{eq:Wf condition} have a solution, we denote the \emph{unique} solution \emph{within the subspace} $\Ts$ as $\Wstar \in\Ts$ for later reference \footnote{If Eqs. \eqref{eq:Wf condition} have a solution, say $\W_1$, every other solution takes the form
$\W^p = \W_1 + \W_{\text{null}}$, where
$\W_{\text{null}}$ is orthogonal to
$(\eb_z - \eb_{z'})\hd_j^T : z \neq z' \in \Sc_j, j \in [m].$
Thus, $\W_{\text{null}} \in \Ts^\perp$ is in the orthogonal complement of $\Ts$.}. 

Next, we examine Eq. \eqref{eq:for equality 2}, which requires softmax outputs be zero for tokens that never occur following a fixed context throughout the dataset. Due to the strict positivity of softmax, the constraint is never satisfied for \emph{finite} $\W$. Thus, for all finite $\W$, there exists a gap between the cross-entropy loss and its lower bound, i.e., $\CE(\W) > \Hc$. Yet, it is possible to approach entropy as the norm of the weights $\W$ grows, provided that weights move in the appropriate direction  formalized below.
\begin{definition}[\NTP-separable]\label{def:NTP-sep}
    We say that training data $\Tc_m$ are \NTP-separable if there exists $V\times d$ matrix $\Ws$ satisfying the following:
        \begin{subequations}\label{eq:Wmm condition}
        \begin{align}
        \forall j\in[m], z\neq z'\in\Sc_j~&:~~(\eb_z-\eb_{z'})^\top\Ws \hd_j =0\,\label{eq:Wmm condition equal}
        \\
        \forall j\in[m], z\in\Sc_j,  v\notin\Sc_j~&:~~(\eb_z-\eb_{v})^\top\Ws \hd_j \geq 1\,.\label{eq:Wmm condition larger}
        \end{align}
        \end{subequations}
\end{definition}

As before, it is easy to see that the constraints in \eqref{eq:Wmm condition} can be equivalently expressed by enforcing \eqref{eq:Wmm condition equal} and \eqref{eq:Wmm condition larger} for an anchor $z_j \in \Sc_j$ and all $z'\in \Sc_j\setminusbig\{z_j\}$ and $v \notin \Sc_j$, respectively. Consequently, there exist effectively $V-1$ linearly independent constraints per context $j \in [m]$. 

We now discuss the interpretation of these constraints. 
The subspace constraints in Eq. \eqref{eq:Wmm condition equal} project $\Ws$ onto the subspace $\Ts^\perp$, which is the orthogonal complement of the subspace $\Ts$ defined in \eqref{eq:subspace}. This leaves the softmax probabilities of possible next tokens (in set $\Sc_j$) intact, and fully determined by $\Wf$ as per the \NTPH-compatibility condition. Formally, $\Wf+\Ws$ continues satisfying \eqref{eq:Wf condition}. Moving on the halfspace constraints in \eqref{eq:Wmm condition larger}, we can interpret these using Kesler's construction  as enforcing linear separability in the space $\mathbb{R}^{V\times d}$ \cite{hart2000pattern}: Each $d$-dimensional context embedding $\hd_j$ is mapped to $S_j(V-S_j)$ higher-dimensional points $(\eb_z-\eb_v)\hd_j^\top, z\in\Sc_j,v\notin\Sc_j$. These points collectively for all $j\in[m]$ must lie within the interior of the same halfspace induced by the hyperplane $\inp{\Ws}{\cdot}=0$. Refer to Fig. \ref{fig:2d}(Left) and its caption for an alternative interpretation of the rows of $\Wmm$ as  word-embeddings in $\R^d$ (illustration in $d=2$). 

The impact of \NTP-separability on the softmax probabilities can be understood algebraically by considering
$\W_\gamma := \gamma\Ws$ and $v \notin \Sc_j$. We have:
\begin{align}
\sfti{v}{\W^\gamma\hd_j} &= \big({\sum_{z\in\Sc_j}e^{\gamma (\eb_z-\eb_v)^\top\Ws\hd_j}+\sum_{v'\notin\Sc_j}e^{\gamma(\eb_{v'}-\eb_{v})^\top\Ws\hd_j}}\big)^{-1}\nn
\\
&\leq\big({\sum_{z\in\Sc_j}e^{\gamma (\eb_z-\eb_v)^\top\Ws\hd_j}}\big)^{-1}\nn
\\
&\leq e^{-\gamma},
\end{align}
where the first inequality removes non-negative exponential terms and the second one follows from \eqref{eq:Wmm condition larger}. The upper bound above approaches $0$ as $\gamma\rightarrow\infty$, thus \eqref{eq:for equality 2} holds asymptotically in $\gamma$. 

Taking into account the observations made above,  the satisfaction of both conditions guarantees convergence of the cross-entropy loss $\CE$ to $\Hc$. This is formalized in the proposition below.

\begin{proposition}\label{propo:sep}
    Assume  training data $\Tc_m$ is \NTPH-compatible and \NTP-separable, with the respective matrices $\Wf$ and $\Ws$ satisfying conditions \eqref{eq:Wf condition} and \eqref{eq:Wmm condition}. While all finite $\W$ satisfy $\CE(\W)>\ent$, 
    it holds for $\W^\gamma=\Wf+\gamma\cdot\Ws$ that
    $\CE(\W^\gamma)\xrightarrow{\gamma\rightarrow+\infty}\ent$.
\end{proposition}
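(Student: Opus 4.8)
The starting point is the identity $\CE(\W)=\ent+\KL\big(\pbh\,\|\,\hat{\qb}_\W\big)$ recorded in Section~\ref{sec:setup}, which reduces everything to controlling the KL term. Expanding it as $\KL\big(\pbh\,\|\,\hat{\qb}_\W\big)=\sum_{j\in[m]}\pih_j\,\KL\big(\pbh_j\,\|\,\hat{\qb}_\W(\cdot|\xd_j)\big)$ with each $\pih_j>0$, the strict inequality $\CE(\W)>\ent$ for finite $\W$ follows by picking — via the standing assumption $\exists j: S_j<V$ — such a $j$ and some $v\notin\Sc_j$: strict positivity of softmax gives $\sfti{v}{\W\hd_j}>0=\ph_{j,v}$, so $\hat{\qb}_\W(\cdot|\xd_j)\neq\pbh_j$ and the $j$-th KL term is strictly positive. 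For the convergence statement, it suffices to show that $\sfti{z}{\W^\gamma\hd_j}\to\ph_{j,z}$ as $\gamma\to\infty$ for every $j\in[m]$ and every $z\in\Sc_j$; since all index sets are finite and $t\mapsto\ph_{j,z}\log(\ph_{j,z}/t)$ is continuous on $(0,1]$, this forces $\KL\to 0$ and hence $\CE(\W^\gamma)\to\ent$.

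The heart of the argument is a direct estimate of the softmax denominator along the path. Fixing $j$ and $z\in\Sc_j$, I would split
\[
\big(\sfti{z}{\W^\gamma\hd_j}\big)^{-1}=\sum_{z'\in\Sc_j}e^{(\eb_{z'}-\eb_z)^\top\W^\gamma\hd_j}+\sum_{v\notin\Sc_j}e^{(\eb_{v}-\eb_z)^\top\W^\gamma\hd_j}\,.
\]
In the in-support sum, condition~\eqref{eq:Wmm condition equal} annihilates the $\gamma\Ws$ contribution while condition~\eqref{eq:Wf condition} yields $(\eb_{z'}-\eb_z)^\top\W^\gamma\hd_j=\log(\ph_{j,z'}/\ph_{j,z})$, so the sum collapses to $\tfrac{1}{\ph_{j,z}}\sum_{z'\in\Sc_j}\ph_{j,z'}=\tfrac{1}{\ph_{j,z}}$, using that $\pbh_j$ is supported on $\Sc_j$ and sums to one. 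In the out-of-support sum, condition~\eqref{eq:Wmm condition larger} gives $(\eb_v-\eb_z)^\top\Ws\hd_j\le-1$, so each exponent is at most $c_{j,z,v}-\gamma$ where $c_{j,z,v}:=(\eb_v-\eb_z)^\top\Wf\hd_j$ is a finite $\gamma$-independent constant; summing over the at most $V$ such $v$ bounds this sum by $C_{j,z}\,e^{-\gamma}$ for a finite $C_{j,z}$. Hence $\big(\sfti{z}{\W^\gamma\hd_j}\big)^{-1}=\tfrac{1}{\ph_{j,z}}+\epsilon_{j,z}(\gamma)$ with $0\le\epsilon_{j,z}(\gamma)\le C_{j,z}e^{-\gamma}$, which drives $\sfti{z}{\W^\gamma\hd_j}\to\ph_{j,z}$, completing the plan.

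I do not expect a genuine obstacle — the statement is essentially bookkeeping — but the one place to be careful is keeping the two blocks cleanly separated: the in-support tokens, where the $\Ws$-part cancels exactly and $\Wf$ reproduces the log-odds, versus the out-of-support tokens, where $\Ws$ supplies the decaying $e^{-\gamma}$ factor and $\Wf$ only an $O(1)$ multiplicative distortion. It is also worth recording from the same computation that $\CE(\W^\gamma)-\ent=O(e^{-\gamma})$, i.e.\ the convergence is exponential in $\gamma$, and that only the uniform lower bound $(\eb_z-\eb_v)^\top\Ws\hd_j\ge 1$ (not the precise value) is used; this quantitative form will be the version needed later when analyzing the regularization path and the GD dynamics.
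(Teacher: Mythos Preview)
Your proposal is correct and follows essentially the same approach as the paper: the paper's argument (given in the discussion preceding the proposition and reused quantitatively in the proof of Theorem~\ref{thm:reg-path}) likewise splits the softmax denominator into the in-support block, which collapses to $1/\ph_{j,z}$ via \eqref{eq:Wf condition} and \eqref{eq:Wmm condition equal}, and the out-of-support block, which is bounded by $Ce^{-\gamma}$ via \eqref{eq:Wmm condition larger}, yielding $\CE(\W^\gamma)\le\ent+Ce^{-\gamma}$. Your observation that the resulting $O(e^{-\gamma})$ rate is exactly what feeds into the later regularization-path and GD arguments is spot on.
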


Hence,  CE approaches its lower-bound in the limit of a \emph{direction} $\overline\Ws:=\Ws/\|\Ws\|$ and \emph{offset} $\Wf$ satisfying the constraints of \NTP-separability and \NTP-compatibility, respectively. In other words, parameter weights $\W$ that minimize the CE loss consist of two components: a finite projection $\W_\Ts:=\Pc_\Ts(\W)=\Wstar$ onto the data subspace $\Ts$ and an infinite-norm component onto the orthogonal complement $\Ts^\perp$ in the direction of $\Ws$.

Finally, we note that while Defns. \ref{def:NTP-comp} and \ref{def:NTP-sep} are stated for linear models, they naturally extend to a more general formulation for \emph{nonlinear} models. Specifically, consider \NTP-separability (similar for \NTP-compatibility): the general conditions require that both the decoder weights $\W$ and model weights $\thetab$, which parameterize the embeddings $\hd_j = h_{\thetab}(\xd_j)$, must satisfy Eq. \eqref{eq:Wmm condition} simultaneously. 


\subsection{The role of overparameterization}\label{sec:overparam}
We show that overparameterization provides a sufficient condition for the solvability of Eqs. \eqref{eq:Wf condition} and \eqref{eq:Wmm condition}.
Start with the halfspace constraints in Eq. \eqref{eq:Wf condition} for \NTPH-compatibility. These can be compactly expressed as $\Eb_{j,z_j}\Wf\hd_j=\ab_{j,z}$, where  $\Eb_{j,z_j}\in\R^{(S_j-1)\times V}$ has rows $\eb_{z_j}-\eb_{z'}$ and $\ab_{j,{z_j}}\in\R^{(S_j-1)}$ has entries $\log\big(\ph_{j,{z_j}}/\ph_{j,z'}\big)$ for some anchor $z_j\in\Sc_j$. Now, since the rows of $\Eb_{j,{z_j}}$ are linearly independent, the question becomes equivalently that of determining when 
$
\Wf\big[\hd_1,\ldots,\hd_m\big]=\big[\Eb_{1,z_1}^\dagger\ab_{1,z_1},\ldots,\Eb_{m,z_m}^\dagger\ab_{m,z_m}\big]\,
$
has a solution. This is always the case 
when $d>m$ and the $d\times m$ embedding matrix $\Hd=\big[\hd_1,\ldots,\hd_m\big]$ is full rank ($m$). Then, there exists $\Wf$ such that condition \eqref{eq:Wf condition} holds. In fact, $\Hd^\top$ has a nullspace, implying the existence of an infinite number of solutions to \eqref{eq:Wf condition}. These solutions take the form $\Wf=\Wstar+\Wf_\perp$, where $\Wstar\in\Ts$ is the unique solution onto the subspace, and $\Wf_\perp\in\Ts^\perp$.

In contrast to \eqref{eq:Wf condition}, the constraints in \eqref{eq:Wmm condition} involve linear inequalities. However, a sufficient proxy for feasibility in this case is that the corresponding system of equations (instead of inequalities) has a solution.  By following the exact same argument as before, we arrive at the same sufficient conditions for the existence of a solution $\Ws$. We summarize these findings.
\begin{lemma}[Overparameterization implies \NTP-separability]\label{lem:op}
Assume overparameterization $d>m$ and full-rank embedding matrix $\Hd\in\R^{d\times m}$. Then, there exists an infinite number of solutions $\Wf$ and $\Ws$ that satisfy conditions \eqref{eq:Wf condition} and   \eqref{eq:Wmm condition}, respectively. 
\end{lemma}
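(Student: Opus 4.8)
The plan is to prove the lemma by reducing the existence of both $\Wf$ and $\Ws$ to a single statement about the solvability of a linear matrix equation of the form $\W\Hd = \Mb$ for an arbitrary target matrix $\Mb\in\R^{V\times m}$, and then invoking the full-rank hypothesis. First I would observe that, by the discussion immediately preceding the lemma (the ``independence of the constraints'' paragraph and the Kesler-type reformulation), the system \eqref{eq:Wf condition} is equivalent to requiring, for each $j\in[m]$, that $\Eb_{j,z_j}\W\hd_j=\ab_{j,z_j}$ for a fixed anchor $z_j\in\Sc_j$, where $\Eb_{j,z_j}\in\R^{(S_j-1)\times V}$ has linearly independent rows $\eb_{z_j}-\eb_{z'}$. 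Since $\Eb_{j,z_j}$ has full row rank, $\Eb_{j,z_j}\Eb_{j,z_j}^\dagger = \Iden_{S_j-1}$, so the per-context constraint $\Eb_{j,z_j}(\W\hd_j)=\ab_{j,z_j}$ is solvable in the unknown vector $\W\hd_j\in\R^V$, with one particular choice being $\W\hd_j = \Eb_{j,z_j}^\dagger\ab_{j,z_j}=:\mb_j$. Stacking these columns, \eqref{eq:Wf condition} is implied by the single matrix equation $\W\Hd = \Mb$ with $\Mb=[\mb_1,\ldots,\mb_m]\in\R^{V\times m}$.

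Next I would handle the solvability of $\W\Hd=\Mb$. Taking transposes, this is $\Hd^\top\W^\top=\Mb^\top$, which is a stack of $V$ independent linear systems $\Hd^\top(\text{row of }\W)=(\text{row of }\Mb)$, each in $d$ unknowns with $m$ equations. Under $d>m$ and $\mathrm{rank}(\Hd)=m$, the matrix $\Hd^\top\in\R^{m\times d}$ has full row rank, hence is surjective, so each system is solvable; moreover its kernel is the nullspace of $\Hd^\top$, which has dimension $d-m\geq 1$, giving an infinite family of solutions for each row, hence an infinite family of $\Wf$. This establishes the claim for \NTPH-compatibility: setting $\W=\Wf$ a solution of $\W\Hd=\Mb$ satisfies \eqref{eq:Wf condition}.

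For \NTP-separability, the point is that the inequality system \eqref{eq:Wmm condition} is implied by the corresponding \emph{equality} system: \eqref{eq:Wmm condition equal} as stated, and \eqref{eq:Wmm condition larger} with ``$\geq 1$'' replaced by ``$=1$''. As the text notes, ``a sufficient proxy for feasibility is that the corresponding system of equations has a solution.'' This equality system has, per context $j$, exactly $V-1$ linearly independent constraints (choose anchor $z_j$, range over $z'\in\Sc_j\setminusbig\{z_j\}$ and $v\notin\Sc_j$), which can again be packaged as $\Fb_{j}\W\hd_j = \cb_j$ for a full-row-rank $\Fb_j\in\R^{(V-1)\times V}$ and an explicit right-hand side $\cb_j$ ($0$ on the in-support block, $1$ on the out-of-support block). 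By the same $\Eb^\dagger$ argument, each per-context system is solvable for the vector $\W\hd_j$, so the stacked equation $\W\Hd=\Mb'$ with $\Mb'=[\Fb_1^\dagger\cb_1,\ldots,\Fb_m^\dagger\cb_m]$ has a solution — again an infinite family — exactly as before. Any such solution $\Ws$ satisfies the equalities and hence a fortiori the inequalities \eqref{eq:Wmm condition}. Combining the two parts gives the lemma.

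The main obstacle, such as it is, is bookkeeping rather than conceptual: one must be careful that the per-context reductions to an anchored, full-row-rank subsystem are genuinely lossless (i.e. the anchored constraints are \emph{equivalent} to, not merely implied by, the full set), so that solving $\W\hd_j=\mb_j$ truly yields a solution of the original system and not just of a relaxation; this is precisely the ``independence of the constraints'' remark already recorded in the paper, so I would simply cite it. The only genuine care needed is to confirm that the right-hand side targets $\mb_j$ (resp.\ the $\Fb_j^\dagger\cb_j$) lie in $\R^V$ with no hidden compatibility constraint across different $j$ — which holds because the columns $\hd_j$ are handled independently once $\Hd$ has full column rank — and to note that full rank of $\Hd$ with $d>m$ is exactly what makes $\Hd^\top$ surjective with nontrivial kernel, yielding the ``infinite number of solutions'' assertion.
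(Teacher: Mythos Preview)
Your proposal is correct and follows essentially the same approach as the paper's own argument in Section~\ref{sec:overparam}: reduce each system to anchored per-context constraints $\Eb_{j,z_j}\W\hd_j=\ab_{j,z_j}$, use full row rank of $\Eb_{j,z_j}$ to pick targets $\mb_j=\Eb_{j,z_j}^\dagger\ab_{j,z_j}$, stack into $\W\Hd=\Mb$, and invoke surjectivity of $\Hd^\top$ (with nontrivial kernel) from the full-rank hypothesis; for \eqref{eq:Wmm condition}, replace the inequalities by equalities as a sufficient proxy and repeat. Your write-up is slightly more explicit about the transpose/surjectivity step and more careful with the ``implied by'' versus ``equivalent to'' distinction, but the content is the same.
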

Thus, $d>m$, \footnote{
The necessity for such large $d$ can be mitigated through the utilization of non-linear architectures (such as an MLP decoder), in which the total number of parameters can be increased by augmenting the width or depth, rather than directly modifying the embedding dimension $d$ as in linear models.
} which also generically favors full-rankness of the embedding matrix \cite{vershynin2011lectures}, implies  both \NTPH-compatibility and \NTP-separability. Combined with Prop. \ref{propo:sep}, it also implies that there are infinitely many possible directions $\Ws$ along which the \NTP loss approaches $\Hc$, motivating the implicit-bias question:
For a specific iterative algorithm aimed at minimizing the NTP loss, which direction does it prefer? We will address this question in the remainder of the paper.

\begin{remark}\label{rem:sep thresholds} In the trivial case where $S_j=1, \forall j\in[m]$ (one-hot classification), the entropy lower bound is zero and is attained iff the data is linearly separable. Indeed, $\Ts$ reduces to the empty set, and \NTP-separability simplifies to traditional multiclass separability.  For binary classification, \cite{cover1965geometrical} showed that $d/m>1/2$ is sufficient and necessary for data in general position to be linearly separable. More recently, several works have extended this analysis to structured (random) data, including \cite{candes2018phase,salehi2018precise,montanari2019generalization,
mignacco2020role}. The exact threshold in corresponding mutliclass settings is more intricate, but 
\cite{
cornacchia2023learning,tan2023multinomial,ccakmak2024convergence} have made progress in this direction.
    An interesting  question is determining exact thresholds for \NTP-separability, which would improve upon the  sufficient condition of Lemma \ref{lem:op}. 
\end{remark}




\section{Regularization path}\label{sec:reg path}
This section investigates the implicit bias of \NTP  by examining the minimization of CE loss through iterates defined as follows for an increasing sequence of positive regularization parameters $B$:
\begin{align}\label{eq:reg-path}
    \What_B:=\arg\min\nolimits_{\|\W\|\leq B} \CE(\W)\,.
\end{align}
This involves minimizing  a strictly convex function in a bounded domain; thus, $\What_B$ is unique.  This section's main result characterizes the limit of $\What_B$ as $B\rightarrow\infty$ under \NTP-separability/compatibility. Before that, we first define the next-token prediction support-vector machines (SVM) problem.

\begin{definition}[\NTP-SVM] Given \NTP-separable training set  $\Tc_m$,  \NTP-SVM  solves the following:
\begin{align}\label{eq:SVM}
    \Wmm:=\arg&\min\nolimits_{\W}~\|\W\| \tag{\SVM}
    \qquad\text{subj. to}~\W\in\R^{V\times d} \text{ satisfying } \eqref{eq:Wmm condition equal} \text{ and } \eqref{eq:Wmm condition larger}.\nn
\end{align}    
\end{definition}

This is a strongly convex quadratic program with $mV-\sum_{j\in[m]} S_j$  linear inequality and  $\sum_{j\in[m]}S_j-m$ linear equality constraints. Its solution can be also defined as the classifier that maximizes margin between in and out-of -support tokens while being constrained on the orthogonal compelemnt $\Fc^\perp$:
\[
\overline\Wmm = \arg\max\nolimits_{\|\W\|=1, \W\in\Ts^\perp}\min\nolimits_{j\in[m], z\in\Sc_j, v\notin\Sc_j} (\eb_z-\eb_v)^\top\W\hd_j.
\]
It turns out this direction determines the preferred limiting direction of the regularization path.

\begin{theorem}[Implicit bias of the regularization-path]\label{thm:reg-path}
Assume training data $\Tc_m$ is \NTPH-compatible and \NTP-separable. Let $\What_B$ be defined as in \eqref{eq:reg-path}. Then, it holds that
$
    \lim_{B\rightarrow\infty}\, \Big\langle{\frac{\What_B}{\|\What_B\|}},{\frac{\Wmm}{\|\Wmm\|}}\Big\rangle = 1\,.
$  
\end{theorem}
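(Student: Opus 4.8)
The plan is to follow the classical strategy used for regularization-path analyses in one-hot classification (e.g. \cite{rosset2003margin,ji2018risk}), adapted to the two-component structure ($\Ts$ and $\Ts^\perp$) that is forced by the \NTPH-compatibility and \NTP-separability conditions. The key conceptual point, already anticipated by Proposition~\ref{propo:sep}, is that as $B\to\infty$ the minimizer $\What_B$ must have a bounded projection onto $\Ts$ (it converges to $\Wstar$) and a norm-diverging projection onto $\Ts^\perp$ whose direction is what we want to identify. So the first step is to decompose $\What_B = \Pc_\Ts(\What_B) + \Pc_{\Ts^\perp}(\What_B)$ and argue that $\|\Pc_{\Ts^\perp}(\What_B)\|\to\infty$ while $\|\Pc_\Ts(\What_B)\|$ stays bounded; the norm divergence follows because on $\Ts^\perp$ the loss can always be strictly decreased by scaling up a feasible \NTP-separating direction (Prop.~\ref{propo:sep}), while if the $\Ts$-component blew up one could get a contradiction with optimality by comparing against the fixed competitor $\Wf + \gamma\overline\Wmm$ for suitable $\gamma$ growing slowly with $B$.

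The second step is the heart of the argument: a competitor/sandwich comparison to pin down the \emph{direction} of the $\Ts^\perp$-component. On one side, use $\W_B^{\rm comp} := \Wstar + c_B\,\overline\Wmm$ with $c_B$ chosen so that $\|\W_B^{\rm comp}\| \le B$ (so $c_B = \sqrt{B^2 - \|\Wstar\|^2} = B - o(1)$); since $\W_B^{\rm comp}$ satisfies \NTPH-compatibility exactly and \NTP-separates with margin $c_B/\|\Wmm\|$, one bounds $\CE(\W_B^{\rm comp})$ from above — the in-support log-odds terms contribute exactly the entropy $\ent$, and the out-of-support terms are $O(V e^{-c_B/\|\Wmm\|})$ by the exponential bound derived right before Prop.~\ref{propo:sep}. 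On the other side, for the true minimizer $\What_B$ with $\CE(\What_B)\le \CE(\W_B^{\rm comp})$, the gap $\CE(\What_B)-\ent$ must be small, which by the KL decomposition $\CE(\W)=\ent+\KL(\pbh\|\hat\qb_\W)$ forces: (i) the in-support logit differences of $\What_B$ to be close to the log-odds $\log(\ph_{j,z}/\ph_{j,z'})$ — this controls $\Pc_\Ts(\What_B)$ and reproves convergence to $\Wstar$; and (ii) each out-of-support softmax probability to be $\lesssim$ the same exponentially small quantity, hence $(\eb_z - \eb_v)^\top \What_B \hd_j \gtrsim \log(1/\text{that quantity}) \gtrsim c_B/\|\Wmm\| - O(\log V)$ for all $j,z\in\Sc_j,v\notin\Sc_j$. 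Thus $\What_B/\|\What_B\|$, once we subtract its bounded $\Ts$-part, is an approximately feasible point of the \NTP-SVM margin problem achieving margin at least $1/\|\Wmm\| - o(1)$ — i.e. essentially the optimal margin.

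The third step converts ``approximately optimal margin'' into ``directional convergence to the unique maximizer.'' Here I would use strong convexity of the \NTP-SVM objective (it is a strongly convex QP on the affine slice $\Ts^\perp$, by Defn.~\ref{def:NTP-sep} and the observation that the equality constraints are exactly the projection onto $\Ts^\perp$): any unit-norm, (approximately) feasible direction in $\Ts^\perp$ whose margin is within $o(1)$ of the optimum must be within $o(1)$ of $\overline\Wmm$ in norm — a standard quantitative-uniqueness argument, essentially that the margin function is strictly concave transverse to its maximizer on the sphere intersected with $\Ts^\perp$. Combining with $\|\Pc_\Ts(\What_B)\|/\|\What_B\|\to 0$ (since numerator is bounded, denominator diverges) gives $\big\langle \What_B/\|\What_B\|,\ \Wmm/\|\Wmm\|\big\rangle \to 1$.

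The main obstacle I anticipate is making the two-sided bound on the out-of-support terms tight enough to extract a margin that is genuinely within $o(1)$ of optimal, rather than merely bounded below by a positive constant. The subtlety is that $\CE(\What_B) - \ent$ being small only controls a \emph{weighted average} (weighted by $\pih_j\ph_{j,z}$) of KL-type terms, whereas the margin condition must hold \emph{uniformly} over all $j$ and all out-of-support $v$; one needs that all weights $\pih_j$ and all in-support $\ph_{j,z}$ are bounded below by a positive constant (true for fixed $\Tc_n$), and then a careful accounting showing the logarithm of a small probability lower-bounds every individual margin term, not just the average. A secondary technical point is justifying that the minimal-norm representative in $\Ts^\perp$ is the right object to compare against — i.e. that adding any $\Ts$-component to a feasible $\What$ can only help along the log-odds directions and is ``free'' for the separation constraints, which is exactly the content of Eq.~\eqref{eq:Wmm condition equal} living in $\Ts^\perp$; this needs to be invoked cleanly to avoid cross-terms between the two components in the norm.
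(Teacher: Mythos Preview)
Your strategy is essentially the paper's: the same competitor point $\Wstar+R\,\Wmm$ to upper-bound $\CE(\What_B)-\ent$ by $O(e^{-R})$, and the same reliance on uniqueness of $\Wmm$ to pin down the direction. The organizational difference is that the paper argues by contradiction (assume $\|\frac{\|\Wmm\|}{B}\What_B-\Wmm\|>\eps$; then some constraint of \eqref{eq:SVM} is violated by a fixed $\delta>0$; this forces a \emph{lower} bound on $\CE(\What_B)-\ent$ that eventually exceeds the competitor's upper bound), whereas you go direct (upper bound on the loss $\Rightarrow$ all constraints are nearly tight with near-optimal margin $\Rightarrow$ direction close to $\overline\Wmm$). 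The two are contrapositive, and the paper's direction is the slightly easier one: bounding the loss from below given a single violated constraint only needs to isolate one term in \eqref{eq:towards Wb main}, whereas your direction needs a uniform bound on every out-of-support softmax entry from a bound on the total KL.

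On your ``main obstacle'': the careful accounting you anticipate is short, but it is not Pinsker. Write $\hat\qb_j$ restricted to $\Sc_j$ as $Z_j\,\tilde\qb_j$ with $Z_j=\sum_{z\in\Sc_j}\hat q_{j,z}$ and $\tilde\qb_j$ a probability on $\Sc_j$; then
\[
\KL(\pbh_j\|\hat\qb_j)=\sum_{z\in\Sc_j}\ph_{j,z}\log\frac{\ph_{j,z}}{Z_j\tilde q_{j,z}}=-\log Z_j+\KL(\pbh_j\|\tilde\qb_j)\,\geq\,-\log Z_j\,,
\]
so $\sum_{v\notin\Sc_j}\hat q_{j,v}=1-Z_j\le 1-e^{-\KL(\pbh_j\|\hat\qb_j)}\le \KL(\pbh_j\|\hat\qb_j)\le nCe^{-R}$, using $\pih_j\ge 1/n$. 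This gives each $\hat q_{j,v}\lesssim e^{-R}$, hence $(\eb_z-\eb_v)^\top\What_B\hd_j\ge R-O(1)$ uniformly, which is the rate you need; Pinsker would only give $e^{-R/2}$ and a normalized margin bounded away from $1/\|\Wmm\|$, so the compactness/uniqueness step would fail. With this lemma in hand, your direct route goes through; the paper's contradiction route simply avoids needing it.
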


The proof sketch below illustrates how the NTP-separability/compatibility assumptions influence the outcome and why the regularization path induces an optimization bias toward  the NTP-SVM direction. Complementing Thm. \ref{thm:reg-path}, we also show {(see
Lemma \ref{lem:iterate limit properties} in the appendix)} that $\lim_{B\rightarrow\infty}\Pc_{\Ts}(\W_B)=\Wstar.$ These together provide a complete characterization of the implicit optimization bias of \eqref{eq:reg-path}.

\begin{proof}[Proof sketch (App. \ref{app:proof reg-path} for details)]
We first show $\What_B$ is on the boundary: $\|\What_B\|=B$. If not, then $\inp{\nabla\CE(\What_B)}{\Wmm}=0$. But, few algebraic manipulations show $\inp{-\nabla\CE(\What_B)}{\Wmm}$ equals
\[
\sum_{j\in[m]}\pih_j\sum_{z\in\Sc_j}\ph_{j,z}\Big(\sum_{{z'\in\Sc_j}, {z'\neq z}}s_{j,z'}\,(\eb_z-\eb_{z'})^\top\Wmm\hd_j+\sum_{v\notin\Sc_j}s_{j,v}\,(\eb_z-\eb_v)^\top\Wmm\hd_j\Big),
\]
where we denote $s_{j,v}:=\sfti{v}{\What_B\hd_j}>0, v\in\Vc, j\in[m].$ The first term in the parenthesis is zero by \eqref{eq:Wmm condition equal}, while the second term is strictly positive by \eqref{eq:Wmm condition larger}, leading to contradiction. 

Now, consider a `genie' point $\Wstar_B=\Wstar+R(B)\cdot\Wmm$, where $\Wstar\in\Ts$ satisfies \eqref{eq:Wf condition}, and $R=R(B)$ is chosen such that $\|\Wstar_B\|=B$.  We will show that $\Wstar_B$ attains a small CE loss as $B$ (hence, $R$) grows. To do this, denote for convenience the logits 
\[\ellstar_{j,v}:=\eb_v^\top\Wstar\hd_j \quad\text{and}\quad\ellmm_{j,v}:=\eb_v^\top\Wmm\hd_j\] for all for $v\in\Vc, j\in[m]$, 
and note that $\eb_v^\top\Wstar_B\hd_j=\ellstar_{j,v}+R\,\ellmm_{j,v}.$ 
By using \eqref{eq:Wf condition} and \eqref{eq:Wmm condition equal}:
\[
\sum_{z'\in\Sc_j}e^{-(\ellstar_{j,z}+R\ellmm_{j,z}-\ellstar_{j,z'}-R\ellmm_{j,z'})} =
\sum_{z'\in\Sc_j}e^{-(\ellstar_{j,z}-\ellstar_{j,z'})} = \sum_{z'\in\Sc_j}\frac{\ph_{j,z'}}{\ph_{j,z}}
= \frac{1}{\ph_{j,z}}.
\]
Moreover, using \eqref{eq:Wmm condition larger} and defining  $C:=Ve^{\|\Wstar\|M}$ for $M:=\sqrt{2}\cdot\max_{j\in[m]}\|\hd_j\|$, gives:
\[
    \sum_{v\notin\Sc_j}e^{-(\ellstar_{j,z}+R\ellmm_{j,z}-\ellstar_{j,v}-R\ellmm_{j,v})} \leq e^{-R} \sum_{v\notin\Sc_j}e^{-(\ellstar_{j,z}-\ellstar_{j,v})} \leq C\,e^{-R}.
\]
Combining the above within Eq. \eqref{eq:CE2},  using $\log(1+x)\leq x, x>0$ and the fact that $\pih_j,\ph_{j,z}$ are probabilities, yields:
\begin{align}
    \CE(\Wstar_B) &\leq \sum_{j\in[m]}\pih_j\sum_{z\in\Sc_j}\ph_{j,z}\log\Big(\frac{1}{\ph_{j,z}}+C\,e^{-R}\Big)\,
    \leq\Hc+C\,e^{-R}\,.\label{eq:reg path up main}
\end{align}

Next, towards contradiction, 
we will show that if $\What_B$ is \emph{not} in the direction of $\Wmm$, then it incurs a loss that is larger than $\CE(\Wstar_B)$. The trick here is to bound the KL divergence term:
\begin{align}
\CE(\What_B)-\Hc 
&= \sum_{j\in[m]}\pih_j\sum_{z\in\Sc_j}\ph_{j,z}\log\Big( \ph_{j,z} \big( \sum_{z'\in\Sc_j} e^{\ell_{j,z'}-\ell_{j,z}}+\sum_{v\notin\Sc_j} e^{\ell_{j,v}-\ell_{j,z}}\big) \Big), \label{eq:towards Wb main}
\end{align}
where we denote logits $\ell_{j,v}:=\eb_v^\top\What_B\hd_j$.
Assume there exists $\eps>0$ and arbitrarily large $B$ satisfying:
\begin{align}\label{eq:contra eps main}
\Big\|\big({\|\Wmm\|}/{B}\big)\;\What_B-\Wmm\Big\|>\eps.
\end{align}
Define
$
\What=(\What_B-\Wstar)/R'(B),
$
where $R'=R'(B)>0$ can be chosen so that $\|\What\|=\|\Wmm\|$. Further choose $B$ large enough so that Eq. \eqref{eq:contra eps main} guarantees
$
\|\What-\Wmm\|\geq \eps',
$
for some $\eps'>0$.
Since $\Wmm$ is the unique minimizer of \eqref{eq:SVM} and $\|\What\|=\|\Wmm\|$, there exists $\delta\in(0,1)$ and $j\in[m]$ such that at least one of the following is true: \textbf{\emph{(i)}} $\exists z$ and $z'\neq z\in\Sc_{j}$ such that ~$|(\eb_z-\eb_{z'})^\top\What\hd_j|\geq \delta\,$
    \textbf{\emph{(ii)}} $\exists z\in\Sc_{j}, v\notin\Sc_{j}$ such that $(\eb_z-\eb_{v})^\top\What\hd_j\leq 1-\delta.$


\noindent\emph{\underline{Case (i):}}~ Without loss of generality $(\eb_z-\eb_{z'})^\top\What\hd_j\leq -\delta$ (otherwise, flip $z,z'$). Thus, ignoring all but the $(j,z,z')$-term in \eqref{eq:towards Wb main} and using $\ell_{j,z'}-\ell_{j,z}\geq R'\delta+\log\left(\frac{\ph_{j,z'}}{\ph_{j,z}}\right)$ gives
$$
\CE(\What_B)-\Hc \geq \pih_j\ph_{j,z}\log\Big({\ph_{j,z}}e^{(\ell_{j,z'}-\ell_{j,z})}\Big) \geq \frac{1}{n}\log\big(\frac{e^{R'\delta}}{n}\,\big).
$$
Comparing this  to \eqref{eq:reg path up main} for large enough $B$ gives that $\CE(\What_B)>\CE(\Wstar_B)$, a contradiction.

\noindent\emph{\underline{Case (ii):}}~We can assume $\What\in\Ts^\perp$,
since otherwise we are in Case (i). Now, again ignoring all but the $(j,z)$ term in the CE loss for which the assumption holds for some $v\notin\Sc_j$, we find
$$
\CE(\What_B)-\Hc\geq \pih_j\ph_{j,z}\log\Big({\ph_{j,z}}\big(\sum_{z'\in\Sc_j}e^{(\ell_{j,z'}-\ell_{j,z})}+e^{(\ell_{j,v}-\ell_{j,z})}\big)\Big)\nn.
$$
Using $\Pc_\Ts(\What_B)=\Wstar$ and \eqref{eq:Wf condition} yields
$
\sum_{z'\in\Sc_j}e^{(\ell_{j,z'}-\ell_{j,z})} = \frac{1}{\ph_{j,z}}\,.
$
Moreover, by assumption of Case (ii): 
$
e^{\ell_{j,v}-\ell_{j,z}}\geq e^{-R'(1-\delta)} \,e^{\ellstar_{j,v}-\ellstar_{j,z}} \geq c' e^{-R'(1-\delta)}, 
$
for $c':=e^{-\|\Wstar\|M}$.
Putting together yields:
\begin{align}\nn
    \CE(\What_B) - \Hc &\geq \pih_j\ph_{j,z}\log\big(1+\ph_{j,z}c'e^{-R'(1-\delta)}\big) \geq {c'e^{-R'(1-\delta)}}/{2n^2}\,,
\end{align}
where the second inequality uses $\log(1+x)\geq \frac{x}{1+x}, x>0$. Compare this with \eqref{eq:reg path up main}: For large enough $B$, since $R,R'$ grow at the same rate, it holds $\frac{c'}{2n^2}e^{-R'(1-\delta)}> C e^{-R}$. Thus, $\CE(\What_B)>\CE(\Wstar_B)$, a contradiction. In either case, we arrive at a contradiction, which completes the proof.
\end{proof}


\section{Gradient Descent}\label{sec:gd}
This section studies the implicit bias of GD. 
Denote the GD iterates at time $k$  by $\W_{k}=\W_{k-1}-\eta\nabla\CE\left(\W_{k-1}\right)$ for arbitrary initial point $\W_0$ and constant step-size $\eta>0$ small enough to guarantee descent. 
The first observation is that  the norm of the GD iterates increases with iterations.

\begin{lemma}[Norm growth]\label{lem:norm growth}
    If training data are \NTPH-compatible and \NTP-separable, then \\$\lim_{k\rightarrow\infty}\CE(\W_k)=\Hc$ and $\lim_{k\rightarrow\infty}\|\W_k\|=\infty$.
\end{lemma}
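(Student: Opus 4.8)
The plan is to run the textbook analysis of gradient descent on a smooth convex function, but for a loss whose infimum is \emph{not attained}, and to feed in the two structural facts already established: (a) by Proposition~\ref{propo:sep}, under \NTPH-compatibility and \NTP-separability the family $\W^\gamma=\Wf+\gamma\Ws$ satisfies $\CE(\W^\gamma)\to\Hc$ as $\gamma\to\infty$; and (b) for every \emph{finite} $\W$ one has $\CE(\W)>\Hc$, since strict positivity of the softmax prevents Eq.~\eqref{eq:for equality 2}. Fact (a) supplies an approximating sequence, and fact (b), upgraded to a statement about bounded sets by compactness, will force the iterates to escape to infinity.

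First I would record global smoothness. Writing $\CE(\W)=\sum_{j\in[m]}\pih_j\,\mathrm{ce}_j(\W\hd_j)$ with $\mathrm{ce}_j(\ub)=-\sum_{z\in\Sc_j}\ph_{j,z}\log\sfti{z}{\ub}$, the Hessian of $\ub\mapsto-\log\sfti{z}{\ub}$ equals $\diag{\sft{\ub}}-\sft{\ub}\sft{\ub}^\top\preceq\Iden$, so that $\nabla^2\CE(\W)\preceq\beta\,\Iden$ with $\beta:=\max_{j\in[m]}\|\hd_j\|^2$ (finite, since the embeddings are fixed and $m<\infty$). The hypothesis that $\eta$ is ``small enough to guarantee descent'' means $\eta\le 1/\beta$, and the descent lemma then gives $\CE(\W_{k+1})\le\CE(\W_k)-\tfrac{\eta}{2}\|\nabla\CE(\W_k)\|^2$; hence $\CE(\W_k)$ is non-increasing and converges to some $\CE^\infty\ge\Hc$, and also $\eta^2\|\nabla\CE(\W_k)\|^2\le 2\eta\big(\CE(\W_k)-\CE(\W_{k+1})\big)$.

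Next I would show $\CE^\infty=\Hc$ by contradiction. Assume $\CE^\infty=\Hc+\eps$ for some $\eps>0$, and use (a) to fix $\gamma$ with $\CE(\W^\gamma)\le\Hc+\eps/2$. For any fixed reference $\W$, convexity gives $\langle\nabla\CE(\W_k),\W_k-\W\rangle\ge\CE(\W_k)-\CE(\W)$; expanding the GD update and inserting the co-coercivity bound above yields the telescoping inequality
\[
\|\W_{k+1}-\W\|^2\le\|\W_k-\W\|^2-2\eta\big(\CE(\W_{k+1})-\CE(\W)\big).
\]
Apply this with $\W=\W^\gamma$; since $\CE(\W_{k+1})\ge\CE^\infty=\Hc+\eps\ge\CE(\W^\gamma)+\eps/2$ for all $k$, we get $\|\W_{k+1}-\W^\gamma\|^2\le\|\W_k-\W^\gamma\|^2-\eta\eps$, and summing gives $\|\W_{k+1}-\W^\gamma\|^2\le\|\W_0-\W^\gamma\|^2-k\eta\eps\to-\infty$, which is impossible. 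Hence $\CE(\W_k)\to\Hc$.

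Finally, for the norm divergence, suppose $\|\W_k\|\not\to\infty$; then some $r>0$ and a subsequence satisfy $\|\W_{k_i}\|\le r$. On the compact ball $\{\|\W\|\le r\}$ the continuous function $\CE$ attains its minimum at some finite $\W^\star$, so by fact (b) $c_r:=\min_{\|\W\|\le r}\CE(\W)=\CE(\W^\star)>\Hc$; thus $\CE(\W_{k_i})\ge c_r>\Hc$ along the subsequence, contradicting $\CE(\W_k)\to\Hc$. Therefore $\|\W_k\|\to\infty$. The only slightly delicate piece is the (routine) verification of global $\beta$-smoothness and its consistency with the ``descent'' step-size condition; everything else is the standard GD-on-convex argument, with Proposition~\ref{propo:sep} providing the approximating sequence and strict positivity of the softmax providing the uniform gap between $\CE$ and $\Hc$ on bounded sets.
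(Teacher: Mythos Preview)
Your proof is correct and follows essentially the same route the paper takes: the paper does not write out a proof but instead invokes the standard smooth-convex-GD analysis (citing Lemma~2 of \cite{ji2020gradient}) together with Proposition~\ref{propo:sep} and the strict-positivity observation that $\CE(\W)>\Hc$ for all finite $\W$. You have simply supplied the details of that cited argument---the descent lemma, the telescoping contraction toward an $\eps/2$-approximate minimizer $\W^\gamma$, and the compactness step for norm divergence---so there is nothing substantively different to compare.
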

This is intuitive because the CE loss is convex in $\W$ (thus, GD approaches the objective's infimum $\Hc$), and, in view of Proposition \ref{propo:sep}, the CE loss at all finite $\W$ is bounded away from $\Hc$.
The relevant question then becomes that of determining the limit of the direction of the GD iterates. 

\begin{theorem}[Implicit bias of GD]\label{thm:GD main}
Assume \NTPH-compatible and \NTP-separable training data $\Tc_m$. Then, it holds that
$
    \lim_{k\rightarrow\infty}\, \Big\langle\frac{\W_k}{\|\W_k\|},\frac{\Wmm}{\|\Wmm\|}\Big\rangle = 1\,.    
$
Moreover, $\lim_{k\rightarrow\infty}\Pc_\Ts(\W_k)=\Wstar.$ 
\end{theorem}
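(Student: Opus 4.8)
\textbf{Proof strategy for Theorem \ref{thm:GD main}.} The plan is to decompose the GD analysis along the orthogonal splitting $\R^{V\times d} = \Ts \oplus \Ts^\perp$ and handle the two components separately, exploiting the convexity of $\CE$ and the structural constraints identified in the \NTPH-compatibility and \NTP-separability conditions. The finite component $\Pc_\Ts(\W_k)$ should converge to $\Wstar$, while the diverging component $\Pc_{\Ts^\perp}(\W_k)$ should align directionally with $\Wmm$. The overall architecture mirrors the regularization-path argument of Theorem \ref{thm:reg-path}, but now the implicit regularization comes from the iterates themselves rather than an explicit norm ball.

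\textbf{Step 1: Projection onto $\Ts$ converges to $\Wstar$.} First I would observe that $\CE$ is convex and, by Lemma \ref{lem:norm growth}, $\CE(\W_k)\to\Hc$. The key is that $\Hc$ is attainable in the limit only by matrices whose $\Ts$-projection equals $\Wstar$ (the unique solution of \eqref{eq:Wf condition} within $\Ts$), since the in-support softmax ratios are pinned down exactly by \eqref{eq:Wf condition} and these depend only on $\Pc_\Ts(\W)$. I would make this quantitative: show that if $\|\Pc_\Ts(\W_k)-\Wstar\|\geq\eps$, then the loss is bounded away from $\Hc$ by some $g(\eps)>0$ (a strong-convexity-type lower bound on the KL restricted to the in-support coordinates, which live entirely in the $\Ts$-determined logit differences). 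Combined with $\CE(\W_k)\to\Hc$, this forces $\Pc_\Ts(\W_k)\to\Wstar$. One subtlety is that the softmax is only strictly convex on logit \emph{differences}, so I would work with the reduced parametrization on $\Ts$ and use that $\Hd$ is full rank (which is implied by the standing assumptions since \NTP-separability holds).

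\textbf{Step 2: Directional alignment of the $\Ts^\perp$ component.} For the diverging direction, I would adapt the standard implicit-bias machinery (\`a la \cite{soudry2018implicit,ji2018risk}) to the NTP margin. Writing $\W_k = \Pc_\Ts(\W_k) + \Pc_{\Ts^\perp}(\W_k)$, Step 1 controls the first term, so it suffices to analyze $\widetilde\W_k := \Pc_{\Ts^\perp}(\W_k)$, which satisfies a GD-like recursion driven by $\Pc_{\Ts^\perp}(-\nabla\CE)$. The gradient, evaluated on logit differences $(\eb_z-\eb_v)^\top\W\hd_j$, has the structure that its $\Ts^\perp$-projection is (asymptotically) a positive combination of the vectors $(\eb_z-\eb_v)\hd_j^\top$ for $z\in\Sc_j, v\notin\Sc_j$ — exactly the ``support vectors'' of \eqref{eq:SVM}. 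The argument is then: (a) since $\CE(\W_k)\to\Hc$ and all finite $\W$ are bounded away, $\|\widetilde\W_k\|\to\infty$ with $\|\widetilde\W_k\|$ growing like $\log k$ (the typical exponential-tail rate); (b) a ``genie comparison'' against $\Wstar + R_k\Wmm$ as in \eqref{eq:reg path up main} shows $\widetilde\W_k$ cannot persistently deviate from the $\Wmm$ direction without incurring excess loss that contradicts $\CE(\W_k)\to\Hc$ at the established rate. Concretely I would either (i) port the regularization-path conclusion of Theorem \ref{thm:reg-path} to GD via the known equivalence between the regularization path and GD trajectory direction for exponential-tailed losses, or (ii) run the direct primal argument: bound $\langle \widetilde\W_k, \overline\Wmm\rangle$ from below using $\sum_{\ell<k}\langle -\nabla\CE(\W_\ell),\overline\Wmm\rangle$ and the fact (shown in the proof sketch of Theorem \ref{thm:reg-path}) that $\langle-\nabla\CE(\W),\Wmm\rangle>0$ with a lower bound proportional to the per-step loss decrease, then combine with an upper bound on $\|\widetilde\W_k\|$ from the sum of gradient norms.

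\textbf{Main obstacle.} The hard part will be Step 2(a)--(b): establishing the precise $\log k$ growth rate of $\|\widetilde\W_k\|$ and the matching margin lower bound simultaneously, since the NTP loss is a \emph{sum of coupled softmax terms} rather than a sum of decoupled logistic losses — within each context $j$, the in-support tokens contribute a term that does \emph{not} vanish (it converges to the local entropy $-\sum_{z\in\Sc_j}\ph_{j,z}\log\ph_{j,z}$, not to $0$), so the usual ``loss $\to 0$ hence gradient $\to 0$'' bookkeeping must be replaced by ``loss $\to\Hc$ hence the \emph{out-of-support} part of the gradient $\to 0$.'' I would handle this by splitting each per-context gradient contribution into an in-support part (which, by Step 1, drives $\Pc_\Ts(\W_k)\to\Wstar$ and is asymptotically confined to $\Ts$) and an out-of-support part (which lies in directions with positive $\Wmm$-margin and decays like $e^{-\|\widetilde\W_k\|}$ up to constants), making the effective dynamics in $\Ts^\perp$ behave like a genuine separable-classification GD on the Kesler-lifted points $(\eb_z-\eb_v)\hd_j^\top$. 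A secondary technical point is ensuring the step-size/smoothness constants behave uniformly along the trajectory despite $\|\W_k\|\to\infty$; this is handled because the relevant curvature (second derivative of softmax) decays exponentially in the out-of-support logit gaps, so descent is maintained for the fixed $\eta$ chosen small relative to the initial smoothness.
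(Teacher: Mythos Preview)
Your overall architecture matches the paper's: decompose along $\Ts\oplus\Ts^\perp$, handle $\Pc_\Ts(\W_k)\to\Wstar$ via the fact that $\CE(\W_k)\to\Hc$ forces the in-support logit differences to match \eqref{eq:Wf condition}, and then run a Ji--Telgarsky style telescoping on the diverging component. Two points of comparison and one correction are worth flagging.

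\textbf{Genie point.} Your Step~2(b) proposes comparing against $\Wstar+R_k\Wmm$. The paper instead proves a key lemma comparing $\CE(\W_k)$ to $\CE\big(\W_{k,\Ts}+(1+\alpha)\|\W_{k,\perp}\|\overline\Wmm\big)$, i.e.\ the genie shares the \emph{current} $\Ts$-component $\W_{k,\Ts}$ rather than $\Wstar$. This is not merely cosmetic: since $(\eb_z-\eb_{z'})\hd_j^\top\in\Ts$ for $z,z'\in\Sc_j$, the in-support softmax sums are \emph{identical} on both sides and cancel, so the comparison reduces cleanly to the out-of-support terms. With your $\Wstar$-genie you would additionally have to absorb $\|\W_{k,\Ts}-\Wstar\|$ into the constants, which is doable (using Step~1) but messier. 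Once the loss comparison is in hand, convexity gives $\inp{-\nabla\CE(\W_k)}{\overline\Wmm}\geq(1-\eps)\inp{-\nabla\CE(\W_k)}{\overline{\W_{k,\perp}}}$, and the rest is a standard telescoping plus smoothness bound; your phrasing ``lower bound proportional to the per-step loss decrease'' is not quite the right inequality and would not telescope on its own.

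\textbf{Rates are not needed.} The paper's argument never establishes or uses the $\log k$ growth rate you highlight as the main obstacle; the key lemma holds for all sufficiently large $k$ with no rate, and the telescoping only needs $\|\W_k\|\to\infty$ and $\|\W_{k,\Ts}\|$ bounded. You can drop that part of the plan.

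\textbf{Correction.} \NTP-separability does \emph{not} imply $\Hd$ has full rank (Lemma~\ref{lem:op} is the converse direction). Fortunately you do not need it: uniqueness of $\Wstar$ in $\Ts$ follows directly because any two solutions of \eqref{eq:Wf condition} differ by an element of $\Ts^\perp$. The paper makes this explicit by introducing the auxiliary loss $\CE_\Ts(\W)$ (obtained by dropping out-of-support tokens from the softmax), showing it is uniquely minimized at $\Wstar$ on $\Ts$, and using $\CE_\Ts\leq\CE$.
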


The theorem establishes \footnote{In line with observations in one-hot encoding \cite{nacson2018stochastic}, we anticipate the directional behavior remains unchanged under stochasticity, e.g. when using SGD to minimize \eqref{eq:CE2}. Yet, note a subtle but crucial difference in applying SGD to \eqref{eq:intro CE} vs \eqref{eq:CE2}, as the latter involves sampling \emph{distinct} contexts in each iteration.  In this latter case, we also point out that favorable interpolation conditions, such as strong-growth (e.g., \cite{vaswani2019fast}), can be shown to hold.
} that in the limit of iterations: $\W_k\approx \Wstar+\|\Pc_{\perp}(\W_k)\|\overline\Wmm,
$
which is analogous to the result we obtained previously for the regularization path. Although its proof is more involved compared to the proof of Thm. \ref{thm:reg-path}, the proof of its main ingredient (Lem. \ref{lem:key} in the appendix) is conceptually similar: It involves comparing the loss $\CE(\W_k)$ for large iterations $k$ to the loss evaluated at a ``genie'' point that is chosen so that: (i) On the subspace $\Ts$, it agrees with $\W_k$. This is because it is easy to show that $\Pc_\Ts(\W_k)$ converges to $\Wstar$ by standard gradient descent analysis for convex functions; (ii) On the orthogonal subspace $\Ts^\perp$, it follows the optimal (with respect to accelerating loss decrease) max-margin direction $\overline\Wmm\in\Fc^\perp$. To establish the loss comparison, the ideas is to compare the values of the adjusted loss $\CE_\perp(\W):=\CE(\W)-\CE\left(\Pc_{\Ts}(\W)\right)$. 

\begin{figure}[t]
    \centering
    \includegraphics[width=1\linewidth]{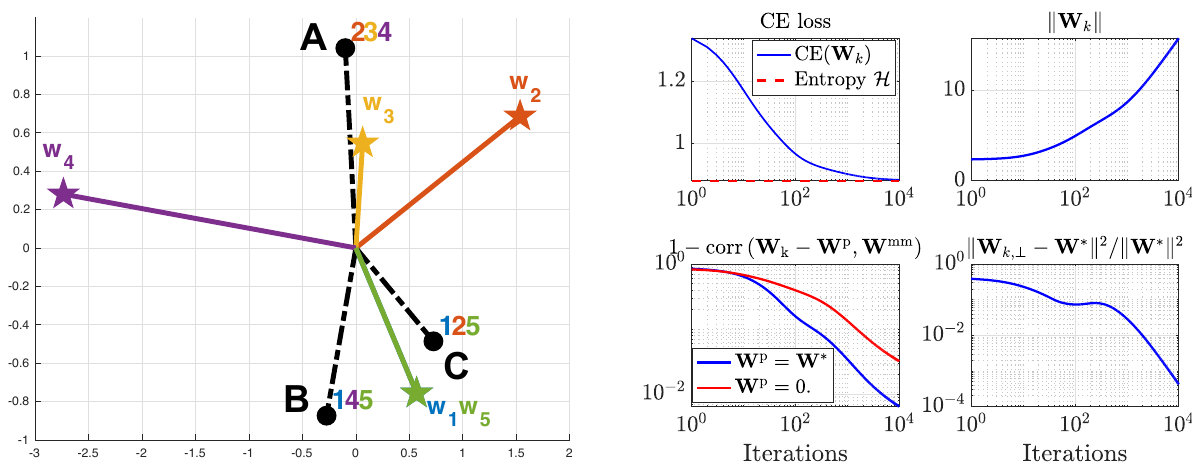} 
\captionsetup{width=\textwidth}
    \caption{{Vis. of \NTP implicit optimization bias  in a setting with $m=3$ \emph{distinct} contexts, embedding dimension $d=2$, vocabulary  of $|\Vc|=5$ words and support sets of length $|\Sc_j|=3,j\in[3]$.
    \textbf{\emph{Left:}} Vis. of  \emph{context embeddings} $\hd_j$ in circle black markers (marked as A,B,C)
     and of their associated support sets $\Sc_j$ (colored text below each marker). 
    Colored vectors (star markers) represent  max-\NTP-margin vectors $\w_v^\top:=\eb_v^\top\Wmm, v\in[5]$ found by GD. Interpreting decoder vectors as \emph{word embeddings} leads to intuitive findings on their geometry learned by \NTP training. E.g., word embedding $\w_3$ (almost) aligns with context-embedding $A$ and the normal hyperplane it defines separates $A$ from $B$ and $C$, since word $3$ only appears after context $A$. The rest of the words follow two contexts each and their word-representation naturally belongs to the cone defined by the embeddings of those respective contexts. The wider the cone, the larger the magnitude of the word embedding to compensate for the large angle between context-representations that share the same next-word. Note that geometry of depicted word embeddings only depends on support sets, but the conditional probabilities define another set of word representations on an orthogonal (matrix) subspace; see text for details and vis.
    \textbf{\emph{Right:}}  
    Upper/lower graphs confirm the predictions of Lemma \ref{lem:norm growth}  and of Theorem \ref{thm:GD main}, respectively.}
    }
    \label{fig:2d}
\end{figure}

We validate our analysis with experiments on synthetic data in App. \ref{sec:exp}. For illustration, Fig. \ref{fig:2d} shows a 2D setting with $m=3$ distinct contexts, each followed by $S_j=3$ tokens/words out of total $V=5$ words in the vocabulary. The left subfigure illustrates: (i) In black markers, the context-embedding geometry along with the associated support sets for each context A, B, and C. (ii) In colored markers, the geometry of word-embeddings, that is the max-\NTP-margin vectors $(\Wmm)^\top\eb_v, v\in[5]$, to which GD directionally converges. 
See caption for interpretation and Fig. \ref{fig:2d_app} in the App. for vis. of the finite component of word-embeddings on the subspace $\Ts$. 
The right subfigure shows results of GD training with respect to training loss, norm growth, alignment with $\Wmm$, and convergence to $\Wstar$ on $\Ts$. See App. \ref{sec:exp}  for further implementation details and additional experiments.




\section{Related work}\label{sec:rel work}

We build on the literature on implicit optimization bias of CE loss in one-hot supervised classification. \cite{soudry2018implicit} show that for linear models and linearly-separable data, GD converges in direction to the max-margin classifier. This result strengthens \cite{Rosset2003MarginML} that showed the regularization path of CE minimization converges to the same limit. 
Closer to us, \cite{ji2018risk,ji2020gradient} extend the analysis to encompass general binary data as follows:  
the data are linearly separable only on a certain subspace, and they show that GD converges, in direction, towards the max-margin classifier confined within that subspace. On the orthogonal subspace, it converges to a finite point. While operationally similar, Thms. \ref{thm:reg-path}, \ref{thm:GD main} {cannot} be directly derived from theirs since our setting is neither binary nor one-hot. Nevertheless, our proofs extend the foundational work of \cite{Rosset2003MarginML,ji2018risk,ji2020gradient}, akin to numerous other studies that explore extensions to nonlinear architectures\cite{lyu2020Gradient,ji2020directional,gunasekar2018characterizing,gunasekar2018implicit,tarzanagh2023maxmargin,deora2024implicit},  and to stochastic and adaptive algorithms \cite{nacson2019convergence,pesme2021implicit,damian2021label,li2021happens,sun2022mirror,azizan2021stochastic,cattaneo2023implicit,andriushchenko2022towards}.
The implicit bias viewpoint has also created opportunities to study generalization in overparameterized settings. \cite{hastie2019surprises,bartlett2019benign,montanari2019generalization,deng2022model}
 build a two-stage approach initially leveraging implicit bias to simplify the complexities of optimization before addressing generalization. This narrows the generalization question to the properties of the corresponding max-margin classifier 
\cite{muthukumar2020classification,cao2021risk,koehler2021uniform,sur2019modern,donhauser2022fast,zhou2020uniform,shamir2022implicit,wu2024precise}.
The same strategy has also been adopted to study model robustness to adversarial perturbations \cite{javanmard2022precise,
taheri2023asymptotic,chen2023benign}, out-of-distribution data \cite{tripuraneni2021overparameterization}, and imbalances \cite{sagawa2020investigation,chatterji2021does,kini2021label}. Our results motivate such extensions in the richer \NTP setting.

Recent work \cite{liu2023same} also studies forms of implicit bias for language models trained to reach the risk lower bound. However, they assume training with population loss and analyze implicit bias through Hessian-trace minimization without providing explicit parameter characterizations as in Thm. \ref{thm:GD main}. Crucially, their results do \emph{not} apply to CE loss\footnote{ \cite[Thm. 4.3]{liu2023same} uses \cite[Cor. 5.2]{li2021happens}, which applies to regression on scalar labels; thus is not applicable in NTP.} or to sparse support-sets.
Another interesting work \cite{malach2023auto} studies learning  abilities  of autoregressive training and inference. However, their findings do \emph{not} apply to NTP as they inherently assume each context is followed by a unique next token.

Finally, although stemming from different perspectives, the form of our convergence results  echoes a recent conjecture by  \cite{tarzanagh2023transformers} regarding implicit optimization bias in transformers. Unlike their conjecture, which focuses on binary classification, our results are rigorously proven and apply to the NTP setting. Further detailed discussion on related follow-up work on implicit optimization bias in self-attention architectures, as initiated by \cite{tarzanagh2023maxmargin}, is deferred to Appendix \ref{sec:rel app}. In contrast to this line of work, we here focus on the optimization biases of the  NTP training-paradigm itself, which is orthogonal to the intricacies of the specific architecture generating the context embeddings. 

\section{Conclusion, limitations and future work}\label{sec:conc}
\vspace{-0.12in}


Towards characterizing implicit regularization effects, we highlight two key aspects of NTP training: \emph{(i)} Formulating it as  CE optimization over \emph{distinct} contexts; this is long recognized in language modeling (e.g., \cite{levy,glove}) since Shannon’s initial work, yet seemingly overlooked in recent studies, such as  \cite{liu2023same,malach2023auto}. \emph{(ii)} Accounting for \emph{sparsity} in the matrix of next-token conditional probabilities. While traditional language modeling techniques often mitigate sparsity using smoothing heuristics that assign non-zero probabilities to unobserved next tokens \cite{levy,glove,NLP_stanford_book}, we recognize sparsity as a critical factor in NTP optimization that influences parameter divergence\footnote{Parameter divergence in transformer-based language models has been empirically observed in \cite{merrill2021effects}.}. 

As the first study of implicit biases in NTP training, our results are based on several  assumptions essential for establishing an initial foundational understanding. The framework allows for various exciting promising research directions, some of which we outline below.

Even within the assumed linear setting and GD, interesting directions involve: 

$\bullet$ \textbf{\NTP-separability thresholds:} Identifying exact thresholds for \NTP-separability under distributional assumptions, akin to previous work on one-hot separability (Remark \ref{rem:sep thresholds}). However, relaxing the overparameterization requirement that the embedding dimension $d$ be proportional to the number of distinct contexts $m$ would necessitate exploring non-convex architectures (see 'Memory capacity' below).

$\bullet$ \textbf{Generalization:} Studying generalization in \NTP settings by examining  statistical properties of the \NTP-SVM solution. Past research has successfully undertaken similar investigations for one-hot classification (see Sec. \ref{sec:rel work}). While we acknowledge the importance of addressing specific challenges inherent to \NTP—such as determining an appropriate measure of generalization, or establishing suitable statistical models for context-embeddings that respect the discrete nature of the underlying token subsequences—we believe this direction holds promise for further exploration. 

In addition to these, essential extensions include relaxing the linearity assumption. 

$\bullet$ \textbf{Architecture-specific embeddings:} A bottom-up approach considering architecture-specific embeddings could begin by modeling the embeddings produced by, for instance, a shallow transformer and analyzing the effects of optimization biases on the training of both the transformer and the decoder weights. 
This complements the works of \cite{tarzanagh2023maxmargin,tarzanagh2023transformers}, who investigate one-layer self-attention with a fixed decoder.
A challenge in this approach is balancing the restriction to shallow transformers (for analytical tractability) with ensuring that the NTP loss reaches the entropy lower bound. This may require constraining the training data distribution, for example, to a Markov chain \cite{makkuva2024attention,edelman2024evolution}.

$\bullet$ \textbf{Memory capacity in \NTP settings:} Without imposing further restrictions on the data beyond the discrete nature of tokens from a finite vocabulary, there is a strong case for investigating the memory capacity of sequence-to-sequence architectures, such as transformers, in the context of NTP. Recent studies on transformer memory capacity \cite{kajitsuka2024are,kim2023provable} do \emph{not} apply here. The interested reader is refered to follow-up work \cite{madden2024upper} for a formal definition of memory capacity in the NTP setting and initial results in this direction.

$\bullet$ \textbf{Unconstrained features:} Extending the top-down approach, one could consider freely optimizing context embeddings together with decoder vectors (also known as word embeddings). The
resulting log-bilinear model,  reminiscent of wor2vec models \cite{glove,mikolov2013efficient},
extends the unconstrained features model, which has recently been employed to investigate neural collapse geometry in one-hot classification settings \cite{mixon2020neural}. This idea offers a promising avenue for uncovering structures in the geometries of context and word embeddings when learned jointly, potentially revealing new insights into the capabilities of sufficiently expressive language models (see Fig. \ref{fig:2d} for cases involving only the latter). We refer the interested reader to \cite{yize2024implicit} for follow up work in this direction.

$\bullet$ \textbf{Other optimizers:} Exploring the \NTP implicit bias of adaptive algorithms, such as Adam, potentially building on recent works in this area focused on one-hot classification \cite{adam2024implicit1,adam2024implicit2}. 

We hope this work inspires further research in the discussed directions, contributing to a deeper understanding of the intricacies involved and potentially yielding improvements in \NTP training.
 \subsection*{Acknowledgements}
Thank you to Tina Behnia, Yize Zhao, Vala Vakilian, and Puneesh Deora for inspiring discussions that contributed to this work and for their valuable suggestions on the manuscript. I am also grateful to Gautam Goel for his careful reading and for pointing out several typos. Thanks to the anonymous reviewers for their feedback. This work is supported by the NSERC Discovery
Grant No. 2021-03677, the Alliance Grant ALLRP 581098-22, NFRFE-2023-00936, and a CIFAR AI Catalyst Grant. 
\bibliography{allrefs}

\begin{thebibliography}{105}
\providecommand{\natexlab}[1]{#1}
\providecommand{\url}[1]{\texttt{#1}}
\expandafter\ifx\csname urlstyle\endcsname\relax
  \providecommand{\doi}[1]{doi: #1}\else
  \providecommand{\doi}{doi: \begingroup \urlstyle{rm}\Url}\fi

\bibitem[Aky{\"u}rek et~al.(2023)Aky{\"u}rek, Schuurmans, Andreas, Ma, and Zhou]{akyrek2023what}
Ekin Aky{\"u}rek, Dale Schuurmans, Jacob Andreas, Tengyu Ma, and Denny Zhou.
\newblock What learning algorithm is in-context learning? investigations with linear models.
\newblock In \emph{The Eleventh International Conference on Learning Representations}, 2023.
\newblock URL \url{https://openreview.net/forum?id=0g0X4H8yN4I}.

\bibitem[Andriushchenko and Flammarion(2022)]{andriushchenko2022towards}
Maksym Andriushchenko and Nicolas Flammarion.
\newblock Towards understanding sharpness-aware minimization.
\newblock In \emph{International Conference on Machine Learning}, pages 639--668. PMLR, 2022.

\bibitem[Azizan et~al.(2021)Azizan, Lale, and Hassibi]{azizan2021stochastic}
Navid Azizan, Sahin Lale, and Babak Hassibi.
\newblock Stochastic mirror descent on overparameterized nonlinear models.
\newblock \emph{IEEE Transactions on Neural Networks and Learning Systems}, 33\penalty0 (12):\penalty0 7717--7727, 2021.

\bibitem[Bartlett et~al.(2019)Bartlett, Long, Lugosi, and Tsigler]{bartlett2019benign}
Peter~L Bartlett, Philip~M Long, G{\'a}bor Lugosi, and Alexander Tsigler.
\newblock Benign overfitting in linear regression.
\newblock \emph{arXiv preprint arXiv:1906.11300}, 2019.

\bibitem[Beck et~al.(2024)Beck, P{\"o}ppel, Spanring, Auer, Prudnikova, Kopp, Klambauer, Brandstetter, and Hochreiter]{beck2024xlstm}
Maximilian Beck, Korbinian P{\"o}ppel, Markus Spanring, Andreas Auer, Oleksandra Prudnikova, Michael Kopp, G{\"u}nter Klambauer, Johannes Brandstetter, and Sepp Hochreiter.
\newblock xlstm: Extended long short-term memory.
\newblock \emph{arXiv preprint arXiv:2405.04517}, 2024.

\bibitem[Belkin(2024)]{belkin2024necessity}
Mikhail Belkin.
\newblock The necessity of machine learning theory in mitigating ai risk.
\newblock \emph{ACM/JMS Journal of Data Science}, 2024.

\bibitem[Belkin et~al.(2018)Belkin, Rakhlin, and Tsybakov]{belkin2018does}
Mikhail Belkin, Alexander Rakhlin, and Alexandre~B Tsybakov.
\newblock Does data interpolation contradict statistical optimality?
\newblock \emph{arXiv preprint arXiv:1806.09471}, 2018.

\bibitem[Bengio and Bengio(2000)]{bengio2000taking}
Samy Bengio and Yoshua Bengio.
\newblock Taking on the curse of dimensionality in joint distributions using neural networks.
\newblock \emph{IEEE Transactions on Neural Networks}, 11\penalty0 (3):\penalty0 550--557, 2000.

\bibitem[Bengio et~al.(2000)Bengio, Ducharme, and Vincent]{bengio2000neural}
Yoshua Bengio, R{\'e}jean Ducharme, and Pascal Vincent.
\newblock A neural probabilistic language model.
\newblock \emph{Advances in neural information processing systems}, 13, 2000.

\bibitem[Bommasani et~al.(2021)Bommasani, Hudson, Adeli, Altman, Arora, von Arx, Bernstein, Bohg, Bosselut, Brunskill, et~al.]{bommasani2021opportunities}
Rishi Bommasani, Drew~A Hudson, Ehsan Adeli, Russ Altman, Simran Arora, Sydney von Arx, Michael~S Bernstein, Jeannette Bohg, Antoine Bosselut, Emma Brunskill, et~al.
\newblock On the opportunities and risks of foundation models.
\newblock \emph{arXiv preprint arXiv:2108.07258}, 2021.

\bibitem[{\c{C}}akmak et~al.(2024){\c{C}}akmak, Lu, and Opper]{ccakmak2024convergence}
Burak {\c{C}}akmak, Yue~M Lu, and Manfred Opper.
\newblock A convergence analysis of approximate message passing with non-separable functions and applications to multi-class classification.
\newblock \emph{arXiv preprint arXiv:2402.08676}, 2024.

\bibitem[Cand{\`e}s and Sur(2018)]{candes2018phase}
Emmanuel~J Cand{\`e}s and Pragya Sur.
\newblock The phase transition for the existence of the maximum likelihood estimate in high-dimensional logistic regression.
\newblock \emph{arXiv preprint arXiv:1804.09753}, 2018.

\bibitem[Cao et~al.(2021)Cao, Gu, and Belkin]{cao2021risk}
Yuan Cao, Quanquan Gu, and Mikhail Belkin.
\newblock Risk bounds for over-parameterized maximum margin classification on sub-gaussian mixtures.
\newblock \emph{Advances in Neural Information Processing Systems}, 34:\penalty0 8407--8418, 2021.

\bibitem[Cattaneo et~al.(2023)Cattaneo, Klusowski, and Shigida]{cattaneo2023implicit}
Matias~D Cattaneo, Jason~M Klusowski, and Boris Shigida.
\newblock On the implicit bias of adam.
\newblock \emph{arXiv preprint arXiv:2309.00079}, 2023.

\bibitem[Chatterji et~al.(2021)Chatterji, Long, and Bartlett]{chatterji2021does}
Niladri~S Chatterji, Philip~M Long, and Peter~L Bartlett.
\newblock When does gradient descent with logistic loss find interpolating two-layer networks?
\newblock \emph{The Journal of Machine Learning Research}, 22\penalty0 (1):\penalty0 7135--7182, 2021.

\bibitem[Chen et~al.(2023)Chen, Cao, and Gu]{chen2023benign}
Jinghui Chen, Yuan Cao, and Quanquan Gu.
\newblock Benign overfitting in adversarially robust linear classification.
\newblock In \emph{Uncertainty in Artificial Intelligence}, pages 313--323. PMLR, 2023.

\bibitem[Chen and Li(2024)]{chen2024provably}
Sitan Chen and Yuanzhi Li.
\newblock Provably learning a multi-head attention layer.
\newblock \emph{arXiv preprint arXiv:2402.04084}, 2024.

\bibitem[Collins et~al.(2022)Collins, Bhatt, and Weller]{soft2}
Katherine~M Collins, Umang Bhatt, and Adrian Weller.
\newblock Eliciting and learning with soft labels from every annotator.
\newblock In \emph{Proceedings of the AAAI Conference on Human Computation and Crowdsourcing}, volume~10, pages 40--52, 2022.

\bibitem[Cornacchia et~al.(2023)Cornacchia, Mignacco, Veiga, Gerbelot, Loureiro, and Zdeborov{\'a}]{cornacchia2023learning}
Elisabetta Cornacchia, Francesca Mignacco, Rodrigo Veiga, C{\'e}dric Gerbelot, Bruno Loureiro, and Lenka Zdeborov{\'a}.
\newblock Learning curves for the multi-class teacher--student perceptron.
\newblock \emph{Machine Learning: Science and Technology}, 4\penalty0 (1):\penalty0 015019, 2023.

\bibitem[Cover(1965)]{cover1965geometrical}
Thomas~M Cover.
\newblock Geometrical and statistical properties of systems of linear inequalities with applications in pattern recognition.
\newblock \emph{IEEE transactions on electronic computers}, pages 326--334, 1965.

\bibitem[Damian et~al.(2021)Damian, Ma, and Lee]{damian2021label}
Alex Damian, Tengyu Ma, and Jason~D Lee.
\newblock Label noise sgd provably prefers flat global minimizers.
\newblock \emph{Advances in Neural Information Processing Systems}, 34:\penalty0 27449--27461, 2021.

\bibitem[Deng et~al.(2022)Deng, Kammoun, and Thrampoulidis]{deng2022model}
Zeyu Deng, Abla Kammoun, and Christos Thrampoulidis.
\newblock A model of double descent for high-dimensional binary linear classification.
\newblock \emph{Information and Inference: A Journal of the IMA}, 11\penalty0 (2):\penalty0 435--495, 2022.

\bibitem[Deora et~al.(2023)Deora, Ghaderi, Taheri, and Thrampoulidis]{deora2023optimization}
Puneesh Deora, Rouzbeh Ghaderi, Hossein Taheri, and Christos Thrampoulidis.
\newblock On the optimization and generalization of multi-head attention.
\newblock \emph{arXiv preprint arXiv:2310.12680}, 2023.

\bibitem[Donhauser et~al.(2022)Donhauser, Ruggeri, Stojanovic, and Yang]{donhauser2022fast}
Konstantin Donhauser, Nicolo Ruggeri, Stefan Stojanovic, and Fanny Yang.
\newblock Fast rates for noisy interpolation require rethinking the effect of inductive bias.
\newblock In \emph{International Conference on Machine Learning}, pages 5397--5428. PMLR, 2022.

\bibitem[Edelman et~al.(2021)Edelman, Goel, Kakade, and Zhang]{edelman2021inductive}
Benjamin~L Edelman, Surbhi Goel, Sham Kakade, and Cyril Zhang.
\newblock Inductive biases and variable creation in self-attention mechanisms.
\newblock \emph{arXiv preprint arXiv:2110.10090}, 2021.

\bibitem[Edelman et~al.(2024)Edelman, Edelman, Goel, Malach, and Tsilivis]{edelman2024evolution}
Benjamin~L Edelman, Ezra Edelman, Surbhi Goel, Eran Malach, and Nikolaos Tsilivis.
\newblock The evolution of statistical induction heads: In-context learning markov chains.
\newblock \emph{arXiv e-prints}, pages arXiv--2402, 2024.

\bibitem[Fu et~al.(2022)Fu, Dao, Saab, Thomas, Rudra, and R{\'e}]{hippo}
Daniel~Y Fu, Tri Dao, Khaled~K Saab, Armin~W Thomas, Atri Rudra, and Christopher R{\'e}.
\newblock Hungry hungry hippos: Towards language modeling with state space models.
\newblock \emph{arXiv preprint arXiv:2212.14052}, 2022.

\bibitem[Geshkovski et~al.(2023)Geshkovski, Letrouit, Polyanskiy, and Rigollet]{yury_transformers_mathematical}
Borjan Geshkovski, Cyril Letrouit, Yury Polyanskiy, and Philippe Rigollet.
\newblock A mathematical perspective on transformers.
\newblock \emph{arXiv preprint arXiv:2312.10794}, 2023.

\bibitem[Gu and Dao(2023)]{mamba}
Albert Gu and Tri Dao.
\newblock Mamba: Linear-time sequence modeling with selective state spaces.
\newblock \emph{arXiv preprint arXiv:2312.00752}, 2023.

\bibitem[Gunasekar et~al.(2018{\natexlab{a}})Gunasekar, Lee, Soudry, and Srebro]{gunasekar2018characterizing}
Suriya Gunasekar, Jason Lee, Daniel Soudry, and Nathan Srebro.
\newblock Characterizing implicit bias in terms of optimization geometry.
\newblock In \emph{International Conference on Machine Learning}, pages 1832--1841. PMLR, 2018{\natexlab{a}}.

\bibitem[Gunasekar et~al.(2018{\natexlab{b}})Gunasekar, Lee, Soudry, and Srebro]{gunasekar2018implicit}
Suriya Gunasekar, Jason~D Lee, Daniel Soudry, and Nati Srebro.
\newblock Implicit bias of gradient descent on linear convolutional networks.
\newblock \emph{Advances in Neural Information Processing Systems}, 31:\penalty0 9461--9471, 2018{\natexlab{b}}.

\bibitem[Hart et~al.(2000)Hart, Stork, and Duda]{hart2000pattern}
Peter~E Hart, David~G Stork, and Richard~O Duda.
\newblock \emph{Pattern classification}.
\newblock Wiley Hoboken, 2000.

\bibitem[Hastie et~al.(2019)Hastie, Montanari, Rosset, and Tibshirani]{hastie2019surprises}
Trevor Hastie, Andrea Montanari, Saharon Rosset, and Ryan~J Tibshirani.
\newblock Surprises in high-dimensional ridgeless least squares interpolation.
\newblock \emph{arXiv preprint arXiv:1903.08560}, 2019.

\bibitem[Hinton et~al.(2015)Hinton, Vinyals, and Dean]{knowledge-distilation}
Geoffrey Hinton, Oriol Vinyals, and Jeff Dean.
\newblock Distilling the knowledge in a neural network.
\newblock \emph{arXiv preprint arXiv:1503.02531}, 2015.

\bibitem[Javanmard and Soltanolkotabi(2022)]{javanmard2022precise}
Adel Javanmard and Mahdi Soltanolkotabi.
\newblock Precise statistical analysis of classification accuracies for adversarial training.
\newblock \emph{The Annals of Statistics}, 50\penalty0 (4):\penalty0 2127--2156, 2022.

\bibitem[Ji and Telgarsky(2018)]{ji2018risk}
Ziwei Ji and Matus Telgarsky.
\newblock Risk and parameter convergence of logistic regression.
\newblock \emph{arXiv preprint arXiv:1803.07300}, 2018.

\bibitem[Ji and Telgarsky(2020)]{ji2020directional}
Ziwei Ji and Matus Telgarsky.
\newblock Directional convergence and alignment in deep learning.
\newblock \emph{Advances in Neural Information Processing Systems}, 33:\penalty0 17176--17186, 2020.

\bibitem[Ji and Telgarsky(2021)]{ji2021characterizing}
Ziwei Ji and Matus Telgarsky.
\newblock Characterizing the implicit bias via a primal-dual analysis.
\newblock In \emph{Algorithmic Learning Theory}, pages 772--804. PMLR, 2021.

\bibitem[Ji et~al.(2020)Ji, Dud{\'\i}k, Schapire, and Telgarsky]{ji2020gradient}
Ziwei Ji, Miroslav Dud{\'\i}k, Robert~E Schapire, and Matus Telgarsky.
\newblock Gradient descent follows the regularization path for general losses.
\newblock In \emph{Conference on Learning Theory}, pages 2109--2136. PMLR, 2020.

\bibitem[Ji et~al.(2021)Ji, Srebro, and Telgarsky]{ji2021fast}
Ziwei Ji, Nathan Srebro, and Matus Telgarsky.
\newblock Fast margin maximization via dual acceleration.
\newblock In \emph{International Conference on Machine Learning}, pages 4860--4869. PMLR, 2021.

\bibitem[Jurafsky and Martin(2023)]{NLP_stanford_book}
Daniel Jurafsky and James~H. Martin.
\newblock \emph{Speech and Language Processing}.
\newblock Draft, 3 edition, 2023.
\newblock URL \url{https://web.stanford.edu/~jurafsky/slp3/ed3book.pdf}.

\bibitem[Kajitsuka and Sato(2024)]{kajitsuka2024are}
Tokio Kajitsuka and Issei Sato.
\newblock Are transformers with one layer self-attention using low-rank weight matrices universal approximators?
\newblock 2024.

\bibitem[Kim et~al.(2023)Kim, Kim, and Mozafari]{kim2023provable}
Junghwan Kim, Michelle Kim, and Barzan Mozafari.
\newblock Provable memorization capacity of transformers.
\newblock In \emph{The Eleventh International Conference on Learning Representations}, 2023.
\newblock URL \url{https://openreview.net/forum?id=8JCg5xJCTPR}.

\bibitem[Kini et~al.(2021)Kini, Paraskevas, Oymak, and Thrampoulidis]{kini2021label}
Ganesh~Ramachandra Kini, Orestis Paraskevas, Samet Oymak, and Christos Thrampoulidis.
\newblock Label-imbalanced and group-sensitive classification under overparameterization.
\newblock \emph{Advances in Neural Information Processing Systems}, 34:\penalty0 18970--18983, 2021.

\bibitem[Koehler et~al.(2021)Koehler, Zhou, Sutherland, and Srebro]{koehler2021uniform}
Frederic Koehler, Lijia Zhou, Danica~J Sutherland, and Nathan Srebro.
\newblock Uniform convergence of interpolators: Gaussian width, norm bounds and benign overfitting.
\newblock \emph{Advances in Neural Information Processing Systems}, 34:\penalty0 20657--20668, 2021.

\bibitem[Levy and Goldberg(2014)]{levy}
Omer Levy and Yoav Goldberg.
\newblock Neural word embedding as implicit matrix factorization.
\newblock \emph{Advances in neural information processing systems}, 27, 2014.

\bibitem[Li et~al.(2023)Li, Ildiz, Papailiopoulos, and Oymak]{li2023transformers}
Yingcong Li, M.~Emrullah Ildiz, Dimitris Papailiopoulos, and Samet Oymak.
\newblock Transformers as algorithms: Generalization and stability in in-context learning, 2023.

\bibitem[Li et~al.(2024)Li, Huang, Ildiz, Rawat, and Oymak]{li2024mechanics}
Yingcong Li, Yixiao Huang, Muhammed~E Ildiz, Ankit~Singh Rawat, and Samet Oymak.
\newblock Mechanics of next token prediction with self-attention.
\newblock In \emph{International Conference on Artificial Intelligence and Statistics}, pages 685--693. PMLR, 2024.

\bibitem[Li et~al.(2021)Li, Wang, and Arora]{li2021happens}
Zhiyuan Li, Tianhao Wang, and Sanjeev Arora.
\newblock What happens after sgd reaches zero loss?--a mathematical framework.
\newblock \emph{arXiv preprint arXiv:2110.06914}, 2021.

\bibitem[Likhosherstov et~al.(2021)Likhosherstov, Choromanski, and Weller]{likhosherstov2021expressive}
Valerii Likhosherstov, Krzysztof Choromanski, and Adrian Weller.
\newblock On the expressive power of self-attention matrices, 2021.

\bibitem[Liu et~al.(2023)Liu, Xie, Li, and Ma]{liu2023same}
Hong Liu, Sang~Michael Xie, Zhiyuan Li, and Tengyu Ma.
\newblock Same pre-training loss, better downstream: Implicit bias matters for language models.
\newblock In \emph{International Conference on Machine Learning}, pages 22188--22214. PMLR, 2023.

\bibitem[Lyu and Li(2020)]{lyu2020Gradient}
Kaifeng Lyu and Jian Li.
\newblock Gradient descent maximizes the margin of homogeneous neural networks.
\newblock In \emph{International Conference on Learning Representations}, 2020.

\bibitem[Madden et~al.(2024)Madden, Fox, and Thrampoulidis]{madden2024upper}
Liam Madden, Curtis Fox, and Christos Thrampoulidis.
\newblock Upper and lower memory capacity bounds of transformers for next-token prediction.
\newblock \emph{arXiv preprint arXiv:2405.13718}, 2024.

\bibitem[Makkuva et~al.(2024)Makkuva, Bondaschi, Girish, Nagle, Jaggi, Kim, and Gastpar]{makkuva2024attention}
Ashok~Vardhan Makkuva, Marco Bondaschi, Adway Girish, Alliot Nagle, Martin Jaggi, Hyeji Kim, and Michael Gastpar.
\newblock Attention with markov: A framework for principled analysis of transformers via markov chains.
\newblock \emph{arXiv preprint arXiv:2402.04161}, 2024.

\bibitem[Malach(2023)]{malach2023auto}
Eran Malach.
\newblock Auto-regressive next-token predictors are universal learners.
\newblock \emph{arXiv preprint arXiv:2309.06979}, 2023.

\bibitem[Merrill et~al.(2021)Merrill, Ramanujan, Goldberg, Schwartz, and Smith]{merrill2021effects}
William Merrill, Vivek Ramanujan, Yoav Goldberg, Roy Schwartz, and Noah~A Smith.
\newblock Effects of parameter norm growth during transformer training: Inductive bias from gradient descent.
\newblock In \emph{Proceedings of the 2021 Conference on Empirical Methods in Natural Language Processing}, pages 1766--1781, 2021.

\bibitem[Mignacco et~al.(2020)Mignacco, Krzakala, Lu, and Zdeborov{\'a}]{mignacco2020role}
Francesca Mignacco, Florent Krzakala, Yue~M Lu, and Lenka Zdeborov{\'a}.
\newblock The role of regularization in classification of high-dimensional noisy gaussian mixture.
\newblock \emph{arXiv preprint arXiv:2002.11544}, 2020.

\bibitem[Mikolov et~al.(2013)Mikolov, Chen, Corrado, and Dean]{mikolov2013efficient}
Tomas Mikolov, Kai Chen, Greg Corrado, and Jeffrey Dean.
\newblock Efficient estimation of word representations in vector space.
\newblock \emph{arXiv preprint arXiv:1301.3781}, 2013.

\bibitem[Mixon et~al.(2020)Mixon, Parshall, and Pi]{mixon2020neural}
Dustin~G Mixon, Hans Parshall, and Jianzong Pi.
\newblock Neural collapse with unconstrained features.
\newblock \emph{arXiv preprint arXiv:2011.11619}, 2020.

\bibitem[Montanari et~al.(2019)Montanari, Ruan, Sohn, and Yan]{montanari2019generalization}
Andrea Montanari, Feng Ruan, Youngtak Sohn, and Jun Yan.
\newblock The generalization error of max-margin linear classifiers: High-dimensional asymptotics in the overparametrized regime.
\newblock \emph{arXiv preprint arXiv:1911.01544}, 2019.

\bibitem[Muthukumar et~al.(2020)Muthukumar, Narang, Subramanian, Belkin, Hsu, and Sahai]{muthukumar2020classification}
Vidya Muthukumar, Adhyyan Narang, Vignesh Subramanian, Mikhail Belkin, Daniel Hsu, and Anant Sahai.
\newblock Classification vs regression in overparameterized regimes: Does the loss function matter?
\newblock \emph{arXiv preprint arXiv:2005.08054}, 2020.

\bibitem[Nacson et~al.(2018)Nacson, Srebro, and Soudry]{nacson2018stochastic}
Mor~Shpigel Nacson, Nathan Srebro, and Daniel Soudry.
\newblock Stochastic gradient descent on separable data: Exact convergence with a fixed learning rate.
\newblock \emph{arXiv preprint arXiv:1806.01796}, 2018.

\bibitem[Nacson et~al.(2019)Nacson, Lee, Gunasekar, Savarese, Srebro, and Soudry]{nacson2019convergence}
Mor~Shpigel Nacson, Jason Lee, Suriya Gunasekar, Pedro Henrique~Pamplona Savarese, Nathan Srebro, and Daniel Soudry.
\newblock Convergence of gradient descent on separable data.
\newblock In \emph{The 22nd International Conference on Artificial Intelligence and Statistics}, pages 3420--3428. PMLR, 2019.

\bibitem[Neyshabur et~al.(2014)Neyshabur, Tomioka, and Srebro]{neyshabur2014search}
Behnam Neyshabur, Ryota Tomioka, and Nathan Srebro.
\newblock In search of the real inductive bias: On the role of implicit regularization in deep learning.
\newblock \emph{arXiv preprint arXiv:1412.6614}, 2014.

\bibitem[OpenAI(2022)]{chatgpt}
OpenAI.
\newblock Openai: Introducing chatgpt, 2022.
\newblock URL \url{https://openai.com/blog/chatgpt, 2022}.

\bibitem[Oymak et~al.(2023)Oymak, Rawat, Soltanolkotabi, and Thrampoulidis]{prompt-attention}
Samet Oymak, Ankit~Singh Rawat, Mahdi Soltanolkotabi, and Christos Thrampoulidis.
\newblock On the role of attention in prompt-tuning.
\newblock In \emph{International Conference of Machine Learning (ICML)}, 2023.

\bibitem[Pennington et~al.(2014)Pennington, Socher, and Manning]{glove}
Jeffrey Pennington, Richard Socher, and Christopher~D Manning.
\newblock Glove: Global vectors for word representation.
\newblock In \emph{Proceedings of the 2014 conference on empirical methods in natural language processing (EMNLP)}, pages 1532--1543, 2014.

\bibitem[Pesme et~al.(2021)Pesme, Pillaud-Vivien, and Flammarion]{pesme2021implicit}
Scott Pesme, Loucas Pillaud-Vivien, and Nicolas Flammarion.
\newblock Implicit bias of sgd for diagonal linear networks: a provable benefit of stochasticity.
\newblock \emph{Advances in Neural Information Processing Systems}, 34:\penalty0 29218--29230, 2021.

\bibitem[Peterson et~al.(2019)Peterson, Battleday, Griffiths, and Russakovsky]{peterson2019human}
Joshua~C Peterson, Ruairidh~M Battleday, Thomas~L Griffiths, and Olga Russakovsky.
\newblock Human uncertainty makes classification more robust.
\newblock In \emph{Proceedings of the IEEE/CVF International Conference on Computer Vision}, pages 9617--9626, 2019.

\bibitem[Radford et~al.(2018)Radford, Narasimhan, Salimans, Sutskever, et~al.]{radford2018improving}
Alec Radford, Karthik Narasimhan, Tim Salimans, Ilya Sutskever, et~al.
\newblock Improving language understanding by generative pre-training.
\newblock \emph{OpenAI blog}, 2018.

\bibitem[Radford et~al.(2019)Radford, Wu, Child, Luan, Amodei, Sutskever, et~al.]{radford2019language}
Alec Radford, Jeffrey Wu, Rewon Child, David Luan, Dario Amodei, Ilya Sutskever, et~al.
\newblock Language models are unsupervised multitask learners.
\newblock \emph{OpenAI blog}, 1\penalty0 (8):\penalty0 9, 2019.

\bibitem[Rosset et~al.(2003)Rosset, Zhu, and Hastie]{Rosset2003MarginML}
Saharon Rosset, Ji~Zhu, and Trevor~J. Hastie.
\newblock Margin maximizing loss functions.
\newblock In \emph{NIPS}, 2003.

\bibitem[Sagawa et~al.(2020)Sagawa, Raghunathan, Koh, and Liang]{sagawa2020investigation}
Shiori Sagawa, Aditi Raghunathan, Pang~Wei Koh, and Percy Liang.
\newblock An investigation of why overparameterization exacerbates spurious correlations.
\newblock In \emph{International Conference on Machine Learning}, pages 8346--8356. PMLR, 2020.

\bibitem[Sahiner et~al.(2022)Sahiner, Ergen, Ozturkler, Pauly, Mardani, and Pilanci]{sahiner2022unraveling}
Arda Sahiner, Tolga Ergen, Batu Ozturkler, John Pauly, Morteza Mardani, and Mert Pilanci.
\newblock Unraveling attention via convex duality: Analysis and interpretations of vision transformers.
\newblock \emph{International Conference on Machine Learning}, 2022.

\bibitem[Salehi et~al.(2018)Salehi, Abbasi, and Hassibi]{salehi2018precise}
Fariborz Salehi, Ehsan Abbasi, and Babak Hassibi.
\newblock A precise analysis of phasemax in phase retrieval.
\newblock In \emph{2018 IEEE International Symposium on Information Theory (ISIT)}, pages 976--980. IEEE, 2018.

\bibitem[Shamir(2022)]{shamir2022implicit}
Ohad Shamir.
\newblock The implicit bias of benign overfitting.
\newblock In \emph{Conference on Learning Theory}, pages 448--478. PMLR, 2022.

\bibitem[Shannon(1951)]{shannon1951prediction}
Claude~E Shannon.
\newblock Prediction and entropy of printed english.
\newblock \emph{Bell system technical journal}, 30\penalty0 (1):\penalty0 50--64, 1951.

\bibitem[Shannon(1948)]{shannon1948mathematical}
Claude~Elwood Shannon.
\newblock A mathematical theory of communication.
\newblock \emph{The Bell system technical journal}, 27\penalty0 (3):\penalty0 379--423, 1948.

\bibitem[Sharmanska et~al.(2016)Sharmanska, Hern{\'a}ndez-Lobato, Miguel Hernandez-Lobato, and Quadrianto]{soft1}
Viktoriia Sharmanska, Daniel Hern{\'a}ndez-Lobato, Jose Miguel Hernandez-Lobato, and Novi Quadrianto.
\newblock Ambiguity helps: Classification with disagreements in crowdsourced annotations.
\newblock In \emph{Proceedings of the IEEE Conference on Computer Vision and Pattern Recognition}, pages 2194--2202, 2016.

\bibitem[Soudry et~al.(2018)Soudry, Hoffer, Nacson, Gunasekar, and Srebro]{soudry2018implicit}
Daniel Soudry, Elad Hoffer, Mor~Shpigel Nacson, Suriya Gunasekar, and Nathan Srebro.
\newblock The implicit bias of gradient descent on separable data.
\newblock \emph{The Journal of Machine Learning Research}, 19\penalty0 (1):\penalty0 2822--2878, 2018.

\bibitem[Sun et~al.(2022)Sun, Ahn, Thrampoulidis, and Azizan]{sun2022mirror}
Haoyuan Sun, Kwangjun Ahn, Christos Thrampoulidis, and Navid Azizan.
\newblock Mirror descent maximizes generalized margin and can be implemented efficiently.
\newblock \emph{Advances in Neural Information Processing Systems}, 35:\penalty0 31089--31101, 2022.

\bibitem[Sur and Cand{\`e}s(2019)]{sur2019modern}
Pragya Sur and Emmanuel~J Cand{\`e}s.
\newblock A modern maximum-likelihood theory for high-dimensional logistic regression.
\newblock \emph{Proceedings of the National Academy of Sciences}, page 201810420, 2019.

\bibitem[Szegedy et~al.(2016)Szegedy, Vanhoucke, Ioffe, Shlens, and Wojna]{label_smoothing}
Christian Szegedy, Vincent Vanhoucke, Sergey Ioffe, Jon Shlens, and Zbigniew Wojna.
\newblock Rethinking the inception architecture for computer vision.
\newblock In \emph{Proceedings of the IEEE conference on computer vision and pattern recognition}, pages 2818--2826, 2016.

\bibitem[Taheri et~al.(2023)Taheri, Pedarsani, and Thrampoulidis]{taheri2023asymptotic}
Hossein Taheri, Ramtin Pedarsani, and Christos Thrampoulidis.
\newblock Asymptotic behavior of adversarial training in binary linear classification.
\newblock \emph{IEEE Transactions on Neural Networks and Learning Systems}, 2023.

\bibitem[Tan and Bellec(2023)]{tan2023multinomial}
Kai Tan and Pierre~C Bellec.
\newblock Multinomial logistic regression: Asymptotic normality on null covariates in high-dimensions.
\newblock \emph{arXiv preprint arXiv:2305.17825}, 2023.

\bibitem[Tarzanagh et~al.(2023{\natexlab{a}})Tarzanagh, Li, Thrampoulidis, and Oymak]{tarzanagh2023transformers}
Davoud~Ataee Tarzanagh, Yingcong Li, Christos Thrampoulidis, and Samet Oymak.
\newblock Transformers as support vector machines, 2023{\natexlab{a}}.

\bibitem[Tarzanagh et~al.(2023{\natexlab{b}})Tarzanagh, Li, Zhang, and Oymak]{tarzanagh2023maxmargin}
Davoud~Ataee Tarzanagh, Yingcong Li, Xuechen Zhang, and Samet Oymak.
\newblock Max-margin token selection in attention mechanism, 2023{\natexlab{b}}.

\bibitem[Tian et~al.(2023{\natexlab{a}})Tian, Wang, Chen, and Du]{tian2023scan}
Yuandong Tian, Yiping Wang, Beidi Chen, and Simon Du.
\newblock Scan and snap: Understanding training dynamics and token composition in 1-layer transformer, 2023{\natexlab{a}}.

\bibitem[Tian et~al.(2023{\natexlab{b}})Tian, Wang, Zhang, Chen, and Du]{tian2023joma}
Yuandong Tian, Yiping Wang, Zhenyu Zhang, Beidi Chen, and Simon Du.
\newblock Joma: Demystifying multilayer transformers via joint dynamics of mlp and attention.
\newblock \emph{arXiv preprint arXiv:2310.00535}, 2023{\natexlab{b}}.

\bibitem[Touvron et~al.(2023)Touvron, Martin, Stone, Albert, Almahairi, Babaei, Bashlykov, Batra, Bhargava, Bhosale, et~al.]{touvron2023llama}
Hugo Touvron, Louis Martin, Kevin Stone, Peter Albert, Amjad Almahairi, Yasmine Babaei, Nikolay Bashlykov, Soumya Batra, Prajjwal Bhargava, Shruti Bhosale, et~al.
\newblock Llama 2: Open foundation and fine-tuned chat models.
\newblock \emph{arXiv preprint arXiv:2307.09288}, 2023.

\bibitem[Tripuraneni et~al.(2021)Tripuraneni, Adlam, and Pennington]{tripuraneni2021overparameterization}
Nilesh Tripuraneni, Ben Adlam, and Jeffrey Pennington.
\newblock Overparameterization improves robustness to covariate shift in high dimensions.
\newblock \emph{Advances in Neural Information Processing Systems}, 34:\penalty0 13883--13897, 2021.

\bibitem[Vapnik and Chervonenkis(1964)]{vapnik1964note}
Vladimir~N Vapnik and Alexey~Ya Chervonenkis.
\newblock A note on one class of perceptrons.
\newblock \emph{Automation and Remote Control}, 25:\penalty0 774--780, 1964.

\bibitem[Vasudeva et~al.(2024)Vasudeva, Deora, and Thrampoulidis]{deora2024implicit}
Bhavya Vasudeva, Puneesh Deora, and Christos Thrampoulidis.
\newblock Implicit bias and fast convergence rates for self-attention.
\newblock \emph{arXiv preprint arXiv:2402.05738}, 2024.

\bibitem[Vaswani et~al.(2017)Vaswani, Shazeer, Parmar, Uszkoreit, Jones, Gomez, Kaiser, and Polosukhin]{vaswani2017attention}
Ashish Vaswani, Noam Shazeer, Niki Parmar, Jakob Uszkoreit, Llion Jones, Aidan~N Gomez, {\L}ukasz Kaiser, and Illia Polosukhin.
\newblock Attention is all you need.
\newblock \emph{Advances in neural information processing systems}, 30, 2017.

\bibitem[Vaswani et~al.(2019)Vaswani, Bach, and Schmidt]{vaswani2019fast}
Sharan Vaswani, Francis Bach, and Mark Schmidt.
\newblock Fast and faster convergence of sgd for over-parameterized models and an accelerated perceptron.
\newblock In \emph{The 22nd international conference on artificial intelligence and statistics}, pages 1195--1204. PMLR, 2019.

\bibitem[Vershynin(2011)]{vershynin2011lectures}
R.~Vershynin.
\newblock Lectures in geometric functional analysis.
\newblock \emph{Unpublished manuscript. Available at http://www-personal. umich. edu/romanv/papers/GFA-book/GFA-book. pdf}, 2011.

\bibitem[von Oswald et~al.(2022)von Oswald, Niklasson, Randazzo, Sacramento, Mordvintsev, Zhmoginov, and Vladymyrov]{Oswald2022TransformersLI}
Johannes von Oswald, Eyvind Niklasson, E.~Randazzo, Jo{\~a}o Sacramento, Alexander Mordvintsev, Andrey Zhmoginov, and Max Vladymyrov.
\newblock Transformers learn in-context by gradient descent.
\newblock \emph{ArXiv}, abs/2212.07677, 2022.

\bibitem[Wu and Sahai(2024)]{wu2024precise}
David Wu and Anant Sahai.
\newblock Precise asymptotic generalization for multiclass classification with overparameterized linear models.
\newblock \emph{Advances in Neural Information Processing Systems}, 36, 2024.

\bibitem[Xie and Li(2024)]{adam2024implicit2}
Shuo Xie and Zhiyuan Li.
\newblock Implicit bias of adamw: $\backslash ell\_\backslash infty$ norm constrained optimization.
\newblock \emph{arXiv preprint arXiv:2404.04454}, 2024.

\bibitem[Zhang et~al.(2024)Zhang, Zou, and Cao]{adam2024implicit1}
Chenyang Zhang, Difan Zou, and Yuan Cao.
\newblock The implicit bias of adam on separable data.
\newblock \emph{arXiv preprint arXiv:2406.10650}, 2024.

\bibitem[Zhang et~al.(2017{\natexlab{a}})Zhang, Bengio, Hardt, Recht, and Vinyals]{zhang2017understanding}
Chiyuan Zhang, Samy Bengio, Moritz Hardt, Benjamin Recht, and Oriol Vinyals.
\newblock Understanding deep learning requires rethinking generalization, 2017{\natexlab{a}}.

\bibitem[Zhang et~al.(2017{\natexlab{b}})Zhang, Cisse, Dauphin, and Lopez-Paz]{zhang2017mixup}
Hongyi Zhang, Moustapha Cisse, Yann~N Dauphin, and David Lopez-Paz.
\newblock mixup: Beyond empirical risk minimization.
\newblock \emph{arXiv preprint arXiv:1710.09412}, 2017{\natexlab{b}}.

\bibitem[Zhang et~al.(2023)Zhang, Frei, and Bartlett]{zhang2023trained}
Ruiqi Zhang, Spencer Frei, and Peter~L. Bartlett.
\newblock Trained transformers learn linear models in-context, 2023.

\bibitem[Zhao et~al.(2024)Zhao, Behnia, Vakilian, and Thrampoulidis]{yize2024implicit}
Yize Zhao, Tina Behnia, Vala Vakilian, and Christos Thrampoulidis.
\newblock Implicit geometry of next-token prediction: From language sparsity patterns to model representations.
\newblock In \emph{First Conference on Language Modeling}, 2024.

\bibitem[Zhou et~al.(2020)Zhou, Sutherland, and Srebro]{zhou2020uniform}
Lijia Zhou, Danica~J Sutherland, and Nati Srebro.
\newblock On uniform convergence and low-norm interpolation learning.
\newblock \emph{Advances in Neural Information Processing Systems}, 33:\penalty0 6867--6877, 2020.

\end{thebibliography}

\newpage
\appendix

\section{Experiments}\label{sec:exp}

All experiments were conducted on a MacBook Pro equipped with a 2.3 GHz Quad-Core Intel Core i7 processor and 32 GB of memory. The experiments are of relatively small scale and were implemented in Matlab. The code is straightforward to reproduce, following the detailed specifications provided in the subsequent sections. 
For completeness, the code will be made publicly available on Github in the final version of the paper.

\subsection{Additional details on 2D example of Fig. \ref{fig:2d}}
Figure \ref{fig:2d} illustrates a toy 2d example where the embeddings and the hyperplanes defined by each row of $\Wmm$ can be visualized. We used $d=2, m=3, V=5$ and $S_1=S_2=S_3=3$. The support sets of each embedding are shown in the figure color-coded to match the respective decoder hyperplane. Probabilities are assigned randomly. The empirical conditional entropy evaluates to $\Hc=0.8811$ and the matrix of conditional probabilities is visualized in Figure \ref{fig:2d_app}. In the same figure, we also visualize the rows of the directional component $\Wmm$ (Middle) and of the finite component $\Wstar$ (Right). Interpreting the $V\times d$ decoder matrix as the matrix of learned word embeddings, this provides a visualization of their geometry. As per our results, the two word-embedding matrices $\Wstar$ and $\Wmm$ lie on orthogonal subspaces. The geometry of the first depends on the probabilities of in-support tokens, while that of the second depends only on the support set of these probabilities. See also caption of Fig. \ref{fig:2d_app}.

\begin{figure}[t]
    \centering
    \includegraphics[width=\linewidth]{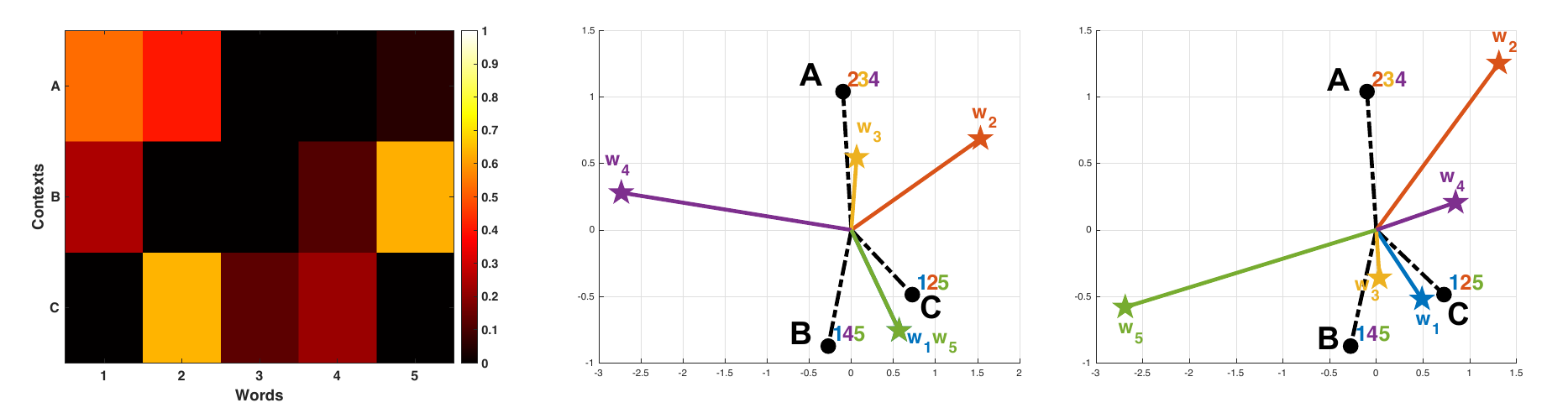}   
\captionsetup{width=\textwidth}
    \caption{Same setup as Fig. \ref{fig:2d}. \emph{\textbf{Left:}} Matrix $\Pbb$ of conditional probabilities of words (cols.) per  context (rows). Each row corresponds to the conditional probability vectors $\pb_j, j\in[m]$. Black entries correspond to off-support words. \emph{\textbf{Middle:}} Shown as $\w_z, z\in[5]$, the rows of the \NTP-SVM solution $\Wmm$ to which GD directionally converges.  \emph{\textbf{Right:}} Shown as $\w_z, z\in[5]$, the rows of the finite parameter $\Wstar$ to which GD iterates projected on $\Ts$ converge to. The geometry of $\Wmm$ depends only on the support-set of $\Pbb$. On the other hand, the geometry of $\Wstar$ depends on the entries of $\Pbb$ for in-support tokens/words. As seen from visualization of $\Pbb$, the words $1$ and $5$ have the same support pattern (i.e., both follow the same contexts $A$ and $B$). Thus, $\w_1=\w_5$ in the Middle plot. However, on the subspace $\Ts$ corresponding to the Right plot, $\w_1\neq\w_5$, which allows matching the different conditional probabilities with which each follows contexts $A$ and $B$.
    }
    \label{fig:2d_app}
\end{figure}

\subsection{Overparameterized setting}
We examine the implicit bias of GD on NTP training with overparameterization on synthetic data generated as follows. We construct dataset with $n=5000$ sequences involving $m=50$ distinct contexts. Each distinct context gets mapped to a randomly generated embedding of dimension   $d=60>m$. We set vocabulary size $V=10$ and each context $j\in[m]$ is followed by $S_j=6, \forall j\in[m]$ possible next-tokens. The support sets $\Sc_j\subset\Vc$ and the probabilities $\ph_{j,z}, z\in\Sc_j$ are chosen randomly; see Fig. \ref{fig:probs} for representative examples from the training dataset. For a fixed realization of the dataset (for which $\Hc\approx1.445$nats), we run GD, normalized GD (NGD), and Adam from random LeCun initialization. For GD, we use learning rate $\eta=0.5$ and for NGD and Adam $\eta=0.01$. For Adam, we also set $\beta_1=0.9, \beta_2=0.99$. We run all algorithms for $1e4$ iterations. For each case, we plot the following as a function of iterations:
\begin{enumerate}
    \item Upper Left: CE loss versus entropy lower bound
    \item Upper Right: parameter norm growth
    \item Lower Left: correlation of $\Wmm$ with iterates $\W_k$ and of ``corrected'' iterates $\W_k-\Wstar$ after substracting the component on $\Hc$ 
    \item Lower Right: convergence of the subspace component $\W_{k,\Ts}=\Pc_{\Ts}(\W_k)$.
\end{enumerate}
Fig. \ref{fig:implicit bias} shows an instance of these. As predicted by our analysis, in this overparameterized setting: CE loss converges to its lower-bound, parameter norm increases, iterates align in direction with $\Wmm$, and the subspace component converges to $\Wstar$.

\begin{figure}[h]
    \centering
    \includegraphics[width=0.65\linewidth]{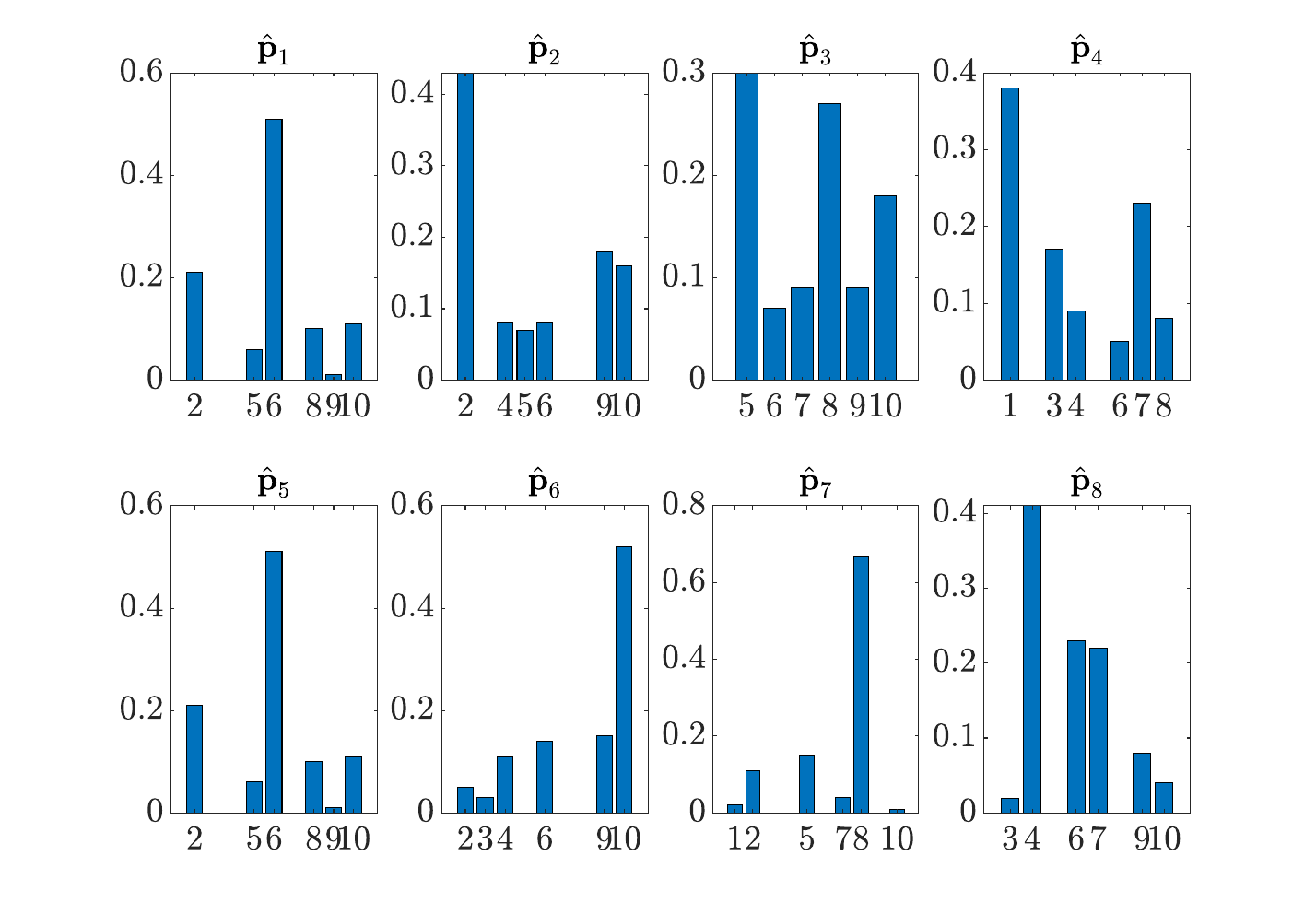} 
\captionsetup{width=\textwidth}
    \caption{Eight randomly picked contexts with their associated  next-token empirical conditional probabilities $\pbh_j$. The indices shown on the x-axis define the support set $\Sc_j$ of each context.
    }
    \label{fig:probs}
\end{figure}

Figure \ref{fig:both} illustrates the same plots, but this time for training over the same dataset with NGD and Adam. We observe same implicit bias, but faster convergence. For NGD, this is consistent with analogous findings (rigorous in that case) for one-hot classification \cite{nacson2019convergence,ji2021characterizing}.

\begin{figure}[h!]
    \centering
    \includegraphics[width=0.65\linewidth]{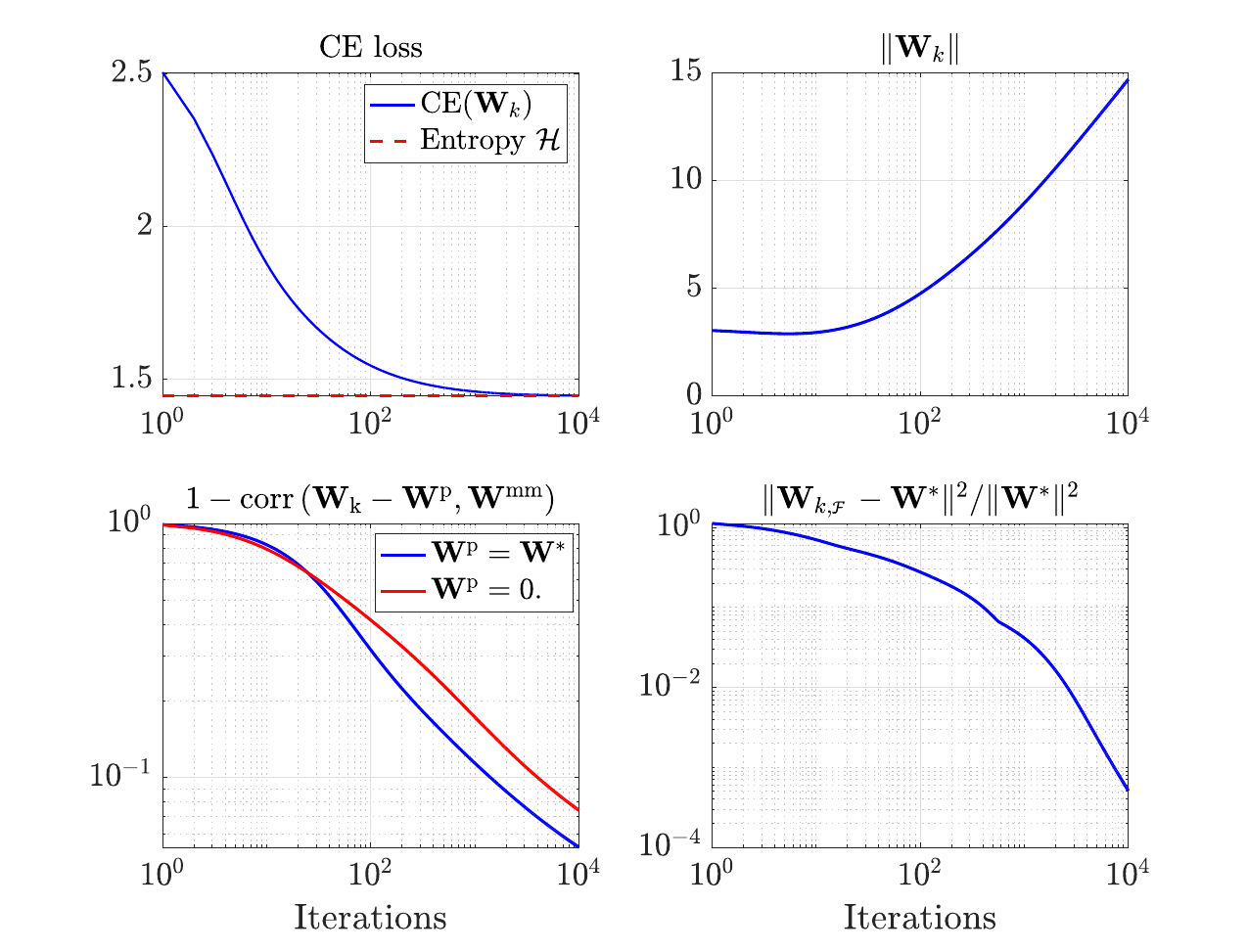}   
\captionsetup{width=\textwidth}
    \caption{Experimental illustration of the implicit bias of GD in \NTP over synthetic data with overparameterization. See App. \ref{sec:exp} for detailed description of the experimental setting. The upper two graphs confirm the predictions of Lemma \ref{lem:norm growth}, while the lower two graphs adhere to the predictions of Theorem \ref{thm:GD main}.
    }
    \label{fig:implicit bias}
\end{figure}

\begin{figure}[h!]
    \centering
    \begin{minipage}[b]{0.49\textwidth}
        \centering
        \includegraphics[width=\linewidth]{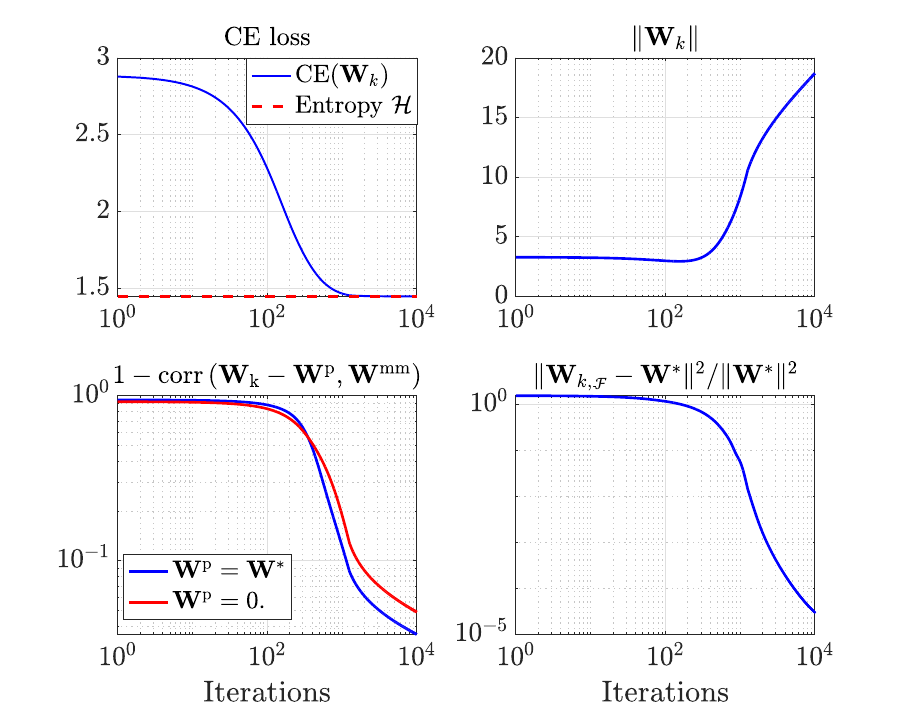}  
        \vspace{-0.3in}
        \caption*{NGD}
    \end{minipage}%
    \hfill
    \begin{minipage}[b]{0.47\textwidth}
        \centering
        \includegraphics[width=\linewidth]{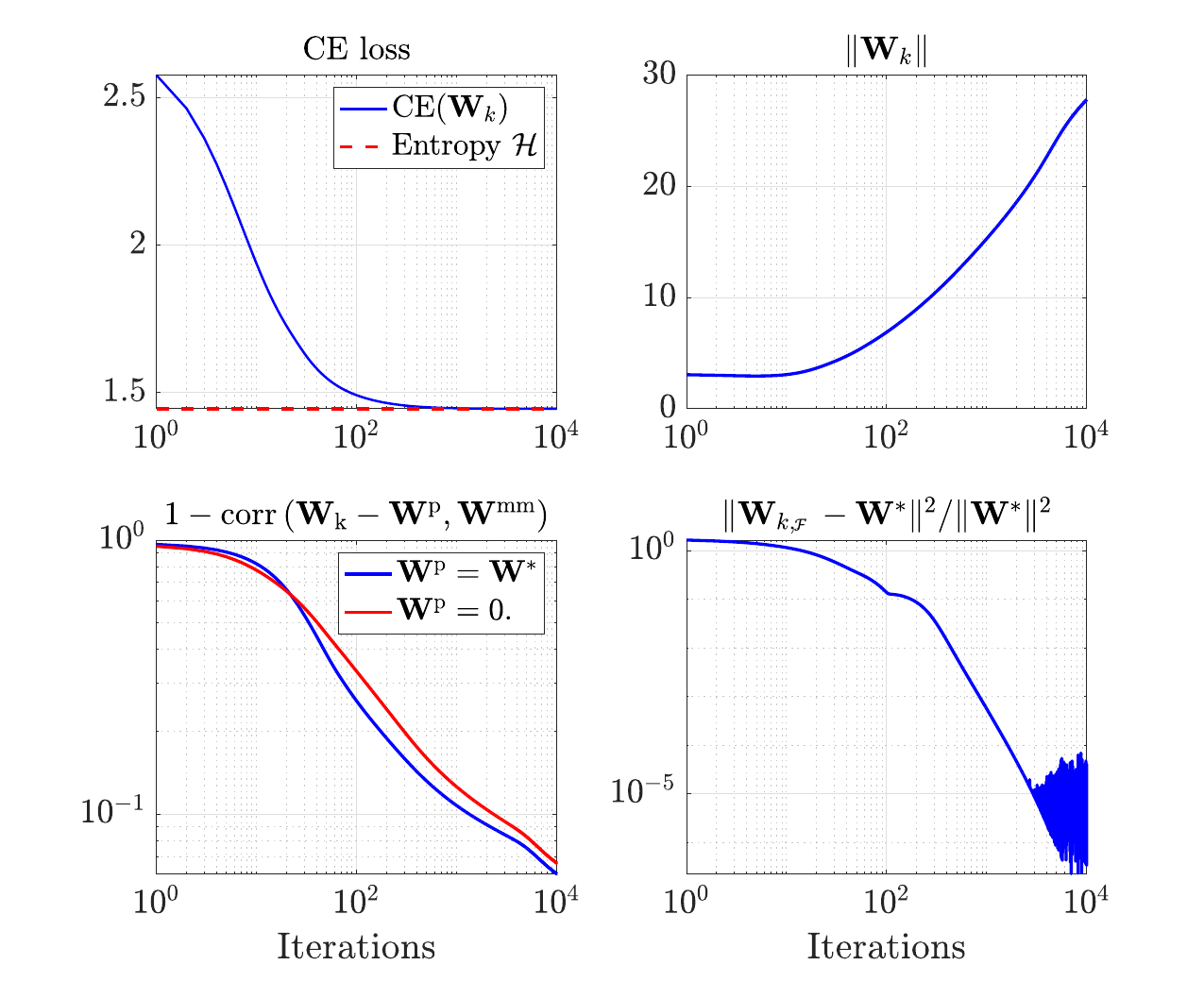}  
        \vspace{-0.3in}
        \caption*{Adam}
    \end{minipage}
    \captionsetup{width=\textwidth}
    \caption{Implicit bias of \emph{normalized} GD (Left) and of Adam (Right) in \NTP over synthetic data with overparameterization. Both exhibit the same implicit bias, but converge faster than GD, with Adam being slightly faster than NGD.}
    \label{fig:both}
\end{figure}

        


\section{Additional related work}\label{sec:rel app}

\vspace{3pt}\noindent\textbf{Implicit bias in transformers.} As already mentioned in Sec. \ref{sec:rel work}, our work is closely related to  \cite{tarzanagh2023transformers}, where the authors investigate the implicit bias of self-attention in transformers. The insight put forth in the prequel \cite{tarzanagh2023maxmargin} is that  softmax attention induces implicit-bias behaviors that bear similarities to vanilla implicit bias of one-hot prediction. Concretely, \cite{tarzanagh2023transformers} studies GD optimization of one-layer self-attention with fixed decoder and \emph{one-hot binary} classification. They show that, in the limit, GD finds attention weights that converge in direction to the solution of an SVM problem that separates optimal tokens 
from non-optimal ones. Their non-convex setting introduces
locally optimal SVM directions to which GD may converge depending on initialization. Different to them, the \NTP setting that we study involves predictions over multiple categories and is \emph{not} one-hot. Also, while they fix the decoder, here, we fix the embeddings. In these respects their results are rather different. More similarities arise when \cite{tarzanagh2023transformers} replace the linear decoder  with a MLP, which they note can induce multiple optimal tokens per sequence. This leads them to formulate a more general token-separating SVM program, which similar to ours confines the separation on a certain data subspace. However, the operational nature of the programs remains different as theirs optimizes attention weights and separates tokens within a sequence, while ours optimizes decoder weights and separates context embeddings based on their respective support sets. More importantly, while \cite{tarzanagh2023transformers} only conjectures the convergence of GD to their general SVM program, we leverage convexity in our setting to prove an analogous statement rigorously. Eventually, as we move lower in our top-down approach and consider architecture-specific embeddings  generated by attention, we anticipate to see integration of our ideas with  theirs. 

Beyond \cite{tarzanagh2023transformers}, there is growing recent research investigating optimization and generalization principles of transformers, e.g., 
\cite{sahiner2022unraveling,edelman2021inductive,likhosherstov2021expressive,Oswald2022TransformersLI, zhang2023trained,akyrek2023what, li2023transformers,
prompt-attention,
tarzanagh2023maxmargin,tarzanagh2023transformers,
deora2023optimization,
tian2023scan,chen2024provably,yury_transformers_mathematical}
 These efforts predominantly employ a `bottom-up' approach that  involves isolating shallow transformers, often with simplifications such as removing MLPs, utilizing single heads instead of multiple, and fixing certain parts while training only a subset of trainable parameters. Most of these studies have focused on classical one-hot supervised settings, and only a handful (e.g., \cite{tian2023scan,tian2023joma})  have seeked extending these 'bottom-up' analyses to \NTP settings. Yet, their primary emphasis remains on uncovering the role of attention and how attention weights evolve during training. Instead, our approach uniquely emphasizes the \NTP training paradigm itself, shifting the focus from the intricacies of specific transformer architectures. 

Upon completing this paper, we became aware of independent contemporaneous research by Li et al. \cite{li2024mechanics} that also examines the implicit bias of self-attention with a fixed linear decoder in next-token prediction scenarios. Unlike our study which utilizes the widely adopted CE loss, their approach is based on log-loss, which renders the training loss convex, a similarity shared with our model despite the inclusion of self-attention. Both our results and those of Li et al. substantiate the conjecture posited by Tarzanagh and colleagues \cite{tarzanagh2023transformers}, albeit in very distinct settings. Notably, contrary to both \cite{tarzanagh2023maxmargin} and \cite{li2024mechanics}, we unveil the optimization intricacies of the NTP  paradigm, even within the simplest linear settings.

\vspace{3pt}\noindent\textbf{Classification with soft labels.}
Unlike one-hot classification, soft-label classification associates each example with a probability vector, where each entry represents the likelihood of a corresponding label characterizing the example. Although arguably less prevalent than one-hot (or hard-label) classification, soft-label classification arises in various contexts, including modeling human confusion during crowd-sourcing \cite{peterson2019human,soft1,soft2}, knowledge distillation \cite{knowledge-distilation}, label smoothing \cite{label_smoothing}, and mixup \cite{zhang2017mixup}. Our model of last-token prediction also falls within this setting. Specifically, our approach is most closely related to soft-labels generated by averaging annotators' hard labels \cite{peterson2019human}, rather than following the winner-takes-all rule to assign labels. \cite{peterson2019human} and follow-up work have provided empirical evidence that using probabilistic soft labels generated from crowd annotations for training leads to improved performance in terms of model generalization, calibration, and robustness to out-of-distribution data. To the best of our knowledge, no prior work has investigated the implicit bias of gradient descent in this or other soft-label classification settings; thus, our results are of direct relevance to these contexts as well.

\section{Autoregressive setting}\label{sec:AR}

For concreteness and simplified notation, in the paper's main body we focus on \NTP over sequences of fixed length. 
We show here that this encompasses the autoregressive (i.e., sequential) setting with minimal changes. This also emphasizes the role played in our results by the sequence length.

As pointed in \eqref{eq:intro CE}, the full autoregressive \NTP objective averages $T$ individual losses (without loss of generality assume sequences of equal maximum length $T$). In order to make our analysis applicable, we first need to express \eqref{eq:intro CE} in terms of \emph{unique} contexts. Mirroring the notations in Sec. \ref{sec:setup}, define the following for $t\in[T-1]$:
\begin{itemize}
    \item $m_t, t\in[T-1]$ is the number of \emph{distinct} contexts of size $t$. Note that $m_1\geq m_2 \geq \cdots \geq  m_{T-1}.$
    \item $m=\sum_{t=1}^{T-1}m_t$ is the total number of distinct contexts in the dataset
    \item $\hd_{t,j}:=\hb_{\thetab}(\xd_{j,t}), t\in[T-1],j\in[m_t]$ is the embedding of the $j$-th (among all $t$-long contexts) distinct context $\xd_{j,t}$.
    \item $\pih_{j,t}$ is the empirical probability of $\xd_{j,t}.$
    \item $\ph_{j,t,z}$ is the empirical probability that context $\xd_{j,t}$ is followed by token $z\in\Vc.$ 
    \item $\Sc_{j,t}$ is the support set of the next-token distribution of context $\xd_{j,t}.$
\end{itemize}
With this notation, the NTP objective becomes
\begin{align}
        \CE= - \sum_{t\in[T-1]}\sum_{j\in[m_t]}\pih_{t,j}\sum_{z\in\Sc_{j,t}}\ph_{t,j,z}\log\left(\sfti{z}{\W \hd_{t,j}}\right)\,.\nn
    \end{align}
To continue enumerate the multi-set $\Ic:=\{i=(j,t)\,|\,t\in[T-1], j\in[m_t]\}$. We may then rewrite the above as
\begin{align}\label{eq:AR_ntp}
        \CE= - \sum_{i\in\Ic}\pih_{i}\sum_{z\in\Sc_{i}}\ph_{i,z}\log\left(\sfti{z}{\W \hd_{i}}\right)\,.\nn
    \end{align}
At this point note that this is of identical form to \eqref{eq:CE2}. Consequently, the definitions (e.g., \NTP-separability, \NTP-margin) and results derived in the main body for sequences of fixed length are applicable to the AR setting, extending mutatis mutandis.

\begin{remark}[The role of sequence length.]
    Despite the above reduction of the AR setting to the fixed-length setting, it is crucial to recognize that sequence length remains a significant factor in the AR model. Specifically, it influences the formulation through support sets and their associated probabilities. As sequences extend in length, their corresponding support sets generally become sparser, indicative of less ambiguity in predicting the next token. This dynamic is captured by Shannon's inequality, 
    \[\ent_t \geq \ent_{t+1}, \text{ where } \ent_t = -\sum_{j \in [m_t]} \sum_{z \in \Sc_{t,j}^\ell} \pi_{t,j} \ph_{t,j,z}\log(\ph_{t,j,z}),
    \]
    reflecting the incremental reduction in entropy as sequence length increases.
\end{remark}




\section{Notations}\label{sec:notation}
Throughout, lowercase and uppercase bold letters (e.g., $\ab$ and $\A$) represent vectors and matrices, respectively. $\inp{\cdot}{\cdot}$ and $\norm{\cdot}$ denote Euclidean inner product and  norm, respectively. For matrix $\A$, we denote its pseudoinverse as $\A^\dagger$. All logarithms are natural logarithms (base $e$). We denote $\eb_v$ the $v$-th standard basis vector in $\R^V$. $\Delta^{V-1}$ denotes the $V$-dimensional unit simplex and $\sft{}:\R^V\rightarrow\Delta^{V-1}$ the softmax map:
\[
\sft{\ab} = [\sfti{1}{\ab},\ldots,\sfti{V}{\ab}]^\top,\qquad \text{with}~~\sfti{v}{\ab}=\frac{e^{\eb_v^\top\ab}}{\sum_{v'\in[V]}e^{\eb_{v'}^\top\ab}}\,.
\]
As explained in Section \ref{sec:setup} we represent a training set as 
\[
\Tc_m:=\{(\hd_j,\pih_j,\ph_{j,z\in\Vc})\}_{j\in[m]}\,.
\]
We assume that embeddings are bounded and denote
\[
M:=\sqrt{2}\max_{j\in[m]}\|\hd_j\|\,.
\]
Given $\Tc_m$, let
\[
{\Ts}=\operatorname{span}\left(\Big\{(\eb_z-\eb_{z'})\hd_j^\top\,:\,z\neq z'\in\Sc_j, j\in[m]\Big\}\right)
\]
a subspace of $V\times d$ matrices and $\Ts^\perp$ its orthogonal complement. Denote $\pt,\ptp$ the orthogonal projections onto $\Ts$ and $\Ts^\perp$, respectively. For convenience, for $\W\in\R^{V\times d}$, we denote
\[
\W_\Ts:=\pt(\W)\qquad\text{and}\qquad\W_\perp=\ptp(\W)\,.
\]
Define
\begin{align}\label{eq:CE_T}
    \CE_\Ts(\W) = \sum_{j\in[m]}\pih_j\sum_{z\in\Sc_j}\ph_{j,z}\log\left(1+\sum_{z\neq z}e^{-(\eb_z-\eb_{z'})^\top\W\hd_j}\right)\,.
\end{align}
Clearly, for all $\W\in\R^{V\times d},$ it holds $\CE(\W)\geq \CE_\Ts(\W)$. Note also that for all $\W\in\Ts$ and for all $\Ws\in\Ts^\perp$ that satisfy Eq. \eqref{eq:Wmm condition equal}, it holds $\CE_\Ts(\W)=\lim_{R\rightarrow\infty}\CE(\W+R\Ws)$. Thus, under \NTP compatibility and \NTP separability, 
\begin{align}\label{eq:inf CE_T=ent}
\inf_{\W\in\Ts}\CE_\Ts(\W) = \inf_\W\CE(\W) = \ent.    
\end{align}

\section{Proofs}\label{sec:proofs}
\subsection{Gradient Descent}\label{sec:proof gd}

Throughout we assume GD is ran with step-size $\eta\leq 1/(2L)$ where $L$ is the smoothness of CE loss.  This condition is not explicitly mentioned thereafter.

\subsubsection{Auxiliary Lemmata}


The following result follows from standard optimization analysis  for smooth convex functions specialized to functions that do not attain their infimum. The version presented here is adopted from Lemma 2 in \cite{ji2020gradient}.
\begin{lemma}\label{lem:CE reaches min}
    It holds
    \[
    \lim_{k\rightarrow\infty}\CE(\W_k) = \inf_{\W}\CE(\W)
    \]
    and also $\lim_{k\rightarrow\infty}\|\W_k\|=\infty$.
\end{lemma}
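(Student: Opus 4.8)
\textbf{Proof plan for Lemma \ref{lem:CE reaches min}.}

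The statement has two parts: (i) $\CE(\W_k)\to\inf_\W\CE(\W)$, and (ii) $\|\W_k\|\to\infty$. The plan is to obtain (i) from standard convex-smooth optimization analysis and then derive (ii) from (i) together with Proposition \ref{propo:sep}.

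For part (i), I would invoke the descent lemma: since $\CE$ is $L$-smooth and $\eta\le 1/(2L)$, the iterates satisfy the monotone decrease $\CE(\W_{k})\le\CE(\W_{k-1})-\tfrac{\eta}{2}\|\nabla\CE(\W_{k-1})\|^2$, so $\CE(\W_k)$ is nonincreasing and the gradients are square-summable. By convexity, for any fixed comparator $\W$ one has the standard telescoping bound $\sum_{k}\big(\CE(\W_k)-\CE(\W)\big)\le \tfrac{1}{2\eta}\|\W_0-\W\|^2$, which combined with monotonicity gives $\CE(\W_k)\le \CE(\W)+\tfrac{1}{2\eta k}\|\W_0-\W\|^2$. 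Taking $k\to\infty$ and then infimizing over $\W$ yields $\lim_k\CE(\W_k)\le\inf_\W\CE(\W)$; the reverse inequality is trivial. This is exactly the content cited as ``Lemma 2 in \cite{ji2020gradient},'' so I would just cite it; the only mild subtlety is that the infimum is not attained (the CE loss is bounded below by $\Hc$ but never equals it at finite $\W$ under our standing assumptions), which is precisely why the argument is phrased via a vanishing upper bound rather than via a minimizer.

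For part (ii), I would argue by contradiction. Suppose $\|\W_k\|\not\to\infty$; then there is a bounded subsequence $\W_{k_\ell}$, and by Bolzano--Weierstrass a further subsequence converging to some finite $\W_\infty$. By continuity of $\CE$, $\CE(\W_{k_\ell})\to\CE(\W_\infty)$, so by part (i) we get $\CE(\W_\infty)=\inf_\W\CE(\W)=\Hc$ (the last equality using \NTPH-compatibility and \NTP-separability, cf. Proposition \ref{propo:sep} and Eq. \eqref{eq:inf CE_T=ent}). But Proposition \ref{propo:sep} states that every \emph{finite} $\W$ satisfies $\CE(\W)>\Hc$ strictly. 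This contradicts $\CE(\W_\infty)=\Hc$, so $\|\W_k\|\to\infty$.

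The main obstacle is essentially bookkeeping rather than conceptual: one must make sure the convexity/smoothness argument is run for a function whose infimum is not attained, so that the standard ``distance to minimizer'' telescoping has to be replaced by ``distance to an arbitrary comparator, then take infimum.'' Since this is already packaged in \cite{ji2020gradient}, the cleanest exposition is to cite that lemma for (i) and give the short compactness-plus-Proposition-\ref{propo:sep} argument for (ii).
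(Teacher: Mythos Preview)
Your proposal is correct and matches the paper's approach: the paper simply states that the lemma ``follows from standard optimization analysis for smooth convex functions specialized to functions that do not attain their infimum,'' citing Lemma~2 of \cite{ji2020gradient}, and your write-up faithfully unpacks that citation---the convex-smooth telescoping for part~(i) and the compactness-plus-``infimum not attained'' contradiction for part~(ii). The only cosmetic difference is that you phrase the non-attainment of the infimum via Proposition~\ref{propo:sep} (which is fine under the standing \NTP-separability/compatibility assumptions of Section~\ref{sec:gd}), whereas the paper just packages it into the cited reference.
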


In the lemma below, we collect some useful and simple-to-show properties of the GD and regularization paths. These are adaptations of corresponding results for one-hot binary classification over general non-separable data established in \cite{ji2018risk}.

\begin{lemma}\label{lem:iterate limit properties}
    Suppose conditions \eqref{eq:Wmm condition} hold for some $\Ws$. Also, that there exists $\Wf=\Wstar\in\Ts$ satisfying condition \eqref{eq:Wf condition}. The following hold:
    \begin{enumerate}
        \item $\CE_\Ts(\Wstar)=\inf_{\W\in\Ts}\CE_\Ts(\W)=\Hc,$
        \item $\Wstar$ is the \emph{unique} minimizer of $\CE_\Ts$ on the subspace $\Ts$,
        \item $\lim_{k\rightarrow\infty}\Pc_\Ts(\W_k) = \Wstar,$ where $\W_k$ are GD iterates,
        \item $\lim_{k\rightarrow\infty}\|\Pc_\perp(\W_k)\| = \infty,$
        \item $\lim_{B\rightarrow\infty}\Pc_\Ts(\What_B) = \Wstar,$ where $\What_B$ is the reguarlized solution \eqref{eq:reg-path},
        \item $\lim_{B\rightarrow\infty}\|\Pc_\perp(\What_B)\| = \infty.$
    \end{enumerate} 
    
\end{lemma}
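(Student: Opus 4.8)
\textbf{Proof proposal for Lemma \ref{lem:iterate limit properties}.}

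The plan is to establish the six claims roughly in the order listed, since each builds on the previous ones. For item 1, I would first note that by definition $\CE_\Ts$ in \eqref{eq:CE_T} depends only on the differences $(\eb_z - \eb_{z'})^\top \W \hd_j$ for $z \neq z' \in \Sc_j$, hence only on $\pt(\W)$; so it suffices to evaluate at $\Wstar \in \Ts$. Plugging the \NTPH-compatibility identity \eqref{eq:Wf condition} directly into \eqref{eq:CE_T} gives $\CE_\Ts(\Wstar) = \sum_j \pih_j \sum_{z \in \Sc_j} \ph_{j,z}\log\big(\sum_{z' \in \Sc_j} \ph_{j,z'}/\ph_{j,z}\big) = \sum_j \pih_j \sum_{z\in\Sc_j}\ph_{j,z}\log(1/\ph_{j,z}) = \Hc$, where I used $\sum_{z'\in\Sc_j}\ph_{j,z'} = 1$. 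Combined with the general bound $\CE(\W) \geq \CE_\Ts(\W) \geq \Hc$ (the latter because $\CE_\Ts$ is itself a cross-entropy over the restricted supports, so bounded below by the corresponding entropy) and \eqref{eq:inf CE_T=ent}, this shows $\Wstar$ is a minimizer of $\CE_\Ts$ on $\Ts$ achieving value $\Hc$.

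For item 2 (uniqueness), I would argue that $\CE_\Ts$ restricted to $\Ts$ is \emph{strictly} convex. The key point is that equality in the entropy lower bound forces, for each $j$ and each $z \in \Sc_j$, that $\sfti{z}{\W\hd_j}$ restricted-and-renormalized over $\Sc_j$ equals $\ph_{j,z}$, i.e. the logit differences $(\eb_z - \eb_{z'})^\top \W \hd_j$ are pinned down to $\log(\ph_{j,z}/\ph_{j,z'})$ for all in-support pairs. By the very definition of $\Ts$ as $\operatorname{span}\{(\eb_z-\eb_{z'})\hd_j^\top\}$, specifying all these inner products $\inp{\W}{(\eb_z-\eb_{z'})\hd_j^\top}$ determines $\W$ uniquely within $\Ts$ — that is exactly the statement that a vector in a subspace is determined by its inner products against a spanning set. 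Hence $\Wstar$ is the unique minimizer.

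For items 3 and 4, I would invoke Lemma \ref{lem:CE reaches min}: $\CE(\W_k) \to \Hc$. Decompose $\W_k = \pt(\W_k) + \ptp(\W_k)$. Since $\CE(\W_k) \geq \CE_\Ts(\W_k) = \CE_\Ts(\pt(\W_k)) \geq \Hc$, we get $\CE_\Ts(\pt(\W_k)) \to \Hc$. By strict convexity / the uniqueness from item 2, together with coercivity of $\CE_\Ts$ on $\Ts$ modulo the fact that its sublevel sets within $\Ts$ are bounded (which follows because moving away from $\Wstar$ in $\Ts$ strictly increases some logit-difference gap and hence the loss — this needs a short compactness argument: on the unit sphere of $\Ts$ the directional "worst-case growth" is bounded below), we conclude $\pt(\W_k) \to \Wstar$, giving item 3. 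For item 4, note that if $\|\ptp(\W_k)\|$ stayed bounded along a subsequence, then $\W_k$ would be bounded along that subsequence, so $\CE(\W_k)$ along it would converge to $\CE$ evaluated at a finite limit point, which by Proposition \ref{propo:sep} is strictly greater than $\Hc$ — contradicting $\CE(\W_k)\to\Hc$. Items 5 and 6 are proved by the identical arguments with $\What_B$ in place of $\W_k$: Theorem \ref{thm:reg-path}'s proof already shows $\CE(\What_B) \to \Hc$ (via the genie bound \eqref{eq:reg path up main}), so the same decomposition and compactness reasoning applies verbatim.

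The main obstacle I anticipate is the compactness/coercivity argument underlying items 3 and 5: showing that $\CE_\Ts(\pt(\W_k)) \to \Hc$ actually forces $\pt(\W_k) \to \Wstar$ rather than merely driving some logit differences toward their targets while others drift. This requires verifying that $\CE_\Ts$ has bounded sublevel sets within $\Ts$ — equivalently, that $\CE_\Ts(\W) - \Hc$ grows at least linearly (up to constants) in $\|\W - \Wstar\|$ for $\W \in \Ts$ far from $\Wstar$ — which follows from the definition of $\Ts$ (every direction in $\Ts$ changes at least one in-support logit difference, and the single-term lower bound $\pih_j \ph_{j,z}\log(\ph_{j,z} e^{\ell_{j,z'}-\ell_{j,z}})$ as used in Case (i) of the proof of Theorem \ref{thm:reg-path} then gives the growth). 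Making this quantitative is the one genuinely non-routine step; everything else is bookkeeping with the entropy lower bound and Lemma \ref{lem:CE reaches min}.
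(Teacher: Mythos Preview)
Your proposal is correct and tracks the paper's argument closely. Two small points worth noting. First, your flagged ``main obstacle'' (coercivity of $\CE_\Ts$ on $\Ts$, needed for item 3) actually comes for free from item 2: a convex function with a \emph{unique} minimizer automatically has compact sublevel sets, since a nontrivial recession direction would produce a whole ray of minimizers. The paper leans on this implicitly, writing only ``since $\Wstar$ is unique, the desired follows'' (and citing \cite{ji2020gradient}); your Case-(i)-style growth estimate is correct but unnecessary. Second, for item 2 the paper takes a slight detour --- it passes to $\W_1 + R\Ws$ and uses $\lim_{R\to\infty} \CE(\W_1+R\Ws) = \Hc$ on the full softmax to force $\W_1$ to satisfy \eqref{eq:Wf condition} --- whereas your direct argument via the entropy equality for the restricted-support softmax is cleaner and does not invoke $\Ws$ at all; both routes end at $\W_1 - \Wstar \in \Ts \cap \Ts^\perp = \{0\}$.
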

\begin{proof}
    It is easy to check by direct substitution of $\Wstar$ in \eqref{eq:CE_T} and use of \eqref{eq:Wf condition} that $\CE_\Ts(\Wstar)=\ent$. This and \eqref{eq:inf CE_T=ent} show the first claim.

    The first claim shows $\Wstar$ is a minimizer. Suppose for the sake of contradiction there is a different minimizer $\Wstar\neq \W_1\in\Ts$. Then, since $\CE_\Ts(\W_1)=\ent$, it also holds for $\W_R:=\W_1+R\Ws$ that $\lim_{R\rightarrow\infty}\CE(\W_R)=\Hc$. In turn, this implies  for all $j\in[m]$:
    
\begin{align}
    \lim_{R\rightarrow\infty}\sfti{z}{\W_R\hd_j}=\ph_{j,z}, \forall z\in\Sc_j\,,\qquad\text{and}\qquad
    \lim_{R\rightarrow\infty}\sfti{v}{\W_R\hd_j}=0, \forall v\notin\Sc_j\,.\nn
\end{align}    
The first condition gives then that $\W_1$ must satisfy \eqref{eq:Wf condition}. Since $\Wstar$ also satisfies these equations, denoting $\W_\Delta=\Wstar-\W_1\neq 0$, it holds:
\[
\inp{\W_\Delta}{(\eb_z-\eb_{z'})^\top\hd_j)} = 0, \,\forall j\in[m], z\neq z'\in\Sc_j.
\]
But $\W_\Delta\in\Ts$, so this forms a contradiction. Hence, $\Wstar$ is unique solution in $\Ts$ of \eqref{eq:Wf condition} and unique minimizer of $\CE_\Ts$ on the subspace $\Ts.$

The proof of the third claim follows the same way as the proof of part (1) of Thm.~15 of \cite{ji2020gradient}. For completeness: It follows by the lemma's assumptions and Lemma \ref{lem:CE reaches min} that $\lim_{k\rightarrow\infty}\CE(\W_k)=\Hc$. Combining with the first claim of the lemma yields $\lim_{k\rightarrow\infty}\CE(\W_k)=\CE_\Ts(\Wstar)$. Since $\CE_\Ts(\W_k)\leq\CE(\W_k)$, this finally gives 
\[\lim_{k\rightarrow\infty}\CE_\Ts(\W_k)=\lim_{k\rightarrow\infty}\CE_\Ts\left(\Pc_\Ts(\W_k)\right)=\CE_\Ts(\Wstar).\]
Since $\Wstar$ is  unique by the second claim, the desired then follows.

For the fourth claim, recall from Lemma \ref{lem:CE reaches min} that $\lim_{k\rightarrow\infty}\|\W_k\|=\infty$. From the previous claim, we also have $\lim_{k\rightarrow\infty}\|\Pc_\Ts(\W_k)\|<C$ for some constant $C>\|\Wstar\|$. Thus, the desired follows by applying the fact that $\|\W_k\|=\|\Pc_\Ts(\W_k)\|+\|\Pc_\perp(\W_k)\|.$

The proof of the last two claim is exactly same as that of the third and fourth claim. Only now use the facts that $\lim_{B\rightarrow\infty}\CE(\W_B)=\ent$ and $\lim_{B\rightarrow\infty}\|\W_B\|=\infty$ (see proof of Theorem \ref{thm:reg-path}). 
\end{proof}

\subsubsection{Key Lemma}
\begin{lemma}\label{lem:key}
    Let $\W_k$ denote the GD iterate at iteration $k$. Recall the decomposition $\W_k=\Pc_{\Ts}(\W_{k}) + \Pc_\perp(\W_k) = \W_{k,\Ts}+\W_{k,\perp}$. Fix any $\alpha\in(0,1)$. There exists large enough $R=R(\alpha)$ and $k_0=k_0(R)$ such that for any $k\geq k_0$, it holds that  $\|\W_{k,\perp}\|\geq R$ and
    \begin{align}\label{eq:main claim}
        \CE\big(\,\W_{k,\Ts}+(1+\alpha){\|\W_{k,\perp}\|}\overline\Wmm\,\big) \leq \CE(\W_k)\,.
    \end{align}
\end{lemma}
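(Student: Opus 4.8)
\textbf{Proof plan for Lemma \ref{lem:key}.}

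The plan is to mimic the comparison-point argument from the proof of Theorem \ref{thm:reg-path}, but now using the \emph{actual} GD iterate as the base point on the subspace $\Ts$ and perturbing only along $\Ts^\perp$. Concretely, for a given $\alpha\in(0,1)$, set $\W_k^{\rm genie}:=\W_{k,\Ts}+(1+\alpha)\|\W_{k,\perp}\|\,\overline\Wmm$, which has a strictly larger $\Ts^\perp$-component in norm than $\W_k$ itself (namely $(1+\alpha)\|\W_{k,\perp}\|$ versus $\|\W_{k,\perp}\|$), while its $\Ts$-component is identical to that of $\W_k$. The goal \eqref{eq:main claim} is then the statement that pushing further along the max-\NTP-margin direction, from the iterate's own subspace footprint, can only decrease the loss in the regime of large $\|\W_{k,\perp}\|$. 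Because $\CE$ is convex and smooth and GD converges to the infimum $\Hc$ (Lemma \ref{lem:CE reaches min}), we know $\Pc_\Ts(\W_k)\to\Wstar$ and $\|\W_{k,\perp}\|\to\infty$ (Lemma \ref{lem:iterate limit properties}, parts 3--4), so for $k\ge k_0$ the base point $\W_{k,\Ts}$ is uniformly close to $\Wstar$.

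The key technical device is the decomposition into the ``adjusted loss'' $\CE_\perp(\W):=\CE(\W)-\CE(\Pc_\Ts(\W))$, as hinted in the text after Theorem \ref{thm:GD main}. Since both $\W_k$ and $\W_k^{\rm genie}$ share the same $\Ts$-component, proving \eqref{eq:main claim} reduces to showing $\CE_\perp(\W_k^{\rm genie})\le\CE_\perp(\W_k)$, i.e., controlling only the gap above the subspace-restricted loss. For the \emph{upper bound} on $\CE_\perp(\W_k^{\rm genie})$: along $\overline\Wmm\in\Ts^\perp$, the in-support logit differences are unchanged by \eqref{eq:Wmm condition equal}, while each out-of-support term is suppressed by $e^{-(1+\alpha)\|\W_{k,\perp}\|\cdot\gamma_{\rm mm}}$ up to a constant $C'=C'(\|\Wstar\|, M)$ absorbing the bounded $\Ts$-component, where $\gamma_{\rm mm}=1/\|\Wmm\|$ is the \NTP-margin; hence $\CE_\perp(\W_k^{\rm genie})\le C' e^{-(1+\alpha)\|\W_{k,\perp}\|\gamma_{\rm mm}}$ via $\log(1+x)\le x$, exactly as in \eqref{eq:reg path up main}. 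For the \emph{lower bound} on $\CE_\perp(\W_k)$: I would argue that $\W_{k,\perp}/\|\W_{k,\perp}\|$ must be asymptotically aligned with $\overline\Wmm$ — but this is precisely what we are trying to prove, so instead I would argue \emph{non-asymptotically}: either $\W_{k,\perp}/\|\W_{k,\perp}\|$ is already within distance $\delta$ of $\overline\Wmm$ for a $\delta$ chosen small relative to $\alpha$ (in which case the genie's extra factor $(1+\alpha)$ in the exponent dominates the $O(\delta)$ loss in the margin and \eqref{eq:main claim} holds directly by comparing the two exponential bounds), or $\W_{k,\perp}/\|\W_{k,\perp}\|$ is $\delta$-far from $\overline\Wmm$, in which case — by the uniqueness and strong convexity of the \ref{eq:SVM} solution, exactly as in Case (i)/(ii) of the Theorem \ref{thm:reg-path} proof — there is a context $j$ and a token violating either a margin or an orthogonality constraint by a fixed amount, forcing $\CE_\perp(\W_k)\ge c\, e^{-(1-\delta')\|\W_{k,\perp}\|\gamma_{\rm mm}}$ or even $\CE_\perp(\W_k)\ge c$, both of which exceed the genie's upper bound for $\|\W_{k,\perp}\|$ large, contradicting... no — here we want the genie to be \emph{better}, so this case simply confirms \eqref{eq:main claim} because the genie achieves a strictly smaller loss than $\W_k$ does.

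The main obstacle I anticipate is making the ``either close or far from $\overline\Wmm$'' dichotomy quantitative and consistent with the choice of $\alpha$: one must pick $\delta=\delta(\alpha)$ small enough that in the ``close'' regime the exponent improvement $(1+\alpha)\gamma_{\rm mm}$ beats the degraded margin $(1-c\delta)\gamma_{\rm mm}$ of the iterate, \emph{and} simultaneously large enough that the ``far'' regime inherits a fixed constant-order violation from the strong convexity of \ref{eq:SVM}; then $R=R(\alpha)$ is chosen so that $C' e^{-(1+\alpha)R\gamma_{\rm mm}}$ sits below the iterate's lower bound, and finally $k_0=k_0(R)$ comes from $\|\W_{k,\perp}\|\to\infty$ and $\W_{k,\Ts}\to\Wstar$. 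A secondary subtlety is that $\W_{k,\Ts}$ is only \emph{approximately} $\Wstar$, so the in-support identity $\sum_{z'\in\Sc_j}e^{\ell_{j,z'}-\ell_{j,z}}=1/\ph_{j,z}$ used in the Theorem proof holds only up to a vanishing multiplicative error; this is harmless since that error is $o(1)$ while the out-of-support terms are exponentially small, but it needs to be tracked carefully when bounding $\CE_\perp$ from both sides. I would handle it by absorbing the $\W_{k,\Ts}-\Wstar$ discrepancy into the constant $C'$ and into a slightly weakened exponent rate, which only costs a harmless adjustment of $R$ and $k_0$.
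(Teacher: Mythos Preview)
Your plan is correct and follows essentially the same strategy as the paper's proof: a dichotomy on whether the rescaled perpendicular component $\Wt_\perp:=(\|\Wmm\|/\|\W_{k,\perp}\|)\,\W_{k,\perp}$ lies within $\epsilon$ of $\Wmm$, handling the far case via a margin violation extracted from uniqueness of \eqref{eq:SVM}. A few simplifications the paper exploits that you can adopt: (a) since $\W_{k,\perp}\in\Ts^\perp$ by construction, the equality constraints \eqref{eq:Wmm condition equal} are automatically satisfied, so only a Case-(ii)-type margin violation can occur and your Case~(i) is vacuous here; (b) in the close case the paper picks the explicit threshold $\epsilon=\alpha/M$ and compares the out-of-support exponentials \emph{termwise} --- the iterate's exponent is at most $R'(\ellmm_{j,z}-\ellmm_{j,v})+\alpha R'$ by closeness, while the genie's is at least the same quantity by \eqref{eq:Wmm condition larger} --- which is cleaner than sandwiching $\CE_\perp$ between two separate exponential envelopes and sidesteps constant-tracking; (c) the paper never invokes the in-support identity $\sum_{z'\in\Sc_j}e^{\ell_{j,z'}-\ell_{j,z}}=1/\ph_{j,z}$ and needs only the crude bound $\|\W_{k,\Ts}\|\le 2\|\Wstar\|$ (from Lemma~\ref{lem:iterate limit properties}) to control the shared terms $A_{j,z}:=\sum_{v\in\Vc}e^{-(\eb_z-\eb_v)^\top\W_{k,\Ts}\hd_j}$ uniformly, so your secondary subtlety does not actually arise.
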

\begin{proof}
We drop the subscript $k$ to lighten notation.

First, note by Lemma \ref{lem:iterate limit properties}.D that, for arbitrary $R$, we can pick $k_1=k_1(R)$ such that for all $k\geq k_1$: $\|\W_\perp\|\geq R$.

    Thus next, we will prove the main claim, i.e. for large enough $\|\W_\perp\|$ inequality \eqref{eq:main claim} holds. Denote $R'=\frac{\|\W_\perp\|}{\|\Wmm\|}$. Substituting in CE expression \eqref{eq:CE2}, and using the fact that $\Wmm\in\Ts^\perp$ by \eqref{eq:Wmm condition equal} yield:
    \begin{align}\nn
        &\CE\big(\,\W_\Ts+(1+\alpha)R'\Wmm\,\big) \\
        &= \sum_{j\in[m]}\pih_j\sum_{z\in\Sc_j}\ph_{j,z}\log
        \left( \sum_{z'\in\Sc_j}e^{-(\eb_z-\eb_{z'})^\top\W_\Ts\hd_j}+\sum_{v\notin\Sc_j}e^{-(\eb_z-\eb_{v})^\top\W_\Ts\hd_j}+\sum_{v\notin\Sc_j}e^{-(1+\alpha)R'(\eb_z-\eb_{v})^\top\Wmm\hd_j}\right)\,.\nn
        \\
        &= \sum_{j\in[m]}\pih_j\sum_{z\in\Sc_j}\ph_{j,z}\log
        \left( \sum_{v\in\Vc}e^{-(\eb_z-\eb_{v})^\top\W_\Ts\hd_j}+\sum_{v\notin\Sc_j}e^{-(1+\alpha)R'(\eb_z-\eb_{v})^\top\Wmm\hd_j}\right)\,.\label{eq:LHS CE comp}
    \end{align}
    Moreover, decomposing $\W=\W_\Ts+\W_\perp$, and defining
 \[
 \Wt_\perp := \frac{\|\Wmm\|}{\|\W_\perp\|}\W_\perp\ = \frac{1}{R}\W_\perp\,,
 \]
 we have
        \begin{align}
        \CE\big(\W\big) &= \sum_{j\in[m]}\pih_j\sum_{z\in\Sc_j}\ph_{j,z}\log
        \left( \sum_{z'\in\Sc_j}e^{-(\eb_z-\eb_{z'})^\top\W_\Ts\hd_j}+\sum_{v\notin\Sc_j}e^{-(\eb_z-\eb_{v})^\top\W_\Ts\hd_j}+\sum_{v\notin\Sc_j}e^{-R'(\eb_z-\eb_{v})^\top\Wt_\perp\hd_j}\right)\,\nn
        \\
    &=\sum_{j\in[m]}\pih_j\sum_{z\in\Sc_j}\ph_{j,z}\log
        \left( \sum_{v\in\Vc}e^{-(\eb_z-\eb_{v})^\top\W_\Ts\hd_j}+\sum_{v\notin\Sc_j}e^{-R'(\eb_z-\eb_{v})^\top\Wt_\perp\hd_j}\right)\,\,,
        \label{eq:RHS CE comp}
    \end{align}
    where we used that, by definition, $\W_\perp\in\Ts^\perp$. Thus, our goal becomes showing $\eqref{eq:LHS CE comp}\leq\eqref{eq:RHS CE comp}$, for large enough $R$. To do this, we consider two cases as follows below.

    For the remaining of the proof recall  $M:=\max_{j\in[m]}\sqrt{2}\|\hd_j\|$ and use the logits shorthand:
    \[
    \ellt_{j,v} = \eb_v^\top\Wt_\perp\hd_j\qquad\text{and}\qquad\ellmm_{j,v}=\eb_v^\top\Wmm\hd_j\,.
    \]
    
    \vp
    \noindent\underline{\emph{Case 1: $\W_\perp$ is well aligned with $\Wmm$}.} Suppose
    \begin{align}\label{eq:suppose close}
    \|\Wmm - \Wt_\perp\| \leq \eps:=\frac{\alpha}{M}.    
    \end{align}
    Using this, linearity of logits, and Cauchy-Schwartz, yields
        \begin{align}
        \ellt_{j,z}-\ellt_{j,v} \leq \ellmm_{j,z}-\ellmm_{j,v} + \eps M\nn, ~~ \forall j\in[m], z\in\Sc_j, v\notin\Sc_j\,.
    \end{align}
    Thus,
    \begin{align}
        \sum_{v\notin\Sc_j}e^{-R'(\eb_z-\eb_{v})^\top\Wt_\perp\hd_j} \geq e^{-\eps M R'}\,\sum_{v\notin\Sc_j}e^{-R'(\eb_z-\eb_{v})^\top\Wmm\hd_j} = e^{-{\alpha R'}}\,\sum_{v\notin\Sc_j}e^{-R'(\eb_z-\eb_{v})^\top\Wmm\hd_j}\nn
    \end{align}
Also recall by feasibility of $\Wmm$ that 
\begin{align}\label{eq:app lmm property good}
    \ellmm_{j,z}-\ellmm_{j,v}\geq 1, \forall j\in[m], z\in\Sc_j, v\notin\Sc_j\,.
\end{align} Thus,
    \begin{align}
        \sum_{v\notin\Sc_j}e^{-(1+\alpha)R'(\eb_z-\eb_{v})^\top\Wt_\perp\hd_j} \leq e^{-\alpha R'}\,\sum_{v\notin\Sc_j}e^{-R'(\eb_z-\eb_{v})^\top\Wmm\hd_j}\nn
    \end{align}
    Comparing the above two displays yields 
    \[
    \sum_{v\notin\Sc_j}e^{-(1+\alpha)R'(\eb_z-\eb_{v})^\top\Wt_\perp\hd_j}\leq \sum_{v\notin\Sc_j}e^{-R'(\eb_z-\eb_{v})^\top\Wt_\perp\hd_j},
    \]
    which implies the desired \eqref{eq:LHS CE comp}$\leq$\eqref{eq:RHS CE comp} for any value of $R'$ (eqv. $\|\W_\perp\|$).

 \vp
    \noindent\underline{\emph{Case 2: No alignment}.} Suppose
    now that \eqref{eq:suppose close} does not hold. Note that $\|\Wt_\perp\|=\|\Wmm\|$ and since \eqref{eq:SVM} has a unique solution it must be that $\Wt_\perp$ is not feasible. But $\Wt_\perp\in\Ts_\perp$, thus it satisfies the equality constraints. This then means that there exist $\delta:=\delta(\eps)$ and $j_\star\in[m],v_\star\notin\Sc_{j_\star}$ such that
    \begin{align}\label{eq:violate}
        \ellt_{j_\star,z}-\ellt_{j_\star,v_\star} \leq 1-\delta\,,~~\forall z\in\Sc_{j_\star}.
    \end{align}
    (Note the above holds for all $z\in\Sc_{j_\star}$ because $\ellt_{j_\star,z}=\ellt_{j_\star,z'}$ since $\Wt_\perp\in\Ts_\perp.$)

To continue, we introduce the shorthand notation
\[
A_{j,z}:=A_{j,z}(\W)= \sum_{v\in\Vc}e^{-(\eb_z-\eb_{v})^\top\W_\Ts\hd_j}
\]
as well as
\[
A_{\min}:=\min_{j\in[m],z\in\Sc_j} A_{j,z},\qquad\text{and}\qquad
A_{\max}:=\max_{j\in[m],z\in\Sc_j}A_{j,z}\,.
\]
    
    Using \eqref{eq:violate} we may lower bound \eqref{eq:RHS CE comp}  as follows:
\begin{align}
\CE(\W) - \sum_{j\in[m]}\pih_j\sum_{z\in\Sc_j}\ph_{j,z}\log
        \left( \sum_{v\in\Vc}e^{-(\eb_z-\eb_{v})^\top\W_\Ts\hd_j} \right) &\geq \pih_{j_\star}\sum_{z\in\Sc_j}\ph_{j,z}\log
        \left(1+\frac{e^{-R'(\eb_z-\eb_{v_\star})^\top\Wt_\perp\hd_{j_\star}}}{A_{j_\star,z}}
        \right)\nn\\
        &\geq \pih_{j_\star}\sum_{z\in\Sc_j}\ph_{j,z}\log
        \left( 1+\frac{e^{-R'(1-\delta)}}{A_{\max}}\right)\nn
        \\
        &\geq \frac{e^{-R'(1-\delta)}}{n(A_{\max}+1)}
        \label{eq:RHS CE comp 2}\,,
    \end{align}
    where in the last line we used $\pih_{j}\geq 1/n, \forall j\in[m]$ as well as $\log(1+x)\geq \frac{x}{1+x}, x>0$.
    
    On the other hand, using property \eqref{eq:app lmm property good} for max-margin logits,
    we can upper bound \eqref{eq:LHS CE comp} as follows:
    \begin{align}
\CE\big(\,\W_\Ts+(1+\alpha)R'\Wmm\,\big)  - \sum_{j\in[m]}\pih_j\sum_{z\in\Sc_j}\ph_{j,z}\log
        \left( \sum_{v\in\Vc}e^{-(\eb_z-\eb_{v})^\top\W_\Ts\hd_j} \right) &\leq    \log\left(1+\frac{V\,e^{-R'(1+\alpha)}}{A_{\min}}\right)\,\nn
        \\
        &\leq \frac{V\,e^{-R'(1+\alpha)}}{A_{\min}},\label{eq:LHS CE comp 2}
    \end{align}
    where in the last line we used $\log(1+x)\leq x, x>0.$
    
    In view of the two last displays, it suffices that
    \[
    V\,\frac{e^{-R'(1+\alpha)}}{A_{\min}} \leq \frac{e^{-R'(1-\delta)}}{n(A_{\max}+1)} ~~\Longleftrightarrow~~ R'\geq \frac{1}{\delta+\alpha}\log\left(\frac{n V (A_{\max}+1)}{A_{\min}}\right)\,.
    \]
    All it remains is obtaining bounds for $A_{\min}, A_{\max}$ specifically showing that they do not depend on $R$. By Cauchy-Schwartz: 
    \[
    V e^{-M\|\W_\Ts\|}\leq \A_{\min}\leq\A_{\max}\leq V e^{M\|\W_\Ts\|}
    \]
    Further recall by Lemma \ref{lem:iterate limit properties}.C that if $k$ is large enough then
    \begin{align}\label{eq:W-Wstar}
    \|\W_\Ts-\Wstar\|\leq {\|\Wstar\|} \implies \|\W_{\Ts}\| \leq 2\|\Wstar\|.
    \end{align}
    Thus, there exists $k_\star=k_\star(\|\W_\star\|)$ such that for all $k\geq k_\star$:
    \[
    V e^{-2M\|\W_\star\|}\leq \A_{\min}\leq\A_{\max}\leq V e^{2M\|\W_\star\|}.
    \]
    Hence, the desired \eqref{eq:LHS CE comp 2}$\leq$\eqref{eq:RHS CE comp 2} holds provided
    \begin{align}
    \|\W_\perp\|\geq \frac{\|\Wmm\|}{\alpha}\log\left(2nVe^{4\|\Wstar\|}\right)\,.\label{eq:wperplb}
    \end{align}
    
    Set $R=R(\alpha)=\{\text{RHS of \eqref{eq:wperplb}}\}$ and 
    $k_0(R):=\max\{k_1(R),k_\star\}$. We have shown this guarantees for all $k\geq k_0$: $\|\W_\perp\|\geq R$ and by choice of $R$ also  \eqref{eq:LHS CE comp 2}$\leq$\eqref{eq:RHS CE comp 2}. This in turn implies \eqref{eq:LHS CE comp}$\leq$\eqref{eq:RHS CE comp}, as desired to complete the proof.
\end{proof}

\subsubsection{Proof of Theorem \ref{thm:GD main}}

For the subspace component, see Lemma \ref{lem:iterate limit properties}.C. For the directional convergence, 
the key ingredient of the proof is Lemma \ref{lem:key}. After that, the proof follows identically to Thm. 15(2) in \cite{ji2020gradient}. We include the details for completeness, but there are no novel aspects in the rest of this section.

Let any $\eps\in(0,1)$ and choose $\alpha=\eps/(1-\eps)$. By Lemma \ref{lem:key}, there exists $k_0$ such that for any $k\geq k_0$, we have
\[
\|\W_{k,\perp}\| \geq \max\{R(\alpha),1/2\}
\]
and
\begin{align}
\inp{\nabla\CE(\W_k)}{\W_{k,\perp}-(1+\alpha)\|\W_{k,\perp}\|\overline\Wmm} &=     \inp{\nabla\CE(\W_k)}{\W_{k}-(\W_{k,\Ts}+(1+\alpha)\|\W_{k,\perp}\|\overline\Wmm)}\nn\\
&\geq 
\CE(\W_{k})-\CE(\W_{k,\Ts}+(1+\alpha)\|\W_{k,\perp}\|\overline\Wmm) \geq 0\,,\nn
\end{align}
where we also used convexity of the loss.

Consequently,
\begin{align*}
    \inp{\W_{k+1}-\W_k}{\overline\Wmm}&=\inp{-\eta\nabla\CE(\W_k)}{\overline\Wmm}
    \\
    &\geq (1-\eps)\inp{-\eta\nabla\CE(\W_k)}{\overline{\W_{k,\perp}}}
    \\
    &\geq (1-\eps)\inp{\W_{k+1,\perp}-\W_{k,\perp}}{\overline{\W_{k,\perp}}}
    \\
    &\geq (1-\eps)\inp{\W_{k+1,\perp}-\W_{k,\perp}}{\overline{\W_{k,\perp}}}
    \\
    &= \frac{(1-\eps)}{2\|\W_{k,\perp}\|}\left(\|\W_{k+1,\perp}\|^2-\|\W_{k,\perp}\|^2 - \|\W_{k+1,\perp}-\W_{k,\perp}\|^2\right)
    \\
    &\geq (1-\eps)\left(\|\W_{k+1,\perp}\|-\|\W_{k,\perp}\| - 2\eta(\CE(\W_{k,\perp})-\CE(\W_{k+1,\perp})\right),
\end{align*}
where the last step used $\|\W_{k,\perp}\|\geq 1/2$, the fact that $x^2-y^2\geq 2y(x-y), \forall x,y$ and smoothness of the CE loss.

Telescoping the above expression and rearranging yields
\begin{align*}
    \inp{\overline\W_k}{\overline{\Wmm}} &\geq (1-\eps) \frac{\|\W_{k,\perp}\|}{\|\W_k\|}-\frac{\inp{\W_{k_0}}{\overline\Wmm}-(1-\eps)\|\w_{k_0,\perp}\|-\eta\CE(\W_{k_0})}{\|\W_k\|}
    \\
    &\geq (1-\eps) - \frac{\|\W_{k,\Ts}\|_2+\inp{\W_{k_0}}{\overline\Wmm}-(1-\eps)\|\w_{k_0,\perp}\|-\eta\CE(\W_{k_0})}{\|\W_k\|}
\end{align*}
Now recall from Lemma \ref{lem:iterate limit properties} that $\lim_{k\rightarrow\infty}\|\W_k\|=\infty$ and $\lim_{k\rightarrow\infty}\|\W_{k,\Ts}\|=\|\Wstar\|$. Thus, \\
$\lim\inf_{k\rightarrow\infty}\inp{\overline\W_k}{\overline{\Wmm}}\geq 1-\eps.$ Since $\eps$ is arbitrary, the desired follows.

\subsection{Regularization Path}
\label{app:proof reg-path}

We provide a detailed proof of Theorem \ref{thm:reg-path} filling in missing details from the proof sketch in the main paper.

\subsubsection{Proof of Theorem \ref{thm:reg-path}}
First, we show that $\What_B$ is on the boundary, i.e. $\|\What_B\|=B$. Suppose not, then $\inp{\nabla\CE(\What_B)}{\Ub}=0$ for all $\Ub\in\R^{V\times d}$. 
Using the CE expression in \eqref{eq:CE2} and a few algebraic manipulations, yields
\begin{align}
    \hspace{-0.1in}\inp{-\nabla\CE(\What_B)}{\Ub}  \label{eq:grad}
=\sum_{j\in[m]}\pih_j\sum_{z\in\Sc_j}\ph_{j,z}\Big(\sum_{\stackrel{z'\in\Sc_j}{z'\neq z}}s_{j,z'}\,(\eb_z-\eb_{z'})^\top\Ub\hd_j+\sum_{v\notin\Sc_j}s_{j,v}\,(\eb_z-\eb_v)^\top\Ub\hd_j\Big),
\end{align}
where we denote the output probabilities at $\What_B$ as $s_{j,v}:=\sfti{v}{\What_B\hd_j}, v\in\Vc, j\in[m].$
 Choose $\Ub=\Wmm$ in \eqref{eq:grad}. Then, the first term in the parenthesis in \eqref{eq:grad} is zero by \eqref{eq:Wmm condition equal}, while the second term is strictly positive by \eqref{eq:Wmm condition larger} and strict positivity of softmax entries, leading to contradiction. 

Now, consider point $\Wstar_B=\Wstar+R(B)\cdot\Wmm$, where, $\Wstar\in\Ts$ satisfies \eqref{eq:Wf condition}, and $R=R(B)$ is chosen such that $\|\Wstar_B\|=B$. Concretely, for $B>\|\Wstar\|$, set 
\[R=\frac{1}{\|\Wmm\|}\sqrt{B^2-\|\Wstar\|^2}.\] Note also  that $R/B\rightarrow1/\|\Wmm\|$ as  $B\rightarrow\infty$. We will show that $\Wstar_B$ attains a small CE loss as $B$ (hence, $R$) grows. To do this, denote for convenience the logits for all $v\in\Vc, j\in[m]:$ 
\[
\ellstar_{j,v}:=\eb_v^\top\Wstar\hd_j\quad\text{and}\quad\ellmm_{j,v}:=\eb_v^\top\Wmm\hd_j\,,
\]
and note that $\eb_v^\top\Wstar_B\hd_j=\ellstar_{j,v}+R\,\ellmm_{j,v}.$ 
By using \eqref{eq:Wf condition} and \eqref{eq:Wmm condition equal}:
\[
\sum_{z'\in\Sc_j}e^{-(\ellstar_{j,z}+R\ellmm_{j,z}-\ellstar_{j,z'}-R\ellmm_{j,z'})} = \frac{1}{\ph_j}.
\]
Moreover, using \eqref{eq:Wmm condition larger}
\begin{align*}
    \sum_{v\notin\Sc_j}e^{-(\ellstar_{j,z}+R\ellmm_{j,z}-\ellstar_{j,v}-R\ellmm_{j,v})} \leq e^{-R} \sum_{v\notin\Sc_j}e^{-(\ellstar_{j,z}-\ellstar_{j,v})} \leq C\,e^{-R},
\end{align*}
where we define constant (independent of $R$)  $C:=Ve^{\|\Wstar\|M}$, for $M:=\sqrt{2}\cdot\max_{j/\in[m]}\|\hd_j\|$.

Combining the above displays and using in Eq. \eqref{eq:CE2}, yields
\begin{align}
    \CE(\Wstar_B) &\leq \sum_{j\in[m]}\pih_j\sum_{z\in\Sc_j}\ph_{j,z}\log\Big(\frac{1}{\ph_{j,z}}+C\,e^{-R}\Big)\,\nn
    \leq\sum_{j\in[m]}\pih_j\sum_{z\in\Sc_j}\ph_{j,z}\Big(\log\big(\frac{1}{\ph_{j,z}}\big)+\ph_{j,z}C\,e^{-R}\Big)\nn
    \\
    &\leq\Hc+C\,e^{-R}\,,\label{eq:reg path up}
\end{align}
where, the second line uses $\log(1+x)\leq x, x>0$, and the third line uses $\pih_j,\ph_{j,z}$ are probabilities. 

Next, towards arriving at a contradiction, 
we will show that if $\What_B$ is not in the direction of $\Wmm$, then it incurs a loss that is larger than $\CE(\Wstar_B)$. Concretely, assuming the statement of the theorem is not true, we
we will upper bound 
\begin{align}\label{eq:towards Wb}
\CE(\What_B)-\Hc = \sum_{j\in[m]}\pih_j\sum_{z\in\Sc_j}\ph_{j,z}\log\Big(\frac{\ph_{j,z}}{\sfti{z}{\What_B\hd_j}}\Big).
\end{align}
By our assumption, there exists $\eps>0$, such that there exists arbitrarily large $B$ satisfying:
\begin{align}\label{eq:contra eps}
\Big\|\frac{\|\Wmm\|}{B}\What_B-\Wmm\Big\|>\eps.
\end{align}
Define
\[
\What=\frac{1}{R'(B)}\big(\What_B-\Wstar\big),
\]
where, $R'=R'(B)>0$ is chosen so that $\|\What\|=\|\Wmm\|$. Concretely, for large enough $B\geq 2\|\Wstar\|$, set 
\[
R'=\frac{1}{\|\Wmm\|}\sqrt{{B^2}-2B\inp{\overline{\W_B}}{\Wstar}+\|\Wstar\|^2}\,.
\]
Note that it holds  $\lim_{B\rightarrow\infty}R'/B=1/\|\Wmm\|.$ Thus, we can always choose $B$ large enough so that Eq. \eqref{eq:contra eps} guarantees
$
\|\What-\Wmm\|\geq \eps',
$
for some $\eps'>0$.
Since $\Wmm$ is the unique minimizer of \eqref{eq:SVM} and $\|\What\|=\|\Wmm\|$, it follows that there exists $\delta\in(0,1)$ and $j\in[m]$ such that at least one of the following is true

(i) $\exists z$ and $z'\neq z\in\Sc_{j}$ such that 
\begin{align}
    \label{eq:contra 1}
    |(\eb_z-\eb_{z'})^\top\What\hd_j|\geq \delta\,,
\end{align}

(ii) $\exists z\in\Sc_{j}, v\notin\Sc_{j}$ such that 
\begin{align}\label{eq:contra 2}
(\eb_z-\eb_{v})^\top\What\hd_j\leq 1-\delta.    
\end{align}

\noindent\emph{\underline{Case (i):}}~ Without loss of generality $(\eb_z-\eb_{z'})^\top\What\hd_j\leq -\delta$ (otherwise, flip $z,z'$). Thus, ignoring all but one term in \eqref{eq:towards Wb} gives
\begin{align}\label{eq:towards Wb 2}
\CE(\What_B)-\Hc &\geq \pih_j\ph_{j,z}\log\Big(\frac{\ph_{j,z}}{\sfti{z}{\What_B\hd_j}}\Big)\geq\pih_j\ph_{j,z}\log\Big({\ph_{j,z}}e^{(\ell_{j,z'}-\ell_{j,z})}\Big),
\end{align}
where we use $\ell_{j,v}=\eb_v^\top\What_B\hd_j, v\in\Vc$ to denote logits of $\What_B$. Using \eqref{eq:Wf condition} and \eqref{eq:contra 1}, yields
\begin{align}
\ell_{j,z'}-\ell_{j,z} &= (\eb_{z'}-\eb_z)^\top\big({R'}\,\What+\Wstar\big)\,\hd_j    
\geq {R'}\delta+\log\left(\frac{\ph_{j,z'}}{\ph_{j,z}}\right) \,.\nn
\end{align}
Put in \eqref{eq:towards Wb} and using $ \ph_{j,z}\geq\pih_j\ph_{j,z} \geq 1/n$ shows 
\begin{align}\nn
    \CE(\What_B) &\geq \Hc + \frac{1}{n}\log\Big(\frac{e^{R'\delta}}{n}\,\Big)
\end{align}
Compare this with \eqref{eq:reg path up}. For large enough $B$, it is clear that $\pih_j\ph_{j,z}\log\Big(\ph_{j,z}\,c\,e^{R'\delta}\Big)> C e^{-R}$. Thus, $\CE(\What_B)>\CE(\Wstar_B)$, a contradiction.

\noindent\emph{\underline{Case (ii):}}~We can assume $\What\in\Ts_\perp$,
since otherwise we are in Case (i). Now, again ignoring all but the $(j,z)$ term in the CE loss for which \eqref{eq:contra 2} holds for some $v\notin\Sc_j$, we find
\begin{align}\label{eq:towards Wb 3}
&\CE(\What_B)-\Hc\geq \pih_j\ph_{j,z}\log\Big({\ph_{j,z}}\big(\sum_{z'\in\Sc_j}e^{(\ell_{j,z'}-\ell_{j,z})}+e^{(\ell_{j,v}-\ell_{j,z})}\big)\Big)\nn.
\end{align}
Using $\Pc_\Tc(\What_B)=\Wstar$ yields
\[
\sum_{z'\in\Sc_j}e^{(\ell_{j,z'}-\ell_{j,z})} = \sum_{z'\in\Sc_j}\frac{\ph_{j,z'}}{\ph_{j,z}} = \frac{1}{\ph_{j,z}}\,.
\]
Moreover, by \eqref{eq:contra 2}:
\[
e^{\ell_{j,v}-\ell_{j,z}}\geq e^{-R'(1-\delta)} \,e^{\ellstar_{j,v}-\ellstar_{j,z}} \geq c' e^{-R'(1-\delta)}, 
\]
for constant  (independent of $B$) $c':=e^{-\|\Wstar\|M}$.
Putting the above together yield:
\begin{align}\nn
    \CE(\What_B) - \Hc &\geq \pih_j\ph_{j,z}\log\Big(1+\ph_{j,z}c'e^{-R'(1-\delta)}\Big) \geq \frac{c'e^{-R'(1-\delta)}}{2n^2}\,.
\end{align}
where the second inequality uses $\log(1+x)\geq \frac{x}{1+x}, x>0$. 

Compare this with \eqref{eq:reg path up}. For large enough $B$, (recall $R,R'$ grow at the same rate) it holds $\frac{c'}{2n^2}e^{-R'(1-\delta)}> C e^{-R}$. Thus, $\CE(\What_B)>\CE(\Wstar_B)$, a contradiction.

In either case, we arrive at a contradiction, which completes the proof.

\end{document}